\documentclass{article} 
\usepackage{iclr2027_conference,times}

\usepackage{amsmath,amsfonts,bm}

\def\eqref#1{equation~\ref{#1}}

\def\1{\bm{1}}

\DeclareMathAlphabet{\mathsfit}{\encodingdefault}{\sfdefault}{m}{sl}
\SetMathAlphabet{\mathsfit}{bold}{\encodingdefault}{\sfdefault}{bx}{n}

\newcommand{\E}{\mathbb{E}}

\usepackage{url,longtable}

\usepackage{natbib}
\usepackage{caption}

\usepackage{algorithm}
\usepackage{algorithmic}
\usepackage{amsmath,amssymb,amsthm,mathtools,booktabs,array,xcolor,colortbl}
\usepackage{times,amsmath,graphicx}

\definecolor{internalrefcolor}{RGB}{160,10,20}
\definecolor{citationrefcolor}{RGB}{0,105,92}
\usepackage{hyperref}
\hypersetup{
  colorlinks=true,
  linkcolor=internalrefcolor,
  citecolor=citationrefcolor,
  urlcolor=citationrefcolor
}

\newtheorem{theorem}{Theorem}
\newtheorem{proposition}{Proposition}
\newtheorem{lemma}{Lemma}
\newtheorem{corollary}{Corollary}

\theoremstyle{definition}

\newcommand{\cvar}{\operatorname{CVaR}}
\newcommand{\var}{\operatorname{Var}}
\newcommand{\Prob}{\mathbb P}
\newcommand{\calZ}{\mathcal Z}
\newcommand{\calB}{\mathcal B}
\newcommand{\calG}{\mathcal G}
\newcommand{\PiC}{\Pi_{\!C}}
\newcommand{\ind}{\mathbf 1}
\newcommand{\tis}{\textnormal{\textsc{TIS}}}
\newcommand{\rev}[1]{{\color{black}#1}}

\definecolor{newrevisioncolor}{RGB}{0,110,60}
\newcommand{\newrev}[1]{#1}

\definecolor{tableheader}{RGB}{218,233,248}
\definecolor{tableprimary}{RGB}{232,243,252}
\definecolor{tablereference}{RGB}{244,248,253}
\newcommand{\best}[1]{\textbf{#1}}

\newif\ifshowrevisions
\showrevisionsfalse
\newcommand{\revcolor}[1]{\ifshowrevisions\color{#1}\fi}
\newcommand{\purplerev}[1]{{\revcolor{purple}#1}}
\newcommand{\redrev}[1]{{\revcolor{red}#1}}
\newcommand{\tableheadrow}{\rowcolor{tableheader}}
\newcommand{\primaryrow}{\rowcolor{tableprimary}}
\newcommand{\referencerow}{\rowcolor{tablereference}}

\newsavebox{\keyresultbox}

\title{Tail-Influence Sampling for CVaR Policy Evaluation}

\author{Pauline Bourigault \\
Imperial College London \thanks{Code available at \href{https://github.com/paulinebourigault/TIS}{https://github.com/paulinebourigault/TIS}}
\And Xiaotong Ji \\
Huawei Noah's Ark Lab
\And Matthieu Zimmer \\
Huawei Noah's Ark Lab\\
\And 
Rasul Tutunov \\
Huawei Noah's Ark Lab
\And Haitham Bou-Ammar 
\\
UCL Centre for AI
}

\iclrfinalcopy 
\begin{document}

\maketitle
\addtocontents{toc}{\protect\setcounter{tocdepth}{-1}}

\begin{abstract}
Policies with similar mean returns can differ sharply in rare failures, yet estimating lower-tail conditional value-at-risk (CVaR) accurately can require many costly rollouts. When different conditional components of a stochastic workflow can be queried separately, we ask how to allocate a fixed evaluation budget to estimate a fixed policy’s CVaR most accurately. We derive a tail influence for each queryable conditional law that aggregates how its uncertainty affects CVaR across every Bellman reuse. Its variance yields the fixed-design efficiency bound and the oracle Neyman allocation. Tail-Influence Sampling (TIS) estimates these influence scales from a pilot model and reallocates fresh queries toward kernels that matter most for the tail; a visitation-anchored variant protects against pilot underallocation. Under fixed dimension and a positive quantile margin, TIS attains oracle asymptotic variance and first-order MSE including pilot cost, while the anchored variant is within a factor two of the oracle. We also characterize an exact-grid regime in which tail- and mean-optimal allocations coincide. On CliffWalking, TIS reduces MSE by 41\% versus learned occupancy and 76\% versus complete rollouts at the same charged transition budget. In frozen language-model review workflows, anchored TIS beats an equally regularized mean-influence blend in 23 of 24 MMLU-Pro settings and reaches $2.4 - 3.4\times$ lower MSE than rollouts on six-call FinQA reviews.\end{abstract}

\section{Introduction}
Average performance can conceal the failures that matter most. A language-model workflow may answer correctly on most runs yet occasionally produce a severe numerical error. Similarly, a robot may complete a task reliably yet rarely enter a state that causes damage. Lower-tail conditional value-at-risk (CVaR) measures average return in a specified worst fraction of runs. Estimating this tail average accurately by Monte Carlo can require many complete runs. \purplerev{Such estimates decide which policy or workflow is safe to deploy. For language-model workflows each run costs several model calls, so the evaluation budget, not the model, often limits how reliably rare severe failures can be measured.} In resettable simulators and some generative workflows, however, we can directly sample what happens next from a chosen state and action. This gives the evaluator control over where to collect information. Given a limited evaluation budget, which state--action pairs should we query to estimate a fixed policy's CVaR most accurately?

Fixing the policy does not tell us the transition and reward probabilities, which must be learned from queries. Sampling every state--action pair equally ignores their different roles in producing failure, while sampling according to visitation concentrates the budget on frequently encountered pairs. Neither strategy necessarily identifies where uncertainty about the tail originates. A frequently visited state may have predictable outcomes, while uncertainty at a rarer state can change the estimated severity of failures. \purplerev{This is the central difficulty in evaluating rare failures of language-model agents: runs seldom reach the states that produce the worst outcomes, such as a step at which a tool returned a wrong value, so complete rollouts spend almost all of their budget elsewhere.} Effective allocation must account for both outcome variability and its effect on the final tail estimate.

Our answer is a computable notion of tail influence: how uncertainty in a conditional law, or kernel, affects uncertainty in the final CVaR estimate. A kernel receives a high score when its possible outcomes vary substantially and those differences meaningfully change the estimated severity of the worst runs. \redrev{Its CVaR effect depends on the continuation model and tail cutoff, and the same kernel can be reused at multiple stages.} For example, the same review prompt may recur in a language-model workflow, or a control policy may revisit a state. One observation of this shared behaviour therefore provides information about several parts of the return calculation simultaneously. We combine these effects before measuring their variability to obtain one allocation score per kernel.

\redrev{Tail-Influence Sampling (\tis{}) uses a uniform pilot to fit the conditional laws, computes influence scales in the fitted model, and allocates fresh queries in proportion to those scales, with a uniform exploration share.}
A short pilot can nevertheless miss a rare consequential outcome, causing \tis{} to underestimate a kernel's importance and give it too few additional queries. Such an error can dominate the final CVaR estimate because the neglected kernel may be precisely where the worst outcomes originate. Anchored \tis{} averages the influence-based allocation with visitation-based shares, mitigating this failure when visitation remains informative. \redrev{This follows the established defensive-mixture principle \citep{hesterberg1995weighted,owen2000safe}: combine a targeted design with a broader reference design.} \redrev{Under the same conditions, the anchor's asymptotic MSE is at most twice the oracle's.}

\purplerev{A tail-specific score is not always needed. \redrev{On an exact categorical grid, if the probability of a submaximal return is below the tail level, the worst runs contain every submaximal return plus some maximal returns;} CVaR then moves exactly with the mean return, and tail influence becomes proportional to mean influence. The divergence between the two scores indicates where tail-specific allocation can pay off: estimated from the pilot, it tells an evaluator whether to allocate by tail influence with the defensive anchor or whether mean-based allocation suffices. The payoff is practical: at matched accuracy, \tis{} needed $2$--$4$ times fewer queries than complete rollouts on our tabular benchmarks, and the anchor $2$--$10$ times fewer than uniform sampling on held-out FinQA.}

Prior distributional-RL inference and efficiency results analyze estimation under specified sampling laws \citep{zhang2025inference,cheng2026quantile}, adaptive stratification learns allocations for fixed within-stratum quantities \citep{etore2010adaptive,carpentier2015adaptive}, and trajectory designs such as ReVar target mean policy evaluation \citep{mukherjee2022revar}. Here the allocation score itself depends on an unknown Bellman continuation model and CVaR cutoff; we derive it and show that learning it together with the allocation recovers oracle first-order MSE, including pilot cost.

In short, our contributions can be stated as follows: 
\redrev{\emph{(i) A CVaR-specific allocation signal.} We derive each shared kernel's influence through its Bellman uses. Its variance gives a fixed-design efficiency bound and the scales for classical Neyman allocation \citep{neyman1934two}.} \redrev{\emph{(ii) Learning the score and allocation.}} Under fixed dimension, a positive quantile margin, and suitable pilot and exploration schedules, \tis{} \redrev{learns the continuation model, cutoff, and influence scales while attaining} oracle asymptotic variance and first-order mean-squared error (MSE), including pilot cost. \redrev{We also characterize the anchor's asymptotic MSE under the same conditions,} \purplerev{within a factor two of the oracle. \redrev{\emph{(iii) When tail specificity matters.} Models with identical visitation, reward moments, and return laws can need different allocations; on exact grids in a rare-failure regime, tail- and mean-optimal allocations coincide, and their divergence serves as a diagnostic.}} \emph{(\purplerev{iv}) Experiments.} CliffWalking \purplerev{and an 18-case inventory disruption family} show gains over learned occupancy, learned mean influence, and rollouts at matched query budgets. On language-model workflows with frozen laws, gains over occupancy and rollouts depend on the generator, budget, and workflow length. \purplerev{Blending controls show that the tail score, not added regularization, drives the anchor's gains; the pilot-estimated divergence selects between tail- and mean-based allocation; and in longer review loops the anchor beats complete rollouts at matched cost.}

\section{Problem and Estimator}
\label{sec:problem}

We evaluate a fixed policy in a finite-horizon Markov decision process (MDP) with stationary dynamics. The state and action spaces $\mathcal S,\mathcal A$ are finite, $H$ is the horizon, and $s_0$ is the initial state. With $h=H-t$ steps remaining, the policy selects $A_t\sim\pi_h(\cdot\mid S_t)$ and the environment draws $(R_t,S_{t+1})\sim P_{S_t,A_t}$, with $R_t\in[0,1]$. One transition takes $(h,s)$ to $(h-1,s^\prime)$; $h=0$ ends the run. The policy may depend on $h$, while $P_{s,a}$ does not. Starting at $S_0=s_0$, total return is $G_H(s_0)=\sum_{t=0}^{H-1}R_t\in[0,H]$.
For a chosen level $\alpha\in(0,1)$, our goal is to estimate $\cvar_\alpha(G_H(s_0))$: the average return in the worst $\alpha$-fraction of runs, taking only the required fraction of probability mass at the cutoff. Smaller $\alpha$ emphasizes rarer outcomes. Appendix~\ref{sec:supp-projection} gives the formal definition.

\textbf{What can be sampled?} A query group $g=(s,a)\in\calG$ is one independently queryable conditional law, or kernel. One query returns $W_g=(R,S')\sim P_g$; reward and next state may be dependent. Queries are i.i.d. within a group and independent across groups. For example, an evaluator can reconstruct a review prompt from a specified answer and confidence, then sample its response without reaching that state by rollout. We retain a fixed set of states and $G=|\calG|$ groups, closed under all possible continuations. Allocating $n_g$ queries to each group costs $N=\sum_g n_g$; a rollout costs one query per transition.
The same fitted law $\widehat P_g$ is used at every remaining horizon where that group occurs. Thus $g=(s,a)$ identifies what we sample, while $(h,s)$ identifies where we evaluate its consequences. The same next state can have different continuation returns with one or three steps left; these differences matter for the allocation score.
\begin{figure}[t]
\centering
\includegraphics[width=0.7\linewidth]{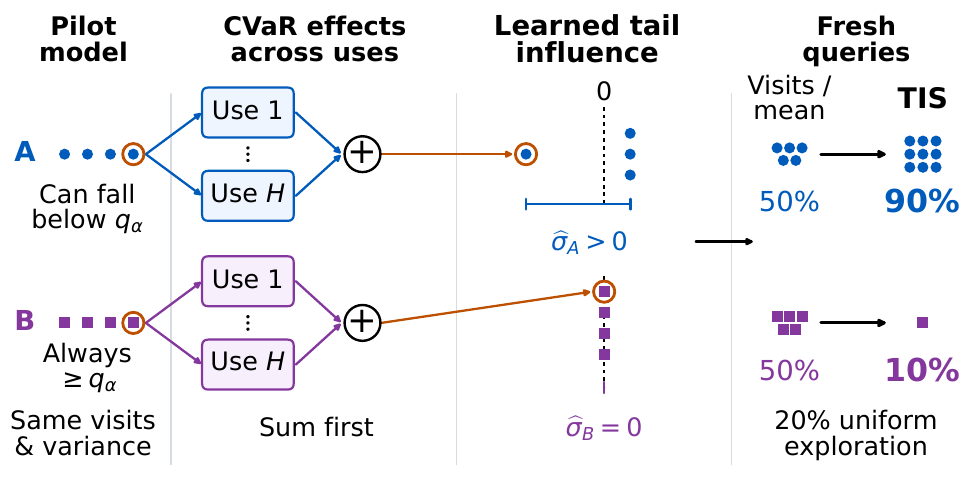}
\caption{\redrev{\textbf{Equal visitation and reward variability can conceal different information about CVaR.} The same kernel can be reused across many stages (Use 1, $\ldots$, Use $H$). A and B are visited equally and have identical reward means and variances, but only A's returns can fall below the CVaR cutoff $q_\alpha$. \tis{} sums each pilot draw's first-order CVaR effects across all uses, then measures their spread $\widehat\sigma_g$ across draws; orange rings track one draw. With 20\% uniform exploration, the illustrated scores give nominal fresh-query shares of 90\%/10\% for A/B, versus 50\%/50\% for visitation or mean influence (before minimum counts and rounding).}}
\label{fig:tis-setup}
\end{figure}

\textbf{How do samples give a CVaR estimate?} We first estimate the return distribution from each state with $h$ steps left. Following categorical distributional RL \citep{bellemare2017distributional}, we store probabilities on a fixed return grid $\calZ=\{0=z_0<\cdots<z_{K-1}=H\}$ with maximum gap $\Delta$. The vector $p^*_{h,s}$ contains these probabilities: $p^*_{h,s,k}$ is the mass at return $z_k$. The indices mean remaining steps ($h$), current state ($s$), and return value ($k$).

A Bellman update combines the immediate reward with the distribution of future returns. The matrix $Q(R)$ adds $R$ to each grid value and projects the result onto the grid, splitting mass between neighboring points and clipping outside the endpoints \citep{bellemare2017distributional,rowland2018analysis}. Averaging over actions and next transitions gives
\begin{equation}
 p^*_{h,s}=\sum_a\pi_h(a\mid s)\,
       \E_{(R,S')\sim P_{s,a}}\big[Q(R)p^*_{h-1,S'}\big],
 \qquad p^*_{0,s}=e_0,
 \label{eq:cat-dp}
\end{equation}
where $e_0$ puts all mass at zero. Replace each expectation by a group sample average and evaluate successively for $h=1,\ldots,H$. CVaR of the estimated root probabilities is $\widehat C_N$; $C_{\alpha,K}$ uses the population probabilities (Appendix~\ref{sec:supp-notation}).
Allocation controls sampling error around $C_{\alpha,K}$. The grid introduces a separate approximation error of at most $H\Delta$ relative to true-return CVaR (Appendix~\ref{sec:supp-representation}). All limits keep the horizon, state and action sets, grid, tail level, and retained groups fixed. Stars denote population quantities, hats estimates, and $(0)$ pilot estimates.

\section{From One Observation to an Optimal Allocation}
\label{sec:shared_kernel_tail_influence}

More queries to a group help only if uncertainty in that group affects the final CVaR estimate. We derive that effect in three steps: write CVaR using an expected shortfall, trace a transition error to that shortfall, then allocate samples according to the variability of the resulting effects.

\textbf{Read CVaR as a shortfall.}
For a grid threshold $q\in\calZ$, the shortfall $(q-Z)_+$ is zero above $q$ and measures the distance below it. Let $U_h^*(s,q)=\E[(q-Z_h(s))_+]$, where $Z_h(s)$ has the grid probabilities $p^*_{h,s}$. At the root $\alpha$-quantile $q_\alpha$, the standard shortfall identity gives
\begin{equation}
 C_{\alpha,K}=q_\alpha-\frac{U_H^*(s_0,q_\alpha)}{\alpha}.
 \label{eq:main-cvar-shortfall}
\end{equation}
For probabilities $(.04,.08,.88)$ at returns $(0,.5,1)$, the worst $10\%$ contains all zero returns and enough $.5$ returns to average $.3$. Here $q_\alpha=.5$ and the mean shortfall is $.04\times.5=.02$; the formula gives $.5-.02/.1=.3$.
We assume a positive quantile margin: the tail cutoff lies strictly inside the probability mass at one grid value. In the example, $.04<.1<.12$, so small probability errors leave $q_\alpha=.5$ unchanged. With that threshold unchanged, CVaR error is shortfall error scaled by $-1/\alpha$. This lets us focus on one expected shortfall. Appendix~\ref{sec:supp-projection} defines the margin and explains why a zero margin can invalidate the Gaussian limit.

\textbf{Trace a transition error to CVaR.}
After observing reward $R$, the remaining shortfall threshold is $q-R$. The sampled target $T_h(W;U)$ therefore uses $U_{h-1}(S',q-R)$, interpolating between grid thresholds and taking zero at nonpositive thresholds; at $h=1$ it is $(q-R)_+$. Categorical projection preserves these shortfalls at grid thresholds (Lemma~\ref{lem:projection}), so this is the same return calculation in more convenient coordinates.

Write all $(h,s,q)$ shortfalls as the vector $U^*$. To learn how errors in them affect the initial state's shortfall, compute sensitivity weights $r$, also called adjoint weights. The two Bellman passes can be written as
\begin{equation}
 U^*=\mathbf b+MU^*,\qquad (I-M)^\top r=e_{x_0},
 \quad x_0=(H,s_0,q_\alpha).
 \label{eq:affine-system}
\end{equation}
Here $\mathbf b$ contains terminal contributions, $M$ carries continuation weights, and $e_{x_0}$ selects the root shortfall. The first equation computes shortfalls; the second assigns a weight to each coordinate according to its effect on the root. Both are computed by passes through the layers, without a dense matrix inverse.

For one observation $W$ from group $g$, let $\mathcal T_g(W;U^*)$ collect its updates wherever that group is used, including policy weights. Subtracting the expected updates gives the one-draw error $\Xi_g(W)=\mathcal T_g(W;U^*)-\E\mathcal T_g(W;U^*)$. Weighting by $r$ translates this error to the root, and $-1/\alpha$ converts it to CVaR error:
\begin{equation}
 \phi_g(W)=-\frac1\alpha r^\top\Xi_g(W),\qquad
 \sigma_g^2=\E \big[\phi_g(W)^2 \big].
 \label{eq:block-sigma}
\end{equation}
Thus $\phi_g$ is one draw's first-order effect on CVaR, and $\sigma_g$ measures how much that effect varies across draws. A shared kernel's observation affects several stages together. We sum those effects before taking their variance, retaining the cross-stage covariance of the same draw (Equation~\ref{eq:shared-influence-explicit}).

\redrev{Figure~\ref{fig:tis-setup} illustrates this calculation; Appendix~\ref{sec:supp-projection} gives a numerical example and its covariance calculation.}

\textbf{Allocate where more samples reduce error.}
Averaging $n_g$ independent observations reduces group $g$'s leading variance contribution to $\sigma_g^2/n_g$. Contributions from independently sampled groups add. The next theorem formalizes the resulting prediction: CVaR MSE is approximately $\sum_g\sigma_g^2/n_g=V(w)/N$, where $w_g$ is the group's budget share.

\begin{theorem}[Allocation-dependent limit]
\label{thm:allocation-clt}
Under the fixed-dimensional model and positive margin above, let the counts be deterministic with $n_g/N\to w_g>0$. Then
\begin{equation}
 \sqrt N \Big(\widehat C_N-C_{\alpha,K}\Big)\Rightarrow\mathcal N \big(0,V(w) \big),
 \qquad N\E \Big[(\widehat C_N-C_{\alpha,K})^2 \Big]\to V(w),
 \quad V(w)=\sum_g\frac{\sigma_g^2}{w_g}.
 \label{eq:allocation-clt}
\end{equation}
\end{theorem}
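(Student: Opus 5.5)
The plan is a delta method on the finite-dimensional map from group sample means to the root CVaR, followed by a bounded-remainder argument to upgrade convergence in law to convergence of the scaled MSE.

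\textbf{Step 1: represent the estimator as a smooth function of group means.} For each group $g$, collect the vector of sampled Bellman targets $\mathcal T_g(W;\cdot)$ into the empirical average $\bar{\mathcal T}_g=n_g^{-1}\sum_{i}\mathcal T_g(W_{g,i};\cdot)$. Because the grid, horizon and state set are fixed, the plug-in recursion is a polynomial in the empirical means of finitely many bounded quantities. These quantities are the empirical transition weights from each $g$ onto next states, together with the projected reward matrices $Q(R)$. So $\widehat p_{H,s_0}$ is a polynomial map $F$ of the stacked group means $\bar Y=(\bar Y_g)_g$, and the population $p^*_{H,s_0}=F(\E Y)$. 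The empirical means are bounded, since rewards lie in $[0,1]$ and probabilities in $[0,1]$. By independence across groups and the multivariate CLT within groups, $\sqrt N(\bar Y_g-\E Y_g)\Rightarrow\mathcal N(0,\Sigma_g/w_g)$ jointly and independently, using $n_g/N\to w_g>0$.

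\textbf{Step 2: use the margin to freeze the quantile.} Under the positive quantile margin, the root $\alpha$-quantile is a locally constant function of $p_{H,s_0}$ in a neighbourhood of $p^*_{H,s_0}$. On that neighbourhood, CVaR equals $q_\alpha-U_H(s_0,q_\alpha)/\alpha$ by \eqref{eq:main-cvar-shortfall}, which is linear in $p_{H,s_0}$. The estimate stays in this neighbourhood with probability $1-O(e^{-cN})$, by Hoeffding applied to each group mean and continuity of $F$. Off this event the error is bounded by $H$, which is negligible both in law and in $N\E[\cdot]$.

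\textbf{Step 3: identify the derivative through the adjoint.} Working in shortfall coordinates (Lemma~\ref{lem:projection}), the plug-in shortfalls solve $\widehat U=\widehat{\mathbf b}+\widehat M\widehat U$. Subtracting \eqref{eq:affine-system} gives
\[
(I-M)(\widehat U-U^*)=\textstyle\sum_g(\bar{\mathcal T}_g-\E\mathcal T_g)(U^*)+(\widehat M-M)(\widehat U-U^*).
\]
The system is triangular in $h$, so $I-M$ is invertible. Applying $r^\top$ yields
\[
\widehat U_{x_0}-U^*_{x_0}=\textstyle\sum_g r^\top\bar\Xi_g+O_P(\|\widehat M-M\|\,\|\widehat U-U^*\|)=\sum_g r^\top\bar\Xi_g+O_P(N^{-1}).
\]
Multiplying by $-1/\alpha$ gives the linearization $\widehat C_N-C_{\alpha,K}=\sum_g n_g^{-1}\sum_i\phi_g(W_{g,i})+R_N$ with $R_N=O_P(N^{-1})$. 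Each $\phi_g$ has mean zero and variance $\sigma_g^2$, and the groups are independent. The leading term therefore has variance $\sum_g\sigma_g^2/n_g$, and $\sqrt N$ times it converges to $\mathcal N(0,\sum_g\sigma_g^2/w_g)$ by Lindeberg (the summands are bounded). Slutsky then gives the distributional claim.

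\textbf{Step 4: the MSE statement, the main obstacle.} Convergence in law does not by itself give convergence of $N\E[(\widehat C_N-C_{\alpha,K})^2]$, so the remainder needs moment control. Write the error as $L_N+R_N$ on the good event. The leading term satisfies $N\E L_N^2=N\sum_g\sigma_g^2/n_g\to V(w)$ exactly. For the remainder, I would show $N\E R_N^2=O(N^{-1})$ using the polynomial structure of $F$. Every higher-order term is a product of at least two centered group means of bounded variables, and fourth moments of such means are $O(N^{-2})$, which gives $\E R_N^2=O(N^{-2})$. Cauchy--Schwarz then controls the cross term by $O(N^{-1/2})$ after scaling. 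The complement of the good event contributes at most $H^2e^{-cN}N\to0$. The delicate point is making the freezing event explicit and checking that $R_N$ is a genuine polynomial remainder on it. The margin lemma in Appendix~\ref{sec:supp-projection} should supply a radius $\delta>0$ with the quantile constant on the $\ell_1$-ball of radius $\delta$ around $p^*_{H,s_0}$.
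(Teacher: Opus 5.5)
Your proposal is correct and follows the paper's proof essentially step for step. It uses the same resolvent/adjoint linearization $\widehat U-U^*=A^{-1}\widehat\xi+A^{-1}(\widehat M-M)(\widehat U-U^*)$, the same Hoeffding-plus-positive-margin argument that makes a wrong VaR cell exponentially rare, and the same Cauchy--Schwarz moment control of a remainder that is a product of two centered bounded averages; your polynomial-map framing only replaces the paper's deterministic bound $\|(I-\widehat M)^{-1}\|\le H$. Two details should be stated explicitly: the group features must be the joint averages $Q(R)\ind\{S'=s'\}$, since reward and next state may be dependent, and $L_N+R_N$ must be uniformly bounded so that $N\E[(L_N+R_N)^2\ind_{E_N^c}]\to0$ as well, which the paper notes when discarding the wrong-cell event.
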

Increasing a high-influence group's share reduces its contribution to error. The proof also controls the nonlinear Bellman remainder and wrong-quantile probability in normalized MSE (Appendix~\ref{sec:supp-fixed-design}).

\begin{theorem}[Fixed-design efficiency]
\label{thm:oracle}
For each such allocation and population law satisfying the positive quantile margin, $V(w)$ is the local semiparametric efficiency bound for $C_{\alpha,K}$ in the product model of unrestricted group laws on their fixed declared outcome spaces, relative to its differentiable-in-quadratic-mean (DQM) tangent space. The empirical categorical Bellman estimator is regular under these local submodels and attains the bound.
\end{theorem}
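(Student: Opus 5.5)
The plan follows the classical semiparametric route for smooth functionals of finitely many unrestricted laws. We treat the model as a product of independent experiments, one per group $g$, with $n_g\approx w_gN$ i.i.d.\ draws from an arbitrary law $P_g$ on its fixed declared outcome space. For a single unrestricted law, the DQM tangent space is the full set of mean-zero square-integrable scores $L_2^0(P_g)$. Under the product of $G$ experiments with sample fractions $w_g$, the local submodels are indexed by $h=(h_g)_g$ with $dP_{g,t}\approx(1+t h_g/\sqrt{N})\,dP_g$. The local log-likelihood ratio is then LAN with Fisher inner product $\langle h,h'\rangle_w=\sum_g w_g\E_{P_g}[h_gh_g']$. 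The first step is to verify this LAN expansion, which is routine because the product structure makes the log-likelihoods add across groups.

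The second step is pathwise differentiability of $\psi(P)=C_{\alpha,K}$. Under the positive margin, $q_\alpha$ is locally constant along any DQM path, since small perturbations of the root probabilities keep the cutoff strictly inside one atom's mass. Hence $\psi=q_\alpha-U_H(s_0,q_\alpha)/\alpha$ with $q_\alpha$ fixed. Along a path, $U$ solves the affine system of Equation~\ref{eq:affine-system} with $\mathbf b,M$ linear in the group laws, since every entry is an expectation of $\mathcal T_g$ under $P_g$. Differentiating $U=\mathbf b+MU$ gives $\dot U=(I-M)^{-1}(\dot{\mathbf b}+\dot MU^*)$. Here $I-M$ is invertible because the layered horizon structure makes $M$ nilpotent. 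The root derivative is therefore $r^\top\sum_g\E_{P_g}[\mathcal T_g(W;U^*)h_g(W)]$. Multiplying by $-1/\alpha$ and centering yields
\begin{equation*}
\tfrac{d}{dt}\psi(P_t)=\sum_g\E_{P_g}[\phi_g(W)h_g(W)]=\Big\langle \big(\phi_g/w_g\big)_g,h\Big\rangle_w .
\end{equation*}
The canonical gradient is thus $\tilde\psi_g=\phi_g/w_g$. This lies in the tangent space because $\phi_g$ is mean zero and bounded, given rewards in $[0,1]$ and the finite grid. Its squared norm is $\sum_g w_g\E[\phi_g^2]/w_g^2=\sum_g\sigma_g^2/w_g=V(w)$. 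By the Hájek–Le Cam convolution and local asymptotic minimax theorems, $V(w)$ is the efficiency bound.

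The third step handles attainment and regularity. Theorem~\ref{thm:allocation-clt} already supplies asymptotic normality with variance $V(w)$. Its proof, via the linearization $\widehat C_N-C_{\alpha,K}=\sum_g n_g^{-1}\sum_i\phi_g(W_{g,i})+o_p(N^{-1/2})$, shows that the estimator is asymptotically linear with influence function equal to the canonical gradient. An asymptotically linear estimator whose influence function equals the efficient gradient of a pathwise differentiable functional is regular, and this follows from Le Cam's third lemma applied to the joint limit of the estimator and the log-likelihood ratio. We will still check that the remainder stays $o_p(N^{-1/2})$ under $P_{t}$. This holds because both the Bellman remainder and the wrong-quantile event are controlled uniformly over a neighborhood of laws, and contiguity transfers the bound.

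The main obstacle is the uniformity in steps two and three. The margin must persist along local paths, and the linearization error bound from Theorem~\ref{thm:allocation-clt} must hold uniformly rather than only at the fixed $P$. We will first try to prove that the margin is an open condition in the finite-dimensional root probability vector, which depends continuously and polynomially on the group laws. Combined with Hoeffding-type concentration, whose constants depend only on $H$, $K$, and $G$, this should give the needed local uniformity.
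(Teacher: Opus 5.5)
Your proposal is correct and follows the paper's proof essentially step for step. It obtains the pathwise derivative by differentiating the affine stop-loss fixed point with the positive margin freezing the VaR atom, identifies the Riesz representer $\phi_g/w_g$ in the product LAN inner product with squared norm $V(w)$, derives regularity and attainment from the asymptotic linearity of Theorem~\ref{thm:allocation-clt} plus Le Cam's third lemma, and gets the bound from the convolution and local asymptotic minimax theorems.

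The uniformity you flag as the main obstacle is not needed. LAN makes the local sample laws contiguous to the baseline, so the $o_P(1)$ linearization remainder obtained at the fixed $P$ is automatically $o_P(1)$ under the alternatives; this is exactly how the paper argues. Margin persistence along local paths is used only for the deterministic target shift $\sqrt N\{C_{\alpha,K}(P_{t/\sqrt N})-C_{\alpha,K}(P)\}\to t\dot C_s$.
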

For a fixed budget split, this is the smallest leading variance among regular estimators using the stated conditional samples (Appendix~\ref{sec:supp-efficiency}). We can now choose the split itself. When $\sum_g\sigma_g>0$, minimizing $V(w)$ gives the classical Neyman rule
\begin{equation}
 w_g^*=\frac{\sigma_g}{\sum_j\sigma_j},\qquad
 V^*=\left(\sum_g\sigma_g\right)^2.
 \label{eq:oracle-value}
\end{equation}
A group with twice the influence scale receives twice the oracle share. If some scales vanish, the optimum over positive shares is approached by letting their exploration shares tend to zero. The next example shows why ordinary visitation and reward variability cannot replace this tail-specific score.
In a one-step model with equally visited kernels, one returns $0$ with probability $.1$ and $5/9$ otherwise; each other kernel returns $1/3$ or $2/3$ with equal probability. Their first two moments match, but only the first has random shortfall below the CVaR threshold $1/3$. Appendix~\ref{sec:supp-controlled-separation} gives the proof and fixed-root-law family.

\begin{proposition}[Equal visitation and reward variance can hide tail influence]
\label{prop:controlled-separation}
For every fixed $G\ge2$, there is a one-step categorical model with a uniform policy, conditional mean $1/2$ and variance $1/36$ in every kernel, and a positive margin at $\alpha=.1$, for which
\[
 V_{\rm occupancy}=V_{\rm mean}=\frac1G,
 \qquad V^*=\frac1{G^2}.
\]
These are the coefficients of $1/N$ in asymptotic CVaR MSE; $V_{\rm mean}$ uses the allocation optimal for estimating the mean return. There is also a family with these same visitation probabilities, conditional moments, and entire root return law in which the occupancy-to-oracle ratio ranges from $1$ to $G$.
\end{proposition}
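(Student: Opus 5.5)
The plan is to build the one-step model sketched just before the statement, compute every $\sigma_g$ in closed form from the shortfall representation \eqref{eq:main-cvar-shortfall}, and then evaluate $V(w)$ from Theorem~\ref{thm:allocation-clt} under the three allocations.

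\textbf{Construction and margin.} Take $H=1$ and a single root state $s_0$ with $G$ actions, each defining its own kernel, under the uniform policy $\pi(a\mid s_0)=1/G$. Kernel $1$ returns $0$ with probability $.1$ and $5/9$ otherwise. Each kernel $g\ge2$ returns $1/3$ or $2/3$ with equal probability. Direct arithmetic gives mean $1/2$ and variance $1/36$ for every kernel; for kernel $1$, $E[R^2]=.9\cdot 25/81=25/90$. Use a grid containing $\{0,1/3,5/9,2/3,1\}$, so that projection is the identity.

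The root law puts mass $.1/G<\alpha$ at $0$ and mass $(G-1)/(2G)$ at $1/3$. Its cumulative mass at $1/3$ is $.1/G+(G-1)/(2G)\ge .3>.1$ for $G\ge2$. Hence $q_\alpha=1/3$ and $\alpha$ lies strictly inside that atom, which is the positive margin.

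\textbf{Influence scales.} With $H=1$ the adjoint weights are just the policy weights. By \eqref{eq:main-cvar-shortfall},
\[
\phi_g(W)=-\frac{1}{\alpha G}\Big((q_\alpha-R)_+-\E(q_\alpha-R)_+\Big).
\]
For $g\ge2$ the shortfall is identically zero, so $\sigma_g=0$. For $g=1$ it equals $\tfrac13\ind\{R=0\}$, whose variance is $\tfrac19\cdot .1\cdot .9=.01$. Therefore $\sigma_1^2=\tfrac{1}{.01\,G^2}\cdot .01=1/G^2$.

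\textbf{Evaluating the three allocations.} Occupancy gives $w_1=1/G$, so $V_{\rm occupancy}=\sigma_1^2/w_1=1/G$. For the mean return, the influence of kernel $g$ is $(R-\tfrac12)/G$, with standard deviation $1/(6G)$ for every $g$. The Neyman rule for the mean is therefore uniform, and $V_{\rm mean}=1/G$ as well. Formula \eqref{eq:oracle-value} gives $V^*=(\sum_g\sigma_g)^2=1/G^2$. Because $\sigma_g=0$ for $g\ge2$, this value is approached by letting those shares tend to zero, as noted after \eqref{eq:oracle-value}.

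\textbf{The family.} Fix $\theta\in[0,1]^G$ with $\sum_g\theta_g=1$. Let kernel $g$ be the mixture $\theta_g A+(1-\theta_g)B$, where $A$ is the law of kernel $1$ above and $B$ is the $\{1/3,2/3\}$ law.
\begin{itemize}
\item Because $A$ and $B$ share mean $1/2$, each mixture has mean $1/2$ and variance $1/36$.
\item The root law is $\tfrac1G\sum_g[\theta_gA+(1-\theta_g)B]=\tfrac1G[A+(G-1)B]$. This does not depend on $\theta$, so visitation, conditional moments, the entire root return law, and the margin are all unchanged.
\item Kernel $g$ has $\sigma_g\propto\sqrt{p_g(1-p_g)}$ with $p_g=.1\theta_g$.
\end{itemize}
The ratio is then
\[
\frac{V_{\rm occupancy}}{V^*}=\frac{G\sum_g\sigma_g^2}{(\sum_g\sigma_g)^2}.
\]
At $\theta$ uniform all $\sigma_g$ are equal and the ratio is $1$. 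At $\theta=e_1$ the ratio is $G$, which is the model above. The set of admissible $\theta$ is a simplex, hence connected, and the ratio is continuous on it wherever $\sum_g\sigma_g>0$, which always holds here. Every value in $[1,G]$ is therefore attained.

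\textbf{Main obstacle.} The delicate point is justifying that Theorem~\ref{thm:allocation-clt}'s scales $\sigma_g$ reduce exactly to these one-step shortfall variances. This requires checking that the margin fixes $q_\alpha$ and that projection is exact on the chosen grid.
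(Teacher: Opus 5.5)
Your proposal is correct and follows the paper's proof essentially step for step. It uses the same one-step construction with laws $P^{\rm A}$ and $P^{\rm B}$, the same influence $\phi_g(R)=-(G\alpha)^{-1}\{L(R)-\E_{P_g}L(R)\}$ with $L(R)=(1/3-R)_+$ giving $\sigma_1=1/G$ and $\sigma_g=0$ otherwise, and the same continuity argument for the family. The paper's family $P_g(t)=(1-t)\overline P+tP_g^{\rm sep}$ is exactly the segment $\theta(t)=(1-t)\mathbf 1/G+te_1$ of your simplex, and the paper also records the bounds $1\le G\sum_g\sigma_g^2/(\sum_g\sigma_g)^2\le G$ (Cauchy--Schwarz and nonnegativity); state these too, so that ``ranges from $1$ to $G$'' means both that every value is attained and that none outside $[1,G]$ is.
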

\purplerev{With ten kernels, the oracle thus has one tenth of uniform's leading MSE.}

\section{Tail-Influence Sampling}
\label{sec:tis}

The oracle shares require the unknown transition laws and their effects on future returns. \tis{} learns both from a pilot. Algorithm~\ref{alg:tis} \purplerev{(Appendix~\ref{sec:supp-learning})} draws $m=m_N$ samples per group, fits a provisional model, and evaluates it by Equation~\ref{eq:cat-dp}. This gives the pilot root quantile; Equation~\ref{eq:affine-system} then gives the shortfalls $\widehat U^{(0)}$ and sensitivity weights $\widehat r^{(0)}$ needed to score each draw.

\textbf{Score each draw, then measure variability.} For pilot outcome $W_{g,i}$, compute
\begin{equation}
 \widehat d_{g,i}=-\frac1\alpha\widehat r^{(0)\top}
     \mathcal T_g(W_{g,i};\widehat U^{(0)}),\qquad
 \widehat\sigma_g=\sqrt{\frac1m\sum_{i=1}^m
     (\widehat d_{g,i}-\bar d_g)^2},
 \label{eq:pilot-scales-main}
\end{equation}
Here $\bar d_g$ is the group's average score. Each score combines all stage effects of a draw; larger estimated scales receive more queries:
\begin{equation}
 \widehat w_g=(1-\lambda_N)
       \frac{\widehat\sigma_g}{\sum_j\widehat\sigma_j}
       +\frac{\lambda_N}{G},\qquad 0<\lambda_N<1.
 \label{eq:floored-design}
\end{equation}
When all scales vanish, use uniform sampling. For scales $1$ and $3$, the shares before the floor are $1/4$ and $3/4$. The floor reserves samples for groups the pilot may have underestimated.

\purplerev{After two main samples per group, largest-remainder rounding spends the remaining budget exactly.} Only fresh main samples enter the final estimate.
As $N$ grows, a larger pilot can consume a shrinking budget fraction. Under the following schedules, \tis{} attains oracle leading MSE, including pilot cost.

\begin{theorem}[Oracle adaptation]
\label{thm:adaptive}
Under the same model and margin, suppose $\sum_g\sigma_g>0$. If
$m_N\to\infty$, $Gm_N=o(N)$, $\lambda_N\to0$,
$\sqrt N\lambda_N\to\infty$, and 
$\log(1/\lambda_N)=o(m_N)$, then
\begin{equation}
 \sqrt N \Big(\widehat C_N^{\tis}-C_{\alpha,K} \Big)\Rightarrow\mathcal N(0,V^*),
 \qquad N\E \bigg[ \Big(\widehat C_N^{\tis}-C_{\alpha,K} \Big)^2 \bigg]\to V^*.
 \label{eq:adaptive-limit}
\end{equation}
A total pilot of order $N^{2/3}$ and floor $N^{-1/4}$ suffice for fixed $G$. Individual zero-influence groups are allowed.
\end{theorem}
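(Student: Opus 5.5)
The plan is to condition on the pilot and reduce to Theorem~\ref{thm:allocation-clt}, applied with random but pilot-measurable counts. Only fresh main samples enter $\widehat C_N^{\tis}$. So conditionally on the pilot $\sigma$-field $\mathcal F_0$, the main counts $n_g$ are deterministic and the main draws are i.i.d.\ within groups and independent across groups. The main budget is $N'=N-Gm_N=N(1+o(1))$, so the pilot cost is asymptotically negligible in the normalization $\sqrt N$.

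\textbf{Step 1: consistency of the pilot scales.} Since $m_N\to\infty$ and the dimension is fixed, the pilot empirical laws converge to $P_g$. By the positive margin, the pilot quantile equals $q_\alpha$ except on an event of probability $e^{-cm_N}$ (Hoeffding on finitely many grid masses). On the complement, $\widehat U^{(0)}$ and $\widehat r^{(0)}$ are smooth functions of the pilot laws through the layered recursions in Equation~\ref{eq:affine-system}, so they converge to $U^*$ and $r$. The scores $\widehat d_{g,i}$ are then bounded, uniformly Lipschitz functions of these parameters evaluated at i.i.d.\ draws. A law of large numbers for the empirical standard deviation gives $\widehat\sigma_g\to\sigma_g$ in probability, including $\widehat\sigma_g\to0$ for zero-influence groups. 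Because $\sum_g\sigma_g>0$ and $\lambda_N\to0$, we get $\widehat w_g\to w_g^*$. The two-sample minimum and rounding perturb the counts by $O(1)$, so $n_g/N\to w_g^*$ in probability.

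\textbf{Step 2: the CLT.} I would rerun the linearization from the proof of Theorem~\ref{thm:allocation-clt} conditionally on $\mathcal F_0$:
\[
\widehat C_N^{\tis}-C_{\alpha,K}=\sum_g\frac1{n_g}\sum_i\phi_g(W_{g,i})+\text{remainder}.
\]
Conditionally, the linear term is a sum of independent bounded terms with variance $\sum_g\sigma_g^2/n_g$, and this equals $V(\widehat w)/N'$ up to rounding. Groups with $\sigma_g>0$ have $\widehat w_g$ bounded below with probability tending to one. Groups with $\sigma_g=0$ have $\phi_g\equiv0$ and contribute nothing to the linear term. So $NV(\widehat w)/N'\to V^*$ in probability, and a conditional Lindeberg argument with dominated convergence of characteristic functions gives the distributional limit.

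\textbf{Step 3, the main obstacle: convergence of normalized MSE.} Here I must control the quadratic Bellman remainder and the wrong-quantile event when some $n_g$ may be as small as order $\lambda_N N/G$. The remainder is quadratic in the errors of the group empirical laws, so its normalized second moment is of order
\[
N\sum_g n_g^{-2}\lesssim N(\lambda_N N)^{-2}\cdot G^2,
\]
which vanishes exactly because $\sqrt N\lambda_N\to\infty$. The same floor guarantees a quantile flip at probability at most $e^{-c\lambda_N N}$, so with bounded CVaR it is negligible after multiplying by $N$.

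On the pilot side, the event that the pilot quantile is wrong or that $\widehat w$ is far from $w^*$ has probability $e^{-cm_N}$. On that event the floor still gives a conditional MSE of at most $G/(\lambda_N N)$ times a constant. Its contribution $N\cdot e^{-cm_N}/\lambda_N$ vanishes precisely when $\log(1/\lambda_N)=o(m_N)$.

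On the good event, $N\,\E[(\text{linear term})^2\mid\mathcal F_0]=NV(\widehat w)/N'$. This quantity is bounded by a constant times $\lambda_N^{-1}\cdot$(the $\sigma_g$ of groups with small weights), and it converges in probability to $V^*$. The remaining difficulty is uniform integrability as $\lambda_N\to0$. I would handle it by splitting on $\{|\widehat\sigma_g-\sigma_g|\le\epsilon\}$ and using the exponential pilot concentration on the complement. Since $V(w)$ is continuous near $w^*$ on the groups with $\sigma_g>0$, the bound then gives $N\E[(\widehat C_N^{\tis}-C_{\alpha,K})^2]\to V^*$.

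Finally, the stated schedule satisfies every hypothesis. With $Gm_N\asymp N^{2/3}$ and $\lambda_N=N^{-1/4}$, we have $Gm_N=o(N)$, $\sqrt N\lambda_N=N^{1/4}\to\infty$, and $\log(1/\lambda_N)=O(\log N)=o(N^{2/3})$.
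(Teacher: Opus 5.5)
Your proposal is correct and follows essentially the paper's proof: you condition on the pilot, use the floor's minimum count $n_{\min}\gtrsim N\lambda_N/G$ to make the Bellman remainder and wrong-cell terms negligible, apply a conditional Lindeberg CLT, and get uniform integrability of $N\sum_g\sigma_g^2/n_g$ from a good event with weights bounded below plus an exponentially rare bad pilot event, paid for by $\log(1/\lambda_N)=o(m_N)$. The paper needs less for that last step, only the one-sided events $\{\widehat\sigma_g\ge\sigma_g/2\}$ for active groups and the deterministic bound $\widehat\sigma_j\le B_\sigma$, whereas you use two-sided concentration for all groups; and note one slip: the bad-pilot contribution is $N\cdot e^{-cm_N}\cdot CG/(\lambda_N N)=O(e^{-cm_N}/\lambda_N)$, not $N e^{-cm_N}/\lambda_N$, and with the stray factor $N$ your stated hypothesis would not suffice.
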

\redrev{The theorem controls learning the score through the unknown continuation model and quantile, as well as learning its variance and allocation. The proof handles rare underallocating pilots, the nonlinear Bellman remainder, and wrong quantile atoms (Appendix~\ref{sec:supp-adaptation}).} The limit is pointwise in the fixed model, not uniform over increasingly rare events or shrinking margins.

\textbf{\purplerev{A defensive} anchor for finite pilots.} A small pilot can miss a consequential outcome and give its group too few main samples. Local stability near the oracle (Proposition~\ref{prop:finite-pilot}) does not control that failure. \redrev{Following the established defensive-mixture principle \citep{hesterberg1995weighted,owen2000safe}, we combine tail targeting with pilot-estimated visitation. This can protect kernels that the pilot recognizes as frequently reached even when their tail influence is underestimated.} Let $\widehat o_{s,a}=\sum_h
\widehat\mu_h(s)\pi_h(a\mid s)$ be the pilot-model expected visit count. Form a floored occupancy design from these scores, using the same pilot and floor as the influence design. Anchored \tis{} averages the two:
\begin{equation}
 w^{\rm anc}=\tfrac12(w^{\rm inf}+w^{\rm occ}).
 \label{eq:anchored-score}
\end{equation}
The influence component keeps its uniform fallback. \redrev{Use these shares in Algorithm~\ref{alg:tis}.}
The mixture keeps at least half of either component's share for every group. Before rounding, its leading variance is therefore at most twice that of the better component (Proposition~\ref{prop:anchored}, Appendix~\ref{sec:supp-anchor}). Both components can still underallocate the same group. \redrev{Under the assumptions and schedules of Theorem~\ref{thm:adaptive}, the anchor's asymptotic MSE constant satisfies $V^*\le V_{\rm anc}\le2V^*$ (Corollary~\ref{cor:anchor-asymptotic}, Appendix~\ref{sec:supp-anchor}). This does not guarantee lower finite-budget MSE.} \purplerev{Plain \tis{} is the efficient limit when pilots estimate the scales reliably; the anchor pays at most a factor two in the constant for protection against pilots that miss rare outcomes.} Pilot underallocation in the initial language-model experiments motivated this anchor. We fixed its design before collecting held-out FinQA numerical-review data (Section~\ref{sec:experiments}).

\begingroup\revcolor{purple}
\textbf{When can a tail-specific score help?} Learning an allocation must repay its pilot. Let $\rho$ be the pilot's fraction of the budget, $A_v=V(v)/V^*$ the oracle's advantage over a fixed design $v$ that spends all $N$ queries, and $D(w^*\Vert w)=\sum_g(w^*_g-w_g)^2/w_g$ the error of the realized main-sample shares $w$ relative to the oracle shares $w^*$ in Equation~\ref{eq:oracle-value}. \redrev{For the leading error term, learning beats $v$ exactly when
\begin{equation}
 1+\E D(w^*\Vert w)<(1-\rho)A_v
 \label{eq:pilot-payoff-main}
\end{equation}
(Appendix~\ref{sec:pilot-payoff}).} Large allocation advantages, small pilots, and accurate shares favor learning. Because $D$ divides by $w_g$, a starved group is especially costly; the anchor guards against exactly this. A further limit is structural.

\begin{proposition}[Rare failures make CVaR a mean]
\label{prop:rare-failure}
Suppose the categorical grid contains every partial return permitted by the fixed declared outcome spaces, so the recursion is exact throughout the product model. Let $X:=G_H(s_0)$ and let $x^\star$ be the maximum return permitted by those outcome spaces. Write $\pi=\Pr(X<x^\star)$. If $\pi<\alpha$, then
\[
 \mathrm{CVaR}_\alpha(X)=\frac{\mathbb E[X]-(1-\alpha)x^\star}{\alpha},
\]
and locally every kernel's categorical-CVaR influence is $1/\alpha$ times its ordinary mean-return influence. Hence the tail- and mean-optimal Neyman allocations coincide whenever the influence scales are not all zero; if they are all zero, every allocation has zero first-order variance.
\end{proposition}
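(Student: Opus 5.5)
Three steps in turn: an exact identity for CVaR under the rare-failure hypothesis, the consequence that CVaR is an affine function of the mean on a neighbourhood of the population law, and then transfer of that affinity to the one-draw influences $\phi_g$ and to the Neyman rule.

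\textbf{The identity.} Because the grid contains every permitted partial return, the categorical recursion in Equation~\ref{eq:cat-dp} is exact. The root grid law is therefore the law of $X$, supported in $(-\infty,x^\star]$ with an atom at $x^\star$ of mass $1-\pi$. Since $\pi<\alpha$, the lower $\alpha$-quantile is $q_\alpha=x^\star$. The worst $\alpha$-fraction then consists of all mass below $x^\star$ plus mass $\alpha-\pi$ taken from the atom at $x^\star$.

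The definition of CVaR (equivalently, the shortfall identity in Equation~\ref{eq:main-cvar-shortfall}) gives
\[
\alpha\,\mathrm{CVaR}_\alpha(X)=\mathbb E[X\ind\{X<x^\star\}]+(\alpha-\pi)x^\star .
\]
Substituting $\mathbb E[X\ind\{X<x^\star\}]=\mathbb E[X]-(1-\pi)x^\star$ yields
\[
\alpha\,\mathrm{CVaR}_\alpha(X)=\mathbb E[X]-(1-\alpha)x^\star .
\]
As a check through the shortfall form: $U_H^*(s_0,x^\star)=\mathbb E(x^\star-X)_+=x^\star-\mathbb E X$, so $C=x^\star-(x^\star-\mathbb E X)/\alpha$, which is the same expression.

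\textbf{Local validity.} Perturb the group laws within the product model on their fixed declared outcome spaces. Two features survive such perturbations:
\begin{itemize}
\item $x^\star$ remains the maximum permitted return, because it is a property of the outcome spaces and not of the probabilities;
\item $\pi$ depends continuously on the group laws, because the root law is polynomial in the kernel probabilities, so $\pi<\alpha$ persists in a neighbourhood.
\end{itemize}
The margin also holds automatically, since $\pi<\alpha<1$ places the cutoff strictly inside the atom at $x^\star$. Hence on a neighbourhood of the population law,
\[
C_{\alpha,K}(P)=\frac{m(P)-(1-\alpha)x^\star}{\alpha},
\]
where $m(P)$ is the mean return under the perturbed laws.

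\textbf{Transfer to influences and allocations.} Differentiating this affine identity along any DQM submodel, or along the perturbation $P_g\mapsto P_g+\epsilon(\delta_W-P_g)$ used in Theorem~\ref{thm:oracle}, gives $\phi_g(W)=\alpha^{-1}\psi_g(W)$. Here $\psi_g$ is the one-draw influence of group $g$ on the mean return, which is the first-order mean analogue of Equation~\ref{eq:block-sigma}, with the mean-value adjoint replacing $r$.

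The main obstacle is the identification of $\psi_g$. I would avoid matching adjoints coordinate by coordinate. Instead, the chain rule applied to the composite map $P\mapsto C_{\alpha,K}(P)$ forces the two first-order expansions, which are the canonical gradients in the product model, to differ only by the factor $1/\alpha$. It suffices that both are pathwise derivatives of smooth functionals of the same finite-dimensional parameter. The constant $(1-\alpha)x^\star/\alpha$ has zero derivative.

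Therefore $\sigma_g^{\rm tail}=\sigma_g^{\rm mean}/\alpha$ for every $g$. The Neyman shares $\sigma_g/\sum_j\sigma_j$ of Equation~\ref{eq:oracle-value} are invariant under a common positive scaling, so the tail- and mean-optimal allocations coincide whenever $\sum_g\sigma_g>0$. If every scale vanishes, then by Theorem~\ref{thm:allocation-clt} $V(w)=\sum_g\sigma_g^2/w_g=0$ for every $w$, which is the final clause.
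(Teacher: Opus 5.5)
Your proposal is correct and follows essentially the same route as the paper: the exact CVaR identity (you use the tail-mass decomposition and the shortfall form, where the paper integrates the quantile function, which equals $x^\star$ on $(\pi,1)$), continuity of $\pi$ in the group laws so that $\pi<\alpha$ holds locally, and uniqueness of the groupwise representer over the product tangent space to conclude $\phi_g=\psi_g/\alpha$ and hence proportional Neyman scales. One small correction: the proof of Theorem~\ref{thm:oracle} uses the paths $dP_{g,t}=(1+ts_g)\,dP_g$, not the contamination path $P_g+\epsilon(\delta_W-P_g)$, although either path works here because the affine relation holds on a whole neighbourhood of the population law.
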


The worst $\alpha$-fraction of runs then contains every submaximal return plus enough maximal returns to fill the tail, so CVaR moves exactly with the mean. A mean score is also easier to learn, since it does not depend on a quantile. The total-variation distance $\tfrac12\sum_g|p_g-m_g|$ between normalized tail and mean influence shares (using the uniform vector for an all-zero score) is zero under the proposition and serves as a diagnostic (Appendix~\ref{sec:supp-rare-failure}).
\endgroup

\section{Experiments}
\label{sec:experiments}

We ask \purplerev{five} questions: \textbf{Q1} Can visitation and mean-return scores miss tail influence? \textbf{Q2} Does learning this signal improve matched-budget accuracy? \textbf{Q3} Why can pilots fail, and does anchoring help? \textbf{Q4} Does anchoring transfer to held-out numerical-review workflows? \purplerev{\textbf{Q5} Is the gain tail-score-specific, and does the divergence diagnostic predict where? Theory predicts three regimes: plain \tis{} should win with concentrated tail influence and reliable pilots (Q1--Q2); the defensive anchor should matter when pilots miss rare outcomes (Q3--Q4); and, on this diagnostic's exact closed grids, no tail-specific gain should appear when submaximal returns are rarer than the tail level (Q5).}

\textbf{Compared methods.} Uniform splits queries evenly. Learned occupancy/mean use pilot-estimated visit counts/mean-return influence, respectively, with \tis{}'s pilot, floor, and CVaR estimator. Complete rollouts use whole trajectories, not conditional-query methods' direct kernel access; comparisons are cost-matched, not equal-access. Oracle+floor: exact tail-influence scales, no pilot. Budgets include discarded pilots and all rollout transitions; MSE uses exact closed-grid targets (Appendix~\ref{sec:supp-experiments}).

\noindent\textbf{Q1. Controlled separation.}
Table~\ref{tab:controlled-sep} varies tail-influence concentration while preserving visitation, conditional moments, and root return law. At $t=1$, plain \tis{} is $.16$--$.22$ of uniform MSE (oracle $.12$--$.13$); at $t=0$ (uniform optimal), it is $4.3$--$5.4\times$ uniform MSE. The intermediate case likewise does not repay the pilot (Appendix~\ref{sec:pilot-payoff}).

\begin{table}[t]
\centering
\small
\caption{\textbf{TIS gains under concentrated tail influence; elsewhere its pilot does not repay.} MSE/uniform MSE ($G=10,\alpha=.1$); columns: queries/kernel incl. pilots. Uniform=known occupancy; occupancy+pilot isolates pilot cost; Oracle+floor=population scores. Bold: sample-only column minima (point estimates). Table~\ref{tab:controlled-sep-uncertainty}: absolute errors/SEs.}
\label{tab:controlled-sep}
\begin{tabular}{lrrrrrr}
\toprule
\tableheadrow
 & \multicolumn{2}{c}{$t=0$ (equal)} & \multicolumn{2}{c}{$t=\frac12$} & \multicolumn{2}{c}{$t=1$ (one active)}\\
\tableheadrow
Method & 100 & 400 & 100 & 400 & 100 & 400\\
\midrule
Uniform & \best{1.00} & 1.00 & 1.00 & \best{1.00} & 1.00 & 1.00\\
Occupancy + pilot & 1.60 & 1.31 & 1.67 & 1.39 & 1.69 & 1.31\\
Learned mean & 1.63 & 1.28 & 1.71 & 1.39 & 1.87 & 1.34\\
Complete rollout & 1.01 & \best{0.95} & \best{0.99} & 1.07 & 1.11 & 1.18\\
\primaryrow \textsc{TIS} & 5.42 & 4.30 & 3.62 & 3.15 & \best{0.22} & \best{0.16}\\
\primaryrow anchored \textsc{TIS} & 2.04 & 1.52 & 1.66 & 1.17 & 0.37 & 0.27\\
\referencerow Oracle + floor & 1.00 & 1.00 & 0.75 & 0.77 & 0.13 & 0.12\\
\bottomrule
\end{tabular}
\end{table}

\noindent\textbf{Q2. Tabular benchmarks.}
Seasonal inventory ($H=8$, 41 blocks, $\alpha=.1$) uses the prespecified pilot/floor; at $1{,}200$ queries/block, MSE/uniform is $.657$ for \tis{}, $.918$ for learned occupancy, $.866$ for learned mean influence, and $2.22$ for complete rollouts at matched transition cost (Figure~\ref{fig:main}(b), Table~\ref{tab:inventory-full}).
Slippery CliffWalking ($H=20$, $\alpha=.1$, 149 stationary kernels) tests layer reuse (Figure~\ref{fig:main}(a)). At 400 queries/kernel, \tis{} lowers MSE by $40.9\%$ vs. learned occupancy, $18.3\%$ vs. learned mean influence, $76.3\%$ vs. complete rollouts at matched transition cost, and $31.3\%$ vs. population occupancy (Table~\ref{tab:cliff-full}). FrozenLake/rainy Taxi add shared-kernel checks (Appendix~\ref{sec:supp-public-gym}). Most gains come from pooling reused kernels; covariance terms matter little numerically here (Appendix~\ref{sec:supp-cliff-full-note}).
\purplerev{To test breadth, we prespecified an 18-case family before simulation: disruption probabilities $\{.01,.04,.12\}$, disruption losses $1$--$3$ units and two fixed ordering policies; all 18 reported (Appendix~\ref{sec:supp-inventory-family}). At $1{,}200$ queries/block, both \tis{} and the anchor have resolved lower MSE vs. learned occupancy/rollouts in all 18 and vs. learned mean in 17 (median MSE/uniform: $.61$ \tis{}, $.69$ anchored, $.93$ learned occupancy, $.84$ learned mean, $2.56$ rollouts).}
At matched RMSE, \tis{}'s query ratios (learned occupancy/rollouts) are $.65$--$.84$/$.32$--$.45$ on CliffWalking and $.55$--$.72$/$.26$--$.30$ on inventory; these retrospective interpolations include pilots (Appendix~\ref{sec:supp-cost-accounting}).

\begin{figure*}[t]
 \centering
 \includegraphics[width=0.7\textwidth]{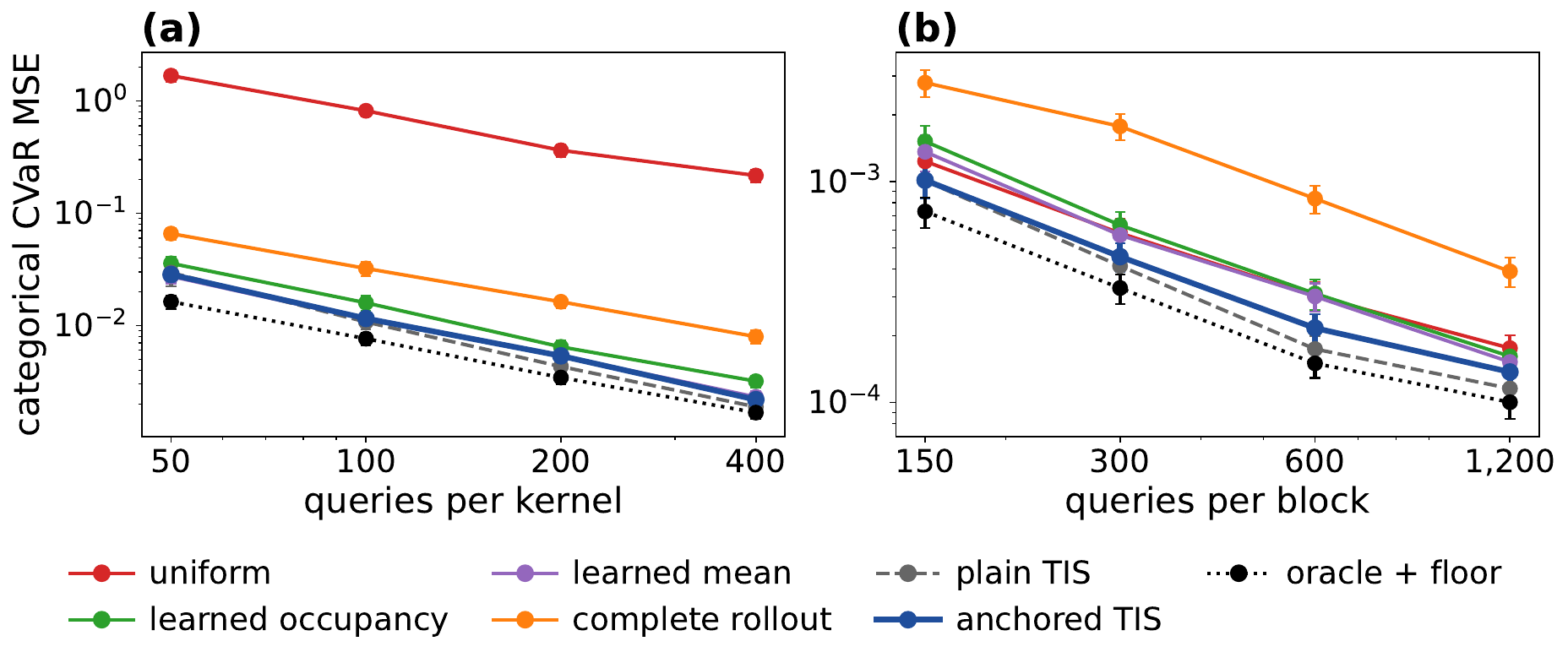}
 \caption{\textbf{At largest budgets, TIS MSE is lower than learned occupancy, learned mean, and rollouts.} CVaR MSE vs. charged queries/kernel in (a) CliffWalking and /block in (b) inventory. Conditional-query methods have direct kernel access; rollouts use trajectories at same charged transition budget. Oracle+floor=population scores; bars=$1.96$ Monte Carlo SEs.}
 \label{fig:main}
\end{figure*}

\noindent\textbf{Q3. Pilot reliability in language-model workflows.}
\label{sec:language_model_workflow_study}
\redrev{Each of 50 fixed MMLU-Pro questions \citep{wang2024mmlupro} is a separate workflow. State $s=(j,c)$ records latest answer $j\in\{1,\ldots,10\}$ and confidence $c\in\{.1,\ldots,.9\}$. Fixed policy solves first, then selects \texttt{reconsider}, \texttt{challenge}, or \texttt{verify} by confidence band. Prompts include question/current pair/prescribed action, but no history or stage index. Root plus $10\times9$ pairs yield 91 directly queryable kernels reused across stages.
Runs use $H\in\{2,4,6\}$ calls. Responses earn normalized Brier utility vs. correct option (Equation~\ref{eq:llm-brier-reward}); for six generators, we estimate each question's worst-$10\%$ CVaR against exact closed-grid targets (Appendix~\ref{sec:llm-protocol}). Queries sample frozen LLM laws; live GPU/API cost is out of scope.}

At $H=6$, 400 queries/kernel, plain \tis{} exceeds uniform MSE for Qwen3-4B \purplerev{($1.38$)} and GLM-4-32B \purplerev{($1.77$)} (Table~\ref{tab:llm-compact}). Occupancy has lower observed MSE than plain \tis{} for all six generators. The anchor mitigates both failures\purplerev{, reaching $.056$--$.093$ of uniform MSE and the lowest MSE for 5/6 generators}; occupancy and complete rollouts remain strong. \purplerev{Kernel reuse matters: fitting recurring-prompt copies separately by step matches shared \tis{} at $H=2$ but has $4$--$10\times$ its MSE at $H=4,6$ (Table~\ref{tab:supp-llm-full}).}
\redrev{The MMLU replay's 90th-percentile realized/oracle variance ratio falls from $136$ (plain \tis{}) to $5.8$ (anchor)} \purplerev{(Appendix Figure~\ref{fig:mechanism}(a)): starved groups drive failures, as Equation~\ref{eq:pilot-payoff-main} predicts.} \purplerev{Using population influences and realized counts, the variance formula predicts observed MSE without fitted constants (median log-ratios $-.001$ for MMLU and $-.006$ for FinQA Phi; retrospective check, Appendix~\ref{sec:pilot-mechanism}).}

\noindent\textbf{Q4. Held-out financial numerical review.}
\label{sec:finqa}
\redrev{FinQA has numerical questions over real financial reports \citep{chen2021finqa}. We compare ordinary review and explicit unit-and-sign audit. Each workflow selects one of eight frozen candidate solutions, then reviews twice ($H=3$) under the MMLU confidence-band policy. Root plus $8\times9$ candidate--confidence pairs yield 73 queryable kernels; review templates alter transition laws.
Terminal utility $u(s)\in[0,1]$ falls with relative numerical error against gold answer; invalid candidates earn zero. With zero root utility, rewards $R=[u(s')-u(s)+1]/2$ sum to $G_H=[H+u(S_H)]/2$, so only final-answer utility matters.}
\purplerev{A 20-question development phase fixed the screen, generators, workflows, seeds, anchored score, and floor before held-out calibration on 50 fresh screened questions from 312 scanned (Appendix~\ref{sec:finqa-protocol}).}
At 100--200 queries/kernel, anchored \tis{} is $.089$--$.100$ of uniform MSE on Qwen3-4B and $.216$--$.327$ on Phi-4-mini (Figure~\ref{fig:finqa-main}(a,b)); plain \tis{} is unresolved vs. uniform in 7/8 cells. At 400/800 queries, the anchor beats occupancy and rollouts in both Phi workflows (MSE ratios $.81$--$.86$ and $.68$--$.88$, respectively); all 8 contrasts resolve. Occupancy and rollouts remain better on Qwen (Appendix~\ref{sec:finqa-protocol})\purplerev{, whose near-deterministic questions fall in the rare-failure regime of Proposition~\ref{prop:rare-failure}}. \purplerev{With six vs. three calls (same frozen kernels), the Phi anchor's MSE is $.29$--$.64$ of rollouts' at every budget in both workflows, all resolved (Table~\ref{tab:longh-rollout}).}

\begingroup\revcolor{purple}
\noindent\textbf{Q5. Is the tail score itself what helps?}
\label{sec:mixture-controls}
Two declared controls test whether anchor gains come from the tail score: each blends the same floored occupancy shares with uniform or learned mean-influence shares and pays the same pilot (Appendix~\ref{sec:supp-mixture-controls}). Anchor MSE is resolved lower vs. uniform blend in 160/166 settings and higher in none (Table~\ref{tab:mixture-summary}), so gains are not a regularization artifact. Vs. mean blend, results follow Proposition~\ref{prop:rare-failure}. FinQA, including calculator-fault variants (Appendix~\ref{sec:supp-toolfault}): median tail--mean distance is $.000$; mean blend is as good or better. MMLU-Pro (more frequent failures; distance $.08$--$.33$): anchor MSE is resolved lower vs. mean blend in 23/24 confident-error cells (this utility makes confident mistakes nearly worthless) and 14/15 Brier cells at distance $\ge.24$, versus 2/12 below $.18$. With pre-simulation predictions, the rule held in all 3 decisive new settings and 6/8 longer-loop settings; the two misses (Qwen3-4B at $H=8,10$, distance $.26$) favored the anchor without resolving (Appendices~\ref{sec:supp-rare-failure} and~\ref{sec:supp-longh}). In practice, use pilot-estimated distance: anchor if pilot median is $\ge.21$, otherwise mean blend; this picked the better or tied design in 141/148 cells (Appendix~\ref{sec:supp-pilot-rule}).

\begin{table}[t]
\centering\revcolor{purple}
\small
\setlength{\tabcolsep}{4pt}
\caption{\textbf{Anchor vs. equally regularized blends: never resolved worse than uniform; mean-blend gains depend on domain.} Counts of resolved ($|z|\ge2$) lower (higher) anchor MSE vs. each blend. Inventory: 18 cases at $1{,}200$ queries/block; FinQA: 2 generators $\times$ 2 workflows $\times$ 4 budgets.}
\label{tab:mixture-summary}
\resizebox{\linewidth}{!}{
\begin{tabular}{lcc}
\toprule
\tableheadrow
Domain & vs.\ occupancy$+$uniform & vs.\ occupancy$+$mean\\
\midrule
Inventory disruption family & 18/18 (0) & 17/18 (0)\\
FinQA review (held-out) & 16/16 (0) & 0/16 (15)\\
MMLU-Pro, Brier utility (6 generators $\times$ 3 budgets) & 18/18 (0) & 11/18 (0)\\
MMLU-Pro high-stakes panel (3 generators $\times$ 3 budgets) & 9/9 (0) & 5/9 (2)\\
MMLU-Pro, confident-error utility (6 generators) & 24/24 (0) & 23/24 (0)\\
FinQA with calculator faults (ordinary review) & 15/18 (0) & 0/18 (12)\\
FinQA with calculator faults (unit check) & 15/18 (0) & 0/18 (10)\\
\midrule
Prospective divergence test (5 new settings $\times$ 3 budgets) & 15/15 (0) & 11/15 (1)\\
MMLU-Pro longer loops ($H=8,10$; 3 generators $\times$ 3 budgets) & 18/18 (0) & 12/18 (0)\\
FinQA longer reviews ($H=6$; 2 generators $\times$ 2 workflows $\times$ 3 budgets) & 12/12 (0) & 0/12 (9)\\
\bottomrule
\end{tabular}
}
\end{table}

\endgroup

\section{Related Work}
\label{sec:related-work}

\textbf{From distributional inference to query design.}
Categorical distributional RL supplies the representation and projection \citep{bellemare2017distributional,rowland2018analysis}. \citet{zhang2025inference} derive return-law and functional limits under specified, possibly nonuniform, sampling laws. \redrev{Quantile-based evaluation also has semiparametric efficiency guarantees \citep{cheng2026quantile}.} Shortfall identities and efficiency theory are established \citep{rockafellar2000optimization,vanderVaart1998asymptotic}; our addition is the computable, learnable Bellman allocation signal.

\textbf{Adaptive allocation with a learned Bellman model.}
Adaptive stratified sampling learns the variances required by Neyman allocation \citep{neyman1934two,etore2010adaptive,carpentier2015adaptive}. \redrev{Theorem~\ref{thm:adaptive} controls learning our Bellman-dependent score and its allocation, including pilot cost.} Small-pilot failures and defensive mixtures motivate the anchor \citep{cai2022smallpilots,hesterberg1995weighted,owen2000safe}; SaVeR targets trajectory design for mean evaluation \citep{mukherjee2024saver}. Risk-sensitive control changes the policy \citep{tamar2015policy,bauerle2024overview}; with generative access, \citet{deng2025iterated} study sample complexity for iterated-CVaR policy optimization. We instead fix the policy and optimize conditional-query allocation for estimating its tail functional. Appendix~\ref{sec:extended-related} compares access models and guarantees.

\section{Discussion}
\label{sec:discussion}

\purplerev{Allocating queries to estimate CVaR is harder than allocating them for a mean: a kernel's value depends on an unknown continuation model and tail cutoff, and one draw of a reused kernel affects several Bellman stages at once.} \redrev{Our contribution is a computable, learnable CVaR allocation signal for reused conditional laws, with oracle first-order MSE under the stated assumptions}\purplerev{, a defensive variant within a factor two of the oracle, and, on exact grids, a condition under which mean influence suffices.}

\redrev{\tis{} is most promising when conditional access is feasible, tail influence differs meaningfully from visitation or mean influence, and pilots estimate that difference reliably enough to repay their cost.} \purplerev{By Equation~\ref{eq:pilot-payoff-main}, the failures reflect either little opportunity ($t=0$ in Table~\ref{tab:controlled-sep}) or opportunity the pilot cannot learn (plain \tis{} in deep loops; FinQA-Qwen, whose rare errors pilots seldom see). In practice, the pilot's tail--mean divergence selects the allocation, at the same computation as mean influence and more than occupancy or rollouts (Table~\ref{tab:replay-timing}).}
\purplerev{The setting arises wherever an evaluator can restart from a chosen state: auditing rare severe errors of language-model review and agent loops before deployment, where each query is a model call and recurring prompts make kernel reuse the norm, or disruption losses of inventory and maintenance policies in simulators. Tool failures and multi-turn safety evaluation are natural next targets.} \redrev{The guarantees require fixed-dimensional categorical models, a positive quantile margin, independent queries, and the stated pilot and floor schedules. Finite-budget MSE guarantees remain open.}

\bibliography{tis_references}

@inproceedings{bellemare2017distributional,
  title = {A Distributional Perspective on Reinforcement Learning},
  author = {Bellemare, Marc G. and Dabney, Will and Munos, R{\'e}mi},
  booktitle = {Proceedings of the 34th International Conference on Machine Learning},
  series = {Proceedings of Machine Learning Research},
  volume = {70},
  pages = {449--458},
  year = {2017},
  publisher = {PMLR}
}

@inproceedings{rowland2018analysis,
  title = {An Analysis of Categorical Distributional Reinforcement Learning},
  author = {Rowland, Mark and Bellemare, Marc G. and Dabney, Will and Munos, R{\'e}mi and Teh, Yee Whye},
  booktitle = {Proceedings of the 21st International Conference on Artificial Intelligence and Statistics},
  series = {Proceedings of Machine Learning Research},
  volume = {84},
  pages = {29--37},
  year = {2018},
  publisher = {PMLR}
}

@inproceedings{thomas2019cvar,
  title = {Concentration Inequalities for Conditional Value at Risk},
  author = {Thomas, Philip S. and Learned-Miller, Erik},
  booktitle = {Proceedings of the 36th International Conference on Machine Learning},
  series = {Proceedings of Machine Learning Research},
  volume = {97},
  pages = {6225--6233},
  year = {2019},
  publisher = {PMLR}
}

@inproceedings{peng2024statistical,
  title = {Statistical Efficiency of Distributional Temporal Difference Learning},
  author = {Peng, Yang and Zhang, Liangyu and Zhang, Zhihua},
  booktitle = {Advances in Neural Information Processing Systems},
  volume = {37},
  pages = {24724--24761},
  year = {2024},
  doi = {10.52202/079017-0779}
}

@article{peng2025linearctd,
  title = {A Finite Sample Analysis of Distributional Temporal-Difference Learning with Linear Function Approximation},
  author = {Peng, Yang and Jin, Kaicheng and Zhang, Liangyu and Zhang, Zhihua},
  journal = {arXiv preprint arXiv:2502.14172},
  year = {2025}
}

@inproceedings{rowland2024nearminimax,
  title = {Near-Minimax-Optimal Distributional Reinforcement Learning with a Generative Model},
  author = {Rowland, Mark and Wenliang, Li Kevin and Munos, R{\'e}mi and Lyle, Clare and Tang, Yunhao and Dabney, Will},
  booktitle = {Advances in Neural Information Processing Systems},
  volume = {37},
  pages = {132774--132823},
  year = {2024},
  doi = {10.52202/079017-4221}
}

@article{zhang2025inference,
  title = {Estimation and Inference in Distributional Reinforcement Learning},
  author = {Zhang, Liangyu and Peng, Yang and Liang, Jiadong and Yang, Wenhao and Zhang, Zhihua},
  journal = {The Annals of Statistics},
  volume = {53},
  number = {5},
  pages = {1987--2011},
  year = {2025},
  doi = {10.1214/25-AOS2527}
}

@article{cheng2026quantile,
  title = {Statistical Efficiency and Inference of Quantile Distributional Reinforcement Learning},
  author = {Cheng, Zijie and Peng, Yang and Zhang, Zhihua},
  journal = {arXiv preprint arXiv:2607.08444},
  year = {2026}
}

@inproceedings{deng2025iterated,
  title = {Near-Optimal Sample Complexity for Iterated {CVaR} Reinforcement Learning with a Generative Model},
  author = {Deng, Zilong and Khan, Simon and Zou, Shaofeng},
  booktitle = {Proceedings of the 28th International Conference on Artificial Intelligence and Statistics},
  series = {Proceedings of Machine Learning Research},
  volume = {258},
  pages = {3907--3915},
  year = {2025},
  publisher = {PMLR}
}

@inproceedings{dai2023neyman,
  title = {{Clip-OGD}: An Experimental Design for Adaptive {Neyman} Allocation in Sequential Experiments},
  author = {Dai, Jessica and Gradu, Paula and Harshaw, Christopher},
  booktitle = {Advances in Neural Information Processing Systems},
  volume = {36},
  pages = {32235--32269},
  year = {2023}
}

@article{carpentier2015adaptive,
  title = {Adaptive Strategy for Stratified Monte Carlo Sampling},
  author = {Carpentier, Alexandra and Munos, R{\'e}mi and Antos, Andr{\'a}s},
  journal = {Journal of Machine Learning Research},
  volume = {16},
  number = {68},
  pages = {2231--2271},
  year = {2015}
}

@inproceedings{chandak2021universal,
  title = {Universal Off-Policy Evaluation},
  author = {Chandak, Yash and Niekum, Scott and Castro da Silva, Bruno and Learned-Miller, Erik and Brunskill, Emma and Thomas, Philip S.},
  booktitle = {Advances in Neural Information Processing Systems},
  volume = {34},
  pages = {27475--27490},
  year = {2021}
}

@inproceedings{wu2023distributional,
  title = {Distributional Offline Policy Evaluation with Predictive Error Guarantees},
  author = {Wu, Runzhe and Uehara, Masatoshi and Sun, Wen},
  booktitle = {Proceedings of the 40th International Conference on Machine Learning},
  series = {Proceedings of Machine Learning Research},
  volume = {202},
  pages = {37685--37712},
  year = {2023},
  publisher = {PMLR}
}

@inproceedings{hong2025bellman,
  title = {Distributional Off-policy Evaluation with Bellman Residual Minimization},
  author = {Hong, Sungee and Qi, Zhengling and Wong, Raymond K. W.},
  booktitle = {Proceedings of the 28th International Conference on Artificial Intelligence and Statistics},
  series = {Proceedings of Machine Learning Research},
  volume = {258},
  pages = {4006--4014},
  year = {2025},
  publisher = {PMLR}
}

@article{bauerle2011avar,
  title = {Markov Decision Processes with Average-Value-at-Risk Criteria},
  author = {B{\"a}uerle, Nicole and Ott, Jonathan},
  journal = {Mathematical Methods of Operations Research},
  volume = {74},
  number = {3},
  pages = {361--379},
  year = {2011},
  doi = {10.1007/s00186-011-0367-0}
}

@article{rockafellar2000optimization,
  title = {Optimization of Conditional Value-at-Risk},
  author = {Rockafellar, R. Tyrrell and Uryasev, Stanislav},
  journal = {The Journal of Risk},
  volume = {2},
  number = {3},
  pages = {21--41},
  year = {2000},
  doi = {10.21314/JOR.2000.038}
}

@inproceedings{tamar2015policy,
  title = {Policy Gradient for Coherent Risk Measures},
  author = {Tamar, Aviv and Chow, Yinlam and Ghavamzadeh, Mohammad and Mannor, Shie},
  booktitle = {Advances in Neural Information Processing Systems},
  volume = {28},
  pages = {1468--1476},
  year = {2015}
}

@article{bauerle2024overview,
  title = {Markov Decision Processes with Risk-Sensitive Criteria: An Overview},
  author = {B{\"a}uerle, Nicole and Ja{\'s}kiewicz, Anna},
  journal = {Mathematical Methods of Operations Research},
  volume = {99},
  number = {1},
  pages = {141--178},
  year = {2024},
  doi = {10.1007/s00186-024-00857-0}
}

@inproceedings{towers2024gymnasium,
  title = {Gymnasium: A Standard Interface for Reinforcement Learning Environments},
  author = {Towers, Mark and Kwiatkowski, Ariel and Terry, Jordan and Balis, John U. and De Cola, Gianluca and Deleu, Tristan and Goul{\~a}o, Manuel and Kallinteris, Andreas and Krimmel, Markus and KG, Arjun and Perez-Vicente, Rodrigo and Pierr{\'e}, Andrea and Schulhoff, Sander and Tai, Jun Jet and Tan, Hannah and Younis, Omar G.},
  booktitle = {Advances in Neural Information Processing Systems, Datasets and Benchmarks Track},
  volume = {38},
  pages = {163114--163129},
  year = {2025},
  doi = {10.52202/085713-4916}
}

@book{vanderVaart1998asymptotic,
  title = {Asymptotic Statistics},
  author = {van der Vaart, Aad W.},
  year = {1998},
  publisher = {Cambridge University Press},
  address = {Cambridge}
}

@inproceedings{mukherjee2022revar,
  title = {{ReVar}: Strengthening Policy Evaluation via Reduced Variance Sampling},
  author = {Mukherjee, Subhojyoti and Hanna, Josiah P. and Nowak, Robert D.},
  booktitle = {Proceedings of the Thirty-Eighth Conference on Uncertainty in Artificial Intelligence},
  series = {Proceedings of Machine Learning Research},
  volume = {180},
  pages = {1413--1422},
  year = {2022},
  publisher = {PMLR}
}

@inproceedings{mukherjee2024saver,
  title = {{SaVeR}: Optimal Data Collection Strategy for Safe Policy Evaluation in Tabular {MDP}},
  author = {Mukherjee, Subhojyoti and Hanna, Josiah P. and Nowak, Robert D.},
  booktitle = {Proceedings of the 41st International Conference on Machine Learning},
  series = {Proceedings of Machine Learning Research},
  volume = {235},
  pages = {36531--36576},
  year = {2024},
  publisher = {PMLR}
}

@article{hoeffding1963probability,
  title = {Probability Inequalities for Sums of Bounded Random Variables},
  author = {Hoeffding, Wassily},
  journal = {Journal of the American Statistical Association},
  volume = {58},
  number = {301},
  pages = {13--30},
  year = {1963},
  doi = {10.1080/01621459.1963.10500830}
}

@article{neyman1934two,
  title = {On the Two Different Aspects of the Representative Method: The Method of Stratified Sampling and the Method of Purposive Selection},
  author = {Neyman, Jerzy},
  journal = {Journal of the Royal Statistical Society},
  volume = {97},
  number = {4},
  pages = {558--606},
  year = {1934},
  doi = {10.1111/j.2397-2335.1934.tb04184.x}
}

@article{etore2010adaptive,
  title = {Adaptive Optimal Allocation in Stratified Sampling Methods},
  author = {{\'E}tor{\'e}, Pierre and Jourdain, Benjamin},
  journal = {Methodology and Computing in Applied Probability},
  volume = {12},
  number = {3},
  pages = {335--360},
  year = {2010},
  doi = {10.1007/s11009-008-9108-0}
}

@article{douglas2026logging,
  title = {Logging Policy Design for Off-Policy Evaluation},
  author = {Douglas, Connor and Persson, Joel and Provost, Foster},
  journal = {arXiv preprint arXiv:2605.15108},
  year = {2026}
}

@inproceedings{wang2024mmlupro,
  title = {{MMLU-Pro}: A More Robust and Challenging Multi-Task Language Understanding Benchmark},
  author = {Wang, Yubo and Ma, Xueguang and Zhang, Ge and Ni, Yuansheng and Chandra, Abhranil and Guo, Shiguang and Ren, Weiming and Arulraj, Aaran and He, Xuan and Jiang, Ziyan and Li, Tianle and Ku, Max and Wang, Kai and Zhuang, Alex and Fan, Rongqi and Yue, Xiang and Chen, Wenhu},
  booktitle = {Advances in Neural Information Processing Systems, Datasets and Benchmarks Track},
  volume = {37},
  pages = {95266--95290},
  year = {2024},
  doi = {10.52202/079017-3018}
}

@misc{qwen3instruct2507,
  title = {{Qwen3-4B-Instruct-2507} Model Card},
  author = {{Qwen Team}},
  year = {2025},
  howpublished = {Hugging Face model repository},
  url = {https://huggingface.co/Qwen/Qwen3-4B-Instruct-2507},
  note = {Accessed Aug. 21, 2026}
}

@misc{granite42,
  title = {{Granite-4.2-8B} Model Card},
  author = {{Granite Team, IBM}},
  year = {2026},
  howpublished = {Hugging Face model repository},
  url = {https://huggingface.co/ibm-granite/granite-4.2-8b},
  note = {Accessed Aug. 26, 2026}
}

@misc{mistralsmall24b,
  title = {{Mistral-Small-24B-Instruct-2501} Model Card},
  author = {{Mistral AI}},
  year = {2025},
  howpublished = {Hugging Face model repository},
  url = {https://huggingface.co/mistralai/Mistral-Small-24B-Instruct-2501},
  note = {Accessed Aug. 31, 2026}
}

@misc{qwen332b,
  title = {{Qwen3-32B} Model Card},
  author = {{Qwen Team}},
  year = {2025},
  howpublished = {Hugging Face model repository},
  url = {https://huggingface.co/Qwen/Qwen3-32B},
  note = {Accessed Aug. 31, 2026}
}

@misc{glm432b,
  title = {{GLM-4-32B-0414} Model Card},
  author = {{Zhipu AI}},
  year = {2025},
  howpublished = {Hugging Face model repository},
  url = {https://huggingface.co/zai-org/GLM-4-32B-0414},
  note = {Accessed Aug. 31, 2026}
}

@article{phi4mini,
  title = {Phi-4-Mini Technical Report: Compact yet Powerful Multimodal Language Models via Mixture-of-{LoRA}s},
  author = {{Microsoft} and Abouelenin, Abdelrahman and Ashfaq, Atabak and Atkinson, Adam and others},
  journal = {arXiv preprint arXiv:2503.01743},
  year = {2025}
}

@article{zhu2024uncertainty,
  title = {Uncertainty Quantification and Exploration for Reinforcement Learning},
  author = {Zhu, Yi and Dong, Jing and Lam, Henry},
  journal = {Operations Research},
  volume = {72},
  number = {4},
  pages = {1689--1709},
  year = {2024},
  doi = {10.1287/opre.2023.2436}
}

@article{peng2026onlineinference,
  title = {Online Inference in Distributional Temporal-Difference Learning},
  author = {Peng, Yang and Zhang, Liangyu},
  journal = {arXiv preprint arXiv:2608.14408},
  year = {2026}
}

@inproceedings{kossen2021active,
  title = {Active Testing: Sample-Efficient Model Evaluation},
  author = {Kossen, Jannik and Farquhar, Sebastian and Gal, Yarin and Rainforth, Tom},
  booktitle = {Proceedings of the 38th International Conference on Machine Learning},
  series = {Proceedings of Machine Learning Research},
  volume = {139},
  pages = {5753--5763},
  year = {2021},
  publisher = {PMLR}
}

@inproceedings{polo2024tinybenchmarks,
  title = {tiny{B}enchmarks: Evaluating {LLM}s with Fewer Examples},
  author = {Maia Polo, Felipe and Weber, Lucas and Choshen, Leshem and Sun, Yuekai and Xu, Gongjun and Yurochkin, Mikhail},
  booktitle = {Proceedings of the 41st International Conference on Machine Learning},
  series = {Proceedings of Machine Learning Research},
  volume = {235},
  pages = {34303--34326},
  year = {2024},
  publisher = {PMLR}
}

@inproceedings{nguyen2018active,
  title = {Active Testing: An Efficient and Robust Framework for Estimating Accuracy},
  author = {Nguyen, Phuc and Ramanan, Deva and Fowlkes, Charless},
  booktitle = {Proceedings of the 35th International Conference on Machine Learning},
  series = {Proceedings of Machine Learning Research},
  volume = {80},
  pages = {3759--3768},
  year = {2018},
  publisher = {PMLR}
}

@inproceedings{li2025activeeval,
  title = {Active Evaluation Acquisition for Efficient {LLM} Benchmarking},
  author = {Li, Yang and Ma, Jie and Ballesteros, Miguel and Benajiba, Yassine and Horwood, Graham},
  booktitle = {Proceedings of the 42nd International Conference on Machine Learning},
  series = {Proceedings of Machine Learning Research},
  volume = {267},
  pages = {35581--35602},
  year = {2025},
  publisher = {PMLR}
}

@article{brier1950,
  title = {Verification of Forecasts Expressed in Terms of Probability},
  author = {Brier, Glenn W.},
  journal = {Monthly Weather Review},
  volume = {78},
  number = {1},
  pages = {1--3},
  year = {1950}
}

@inproceedings{chen2021finqa,
  title = {Fin{QA}: A Dataset of Numerical Reasoning over Financial Data},
  author = {Chen, Zhiyu and Chen, Wenhu and Smiley, Charese and Shah, Sameena and Borova, Iana and Langdon, Dylan and Moussa, Reema and Beane, Matt and Huang, Ting-Hao and Routledge, Bryan and Wang, William Yang},
  booktitle = {Proceedings of the 2021 Conference on Empirical Methods in Natural Language Processing},
  pages = {3697--3711},
  year = {2021},
  publisher = {Association for Computational Linguistics},
  doi = {10.18653/v1/2021.emnlp-main.300}
}

@article{cai2022smallpilots,
  title = {On the Performance of the {Neyman} Allocation with Small Pilots},
  author = {Cai, Yong and Rafi, Ahnaf},
  journal = {arXiv preprint arXiv:2206.04643},
  year = {2022},
  note = {Version 4, revised June 2024}
}

@article{hesterberg1995weighted,
  title = {Weighted Average Importance Sampling and Defensive Mixture Distributions},
  author = {Hesterberg, Tim},
  journal = {Technometrics},
  volume = {37},
  number = {2},
  pages = {185--194},
  year = {1995},
  doi = {10.1080/00401706.1995.10484303}
}

@article{owen2000safe,
  title = {Safe and Effective Importance Sampling},
  author = {Owen, Art and Zhou, Yi},
  journal = {Journal of the American Statistical Association},
  volume = {95},
  number = {449},
  pages = {135--143},
  year = {2000},
  doi = {10.1080/01621459.2000.10473909}
}
\bibliographystyle{iclr2027_conference}

\clearpage
\appendix

\addtocontents{toc}{\protect\setcounter{tocdepth}{2}}

\section*{\textbf{Supplement: TIS for CVaR Policy Evaluation}}

Appendices~\ref{sec:supp-notation}--\ref{sec:supp-learning} develop the allocation theory from the stop-loss Bellman representation to fixed-design efficiency, learned oracle adaptation, and the anchored safeguard. The main results are proved as follows: Theorem~\ref{thm:allocation-clt} in Appendix~\ref{sec:supp-fixed-design}, Theorem~\ref{thm:oracle} in Appendix~\ref{sec:supp-efficiency}, Proposition~\ref{prop:controlled-separation} in Appendix~\ref{sec:supp-controlled-separation}, Theorem~\ref{thm:adaptive} in Appendix~\ref{sec:supp-adaptation}, the anchor's guarantees (Proposition~\ref{prop:anchored} and Corollary~\ref{cor:anchor-asymptotic}) in Appendix~\ref{sec:supp-anchor}, and the pilot-payoff condition (Equation~\ref{eq:pilot-payoff-main}) in Appendix~\ref{sec:pilot-payoff}. Appendix~\ref{sec:supp-evidence} closes the remaining links to the main paper: Appendix~\ref{sec:supp-representation} bounds categorical representation error; the experimental subsections give the protocols and evidence for \textbf{Q1--Q5}; and Appendix~\ref{sec:supp-rare-failure} proves Proposition~\ref{prop:rare-failure} and connects it to the tail--mean diagnostic. Section~\ref{sec:extended-related} positions these results against the closest foundations.

\textbf{Proof dependencies.} Lemma~\ref{lem:projection} justifies the stop-loss Bellman representation. Lemma~\ref{lem:fixed-design-moments} controls its empirical remainder and, together with the positive quantile margin, yields Theorem~\ref{thm:allocation-clt}. That theorem identifies the influence used in the efficiency calculation of Theorem~\ref{thm:oracle}. Lemma~\ref{lem:pilot-concentration} controls pilot estimates of the same influence; combined with Lemma~\ref{lem:fixed-design-moments}, it yields Theorem~\ref{thm:adaptive}. Proposition~\ref{prop:anchored} then transfers the allocation control to the anchored design in Corollary~\ref{cor:anchor-asymptotic}.

\begingroup\small
\let\tocaddvspace\addvspace
\renewcommand{\addvspace}[1]{\tocaddvspace{0.35em}}
\makeatletter
\@starttoc{toc}
\makeatother
\endgroup

\section{Notation and Assumptions}
\label{sec:supp-notation}

A query group identifies a sampled law; a Bellman row identifies one use of it. This section makes that distinction precise and connects the probability, measure, and shortfall notation used in the proofs.

\textbf{Queryable-group experiment.}
$\calG$ is a finite set of conditionally and independently queryable laws $P_g$, $G:=|\calG|$, and the objective is one fixed scalar functional of those laws. Section~\ref{sec:supp-generic-allocation} optimizes the variance once the groupwise influence scales are identified.

\textbf{Categorical Bellman conditions.}
We evaluate one root $(H,s_0)$ and let $\calB$ index the required layer-state rows. A group may feed several rows through known nonnegative mixture coefficients summing to one within each Bellman row. A known component, if present, is represented by a degenerate query law with zero influence. Sub-probability row sums arise in the continuation matrix from termination at nonpositive shifted thresholds. The return law still retains all probability mass. In the stationary model,
$g=(s,a)$, $W_g=(R,S')\sim P_{s,a}$, and the coefficient in row $(h,s)$ is $\pi_h(a\mid s)$. The ordered grid
$\calZ=\{z_0,\ldots,z_{K-1}\}\subset[0,H]$ contains both endpoints and has maximum gap $\Delta$. The proofs keep $H$, $|\mathcal S|$, $|\mathcal A|$, and $K$ fixed, assume bounded rewards, and impose the positive categorical quantile margin in Equation~\ref{eq:margin}.

\textbf{Untied special case.}
For independently queryable layer-specific laws, $g=b=(h,s)$ and $P_b$ is the policy-mixture law obtained by drawing $A\sim\pi_h(\cdot\mid s)$ before the transition. One law then feeds one row. Structurally unreachable groups may be removed in either model only when the declared support and fixed policy certify their irrelevance. The retained rows must be closed under every possible continuation transition in the declared outcome spaces. These fixed outcome spaces also define the nonparametric product model in the efficiency theorem.

\textbf{Equivalent probability and measure forms.}
The measure $\eta_h^*(s)=\sum_k p^*_{h,s,k}\delta_{z_k}$ is another notation for the grid probabilities in Section~\ref{sec:problem}; hats denote the empirical version. The categorical target and estimator are
\begin{equation}
 C_{\alpha,K}=\cvar_\alpha \big(\eta_H^*(s_0)\big),\qquad
 \widehat C_N=\cvar_\alpha \big(\widehat\eta_H(s_0)\big).
 \label{eq:target-estimator}
\end{equation}
For the shift-and-project matrix in Equation~\ref{eq:cat-dp},
$Q(R)_{kj}=\ell_k(R+z_j)$, where $\ell_k(y)$ is the categorical projection weight on $z_k$. Column sums are one, including at clipped endpoints. Thus the vector equation is equivalent to
\begin{equation}
 \eta_0^*(s)=\delta_0,\qquad
 \eta_h^*(s)=\sum_a\pi_h(a\mid s)\,
 \E_{(R,S')\sim P_{s,a}}\!\left[
 \PiC\big((f_R)_\#\eta_{h-1}^*(S')\big)\right],
 \quad f_R(z)=R+z.
 \label{eq:categorical-measure-backup}
\end{equation}
Here $(f_R)_\#\nu$ is the law of $R+Z$ for $Z\sim\nu$. Replacing each expectation by the same group's empirical mean gives the estimator in Equation~\ref{eq:cat-dp}; the measure notation describes the same computation.

\textbf{Stacked-coordinate convention and proof roadmap.}
We index a scalar shortfall (stop-loss) coordinate by $x=(h,s,q)$, write $e_x$ for its standard basis vector, and write $U_{h,s}=U_h(s,\cdot)\in\mathbb R^K$ for the threshold block. A vector such as $U$ stacks these coordinates; a row index $b=(h,s)$ selects one block. Sections~\ref{sec:supp-identities}--\ref{sec:supp-fixed-results} derive the influence and its sampling limits. Section~\ref{sec:supp-learning} controls the extra error from learning the allocation.

For an outcome $W=(R,S')$ and stacked stop-loss vector $U$, define the sampled stop-loss Bellman target at row $b=(h,s)$ by
\begin{equation}
 \big[T_b \big(W;U \big) \big](q)=
 \begin{cases}
 (q-R)_+, & h=1,\\
 \mathcal I_{\calZ} \big[U_{h-1} \big(S',\cdot \big) \big](q-R), & h>1,
 \end{cases}
 \label{eq:sample-target}
\end{equation}
where $\mathcal I_{\calZ}$ is linear interpolation on the threshold grid, extended by zero for nonpositive arguments. \rev{For group $g$, let $\mathcal T_g(W_g;U)$ be its full stacked affine contribution, including all known mixture coefficients. In the shared-kernel model, for $g=(s,a)$ and
$b=(h,s)$,
\[
 \big[\mathcal T_{s,a} \big(W_g;U \big) \big]_b
 =\pi_h(a\mid s)T_b \big(W_g;U \big),
\]
and rows at states other than $s$ are zero. Thus one $W_g$ contributes jointly to every required layer row at state $s$; this is why its layer effects must be summed before their variance is computed.} For a categorical continuation $Z\sim\eta_{h-1}^*(S')$, the stop-loss transform $v\mapsto\E[(v-Z)_+]$ is affine on every grid interval $[z_j,z_{j+1}]$, and its values at the knots are exactly $U_{h-1}^*(S',z_j)$. Hence, for $0<v\le H$, linear interpolation evaluates it exactly:
\[
 \mathcal I_{\calZ}[U_{h-1}^*(S',\cdot)](v)=\E[(v-Z)_+].
\]
Taking $v=q-R$ (with the stated zero extension when $q-R\le0$) gives the shortfall of $R+Z$ at threshold $q$. Lemma~\ref{lem:projection} shows that categorical projection preserves this shortfall for every grid threshold $q$. Therefore
\begin{equation}
 U^*=\sum_{g\in\calG}\E_{P_g}\big[\mathcal T_g \big(W_g;U^* \big) \big]
 =\mathbf b+MU^*.
 \label{eq:mean-target}
\end{equation}
\rev{Set $A:=I-M$. Because $q-R\le q\le H$, the finite-horizon target evaluates the interpolant only at or below the upper grid endpoint. The stated nonpositive extension is sufficient. In \eqref{eq:mean-target}, $\mathbf b$ collects terms independent of continuation values, including the $h=1$ rows. The matrix $M$ collects the known mixture coefficients, transition expectations, and interpolation weights multiplying lower-layer coordinates.}
The matrix $M$ lowers the layer and $M^H=0$. Given $n_g$ observations per group, replace each expectation by its empirical average. This is the stop-loss transform of the shared-kernel empirical categorical estimator. Denote the resulting affine map by $\widehat{\mathbf b}+\widehat M U$; again $\widehat M^H=0$ for every dataset.

\textbf{Explicit shared-kernel influence.}
Let $r_{h,s}$ be the adjoint block for layer--state row $(h,s)$. For $g=(s,a)$, the full-vector formula in Equation~\ref{eq:block-sigma} is
\begin{equation}
 \phi_{s,a}(W)=-\frac1\alpha\sum_{h=1}^H\pi_h(a\mid s)r_{h,s}^\top
 \Big\{T_h \big(W;U^* \big)-\E_{P_{s,a}}T_h \big(W;U^* \big)\Big\}.
 \label{eq:shared-influence-explicit}
\end{equation}
The variance of this sum includes all cross-layer covariances induced by reusing the same observation.

\section{Allocation and Bellman Identities}
\label{sec:supp-identities}
Equation~\ref{eq:oracle-value} follows from classical allocation once the scales are known. The identities below connect those scales to CVaR: projection preserves shortfalls, and Bellman propagation carries each group's error to the root.

\subsection{Minimizing the allocation variance}
\label{sec:supp-generic-allocation}

Once the influence expansion gives $V(w)=\sum_g\sigma_g^2/w_g$, classical Neyman allocation follows from Cauchy--Schwarz
\citep{neyman1934two}:
\[
 \left(\sum_g\sigma_g\right)^2
 =\left(\sum_g\frac{\sigma_g}{\sqrt{w_g}}\sqrt{w_g}\right)^2
 \le \sum_g\frac{\sigma_g^2}{w_g},\qquad \sum_g w_g=1.
\]
If all scales are positive, equality holds only at $w_g=\sigma_g/S_\sigma$, where $S_\sigma=\sum_g\sigma_g$. If $S_\sigma>0$ but some scales vanish, this boundary design gives the infimum over positive designs: the mixtures
$(1-\lambda)\sigma/S_\sigma+\lambda\mathbf1/G$ approach it as $\lambda\downarrow0$. If all scales vanish, every design has zero first-order variance. Sections~\ref{sec:supp-fixed-point} and~\ref{sec:supp-fixed-design} derive the influence expansion and its CLT and MSE limits for the categorical Bellman estimator.

\subsection{Projection and categorical CVaR identities}
\label{sec:supp-projection}

Section~\ref{sec:shared_kernel_tail_influence} relies on projection preserving shortfalls and a stable quantile making CVaR locally affine. For return $X$ with CDF $F_X$, lower-tail CVaR is
\begin{equation}
\cvar_\alpha(X)=\frac1\alpha\int_0^\alpha F_X^{-1}(u)\,du, \ \  \alpha\in(0,1), \ \ \text{with} \ \ F_X^{-1}(u)=\inf\{x:F_X(x)\ge u\}. 
 \label{eq:cvar-definition} 
\end{equation}

\begin{lemma}[Projection identity]
\label{lem:projection}
For every $q\in\calZ$ and every law $\nu$ supported on $[0,\infty)$,
\begin{equation}
 \int(q-z)_+\,d \big(\PiC\nu \big)(z)=\int(q-z)_+\,d\nu(z).
 \label{eq:projection-identity}
\end{equation}
\end{lemma}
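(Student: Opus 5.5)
The idea is to reduce the identity to a statement about a single point mass and then integrate. The projection $\PiC$ acts linearly on laws: for a point $y\ge0$, $\PiC\delta_y=\sum_k\ell_k(y)\delta_{z_k}$, and for general $\nu$, $\PiC\nu=\int\PiC\delta_y\,d\nu(y)$. Both sides of Equation~\ref{eq:projection-identity} are linear in $\nu$, and the integrand is nonnegative, so Tonelli applies. It therefore suffices to prove, for every $y\ge0$ and every grid threshold $q=z_j$,
\[
 \sum_k\ell_k(y)(z_j-z_k)_+=(z_j-y)_+ .
\]
Integrating this pointwise identity against $\nu$ gives the lemma. This includes the case where $\int(q-z)_+d\nu$ is finite, which holds automatically because $(q-z)_+\le q$ on $[0,\infty)$.

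I would check the pointwise identity in three cases for $y$.

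\emph{Case 1: $y\ge z_{K-1}=H$.} The mass is clipped to $z_{K-1}$, so the left side is $(z_j-z_{K-1})_+=0$. Since $z_j\le H\le y$, the right side is also $0$.

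\emph{Case 2: $y=z_i$ is a knot.} Here $\ell_k(y)=\ind\{k=i\}$ and the identity is trivial. (The lower clipping region is empty because $z_0=0$ and $y\ge0$.)

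\emph{Case 3: $z_i<y<z_{i+1}$.} The projection puts weight $\ell_{i+1}(y)=(y-z_i)/(z_{i+1}-z_i)$ on $z_{i+1}$ and $\ell_i(y)=1-\ell_{i+1}(y)$ on $z_i$. These weights preserve the mean: $\ell_i z_i+\ell_{i+1}z_{i+1}=y$. The function $f(z)=(z_j-z)_+$ is affine on $[z_i,z_{i+1}]$, because its only kink $z_j$ is a grid point and so is not interior to the cell. A mean-preserving two-point split of $y$ onto the endpoints of an interval where $f$ is affine therefore satisfies $\ell_if(z_i)+\ell_{i+1}f(z_{i+1})=f(y)$. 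This is exactly the needed identity.

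The key step is recognising that the restriction $q\in\calZ$ puts the kink of the shortfall at a knot. That makes the interpolating projection exact for this test function; for off-grid $q$ the identity fails by Jensen's inequality. The rest of the argument is bookkeeping:
\begin{itemize}
 \item the precise definition of $\ell_k$, including clipping at both endpoints, consistent with the column sums of $Q(R)$ being one;
 \item measurability of $y\mapsto\ell_k(y)$, which holds because each $\ell_k$ is piecewise linear and continuous;
 \item justifying the exchange of integral and sum, which Tonelli handles since all terms are nonnegative.
\end{itemize}
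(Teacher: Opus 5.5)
Your proof is correct and takes the same route as the paper. You reduce by linearity to a point mass $\delta_y$. You note that $z\mapsto(q-z)_+$ is affine on every grid cell because its kink $q$ is a knot, so the barycentric two-point split preserves its expectation. You handle $y\ge H$ by clipping, where both payoffs vanish. Your added bookkeeping (the knot case, Tonelli for exchanging sum and integral, and the remark on off-grid $q$) makes explicit what the paper compresses into the word ``linearity.''
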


\begin{proof}
\rev{For $y\in[z_j,z_{j+1}]$, categorical projection \citep{bellemare2017distributional,rowland2018analysis} sends $\delta_y$ to}
\[
 \frac{z_{j+1}-y}{z_{j+1}-z_j}\delta_{z_j}
 +\frac{y-z_j}{z_{j+1}-z_j}\delta_{z_{j+1}}.
\]
\rev{For a grid atom $q$, the map $z\mapsto(q-z)_+$ is affine on every grid cell. Its expectation is therefore preserved by the barycentric projection. If $y\ge H$, projection clips to $H\ge q$ and both the original and clipped payoffs are zero. Inputs below zero are excluded by the nonnegative-return model. Linearity proves the identity for every input law supported on $[0,\infty)$.}
\end{proof}

\rev{For a categorical law $p$ with CDF $F_k=\sum_{i\le k}p_i$, set $F_{-1}=0$, $k_\alpha=\min\{k:F_k\ge\alpha\}$, and $q_\alpha=z_{k_\alpha}$. The quantile-integral definition of lower-tail CVaR
\citep{rockafellar2000optimization} gives}
\begin{align}
 \cvar_\alpha(p)
 &=\frac{1}{\alpha}\left\{\sum_{i<k_\alpha}p_i z_i
 +(\alpha-F_{k_\alpha-1})z_{k_\alpha}\right\}\\
 &=z_{k_\alpha}-\frac1\alpha\sum_{i<k_\alpha}p_i(z_{k_\alpha}-z_i).
 \label{eq:categorical-cvar}
\end{align}
Thus CVaR is affine in a neighborhood where the VaR index is fixed.

For the population root law, let $F_k^*=\sum_{j\le k}p^*_{H,s_0,j}$, $F_{-1}^*=0$, and $k_\alpha=\min\{k:F_k^*\ge\alpha\}$. The positive quantile margin used in the main text is
\begin{equation}
 m_\alpha=\min\big\{\alpha-F^*_{k_\alpha-1},\ F^*_{k_\alpha}-\alpha\big\}>0.
 \label{eq:margin}
\end{equation}
It places $\alpha$ strictly inside the quantile atom's cumulative-mass interval. Throughout the finite-horizon proof, $C_{\alpha,K}=q_\alpha-\alpha^{-1}U^*_H(s_0,q_\alpha)$ is the population categorical CVaR; $\widehat C_N$ uses the empirical fixed point and its empirical quantile atom.

\textbf{Why a positive margin matters.}
For $H=1$, take $Z\in\{0,1\}$ with $p:=\Prob(Z=0)$. Then
$\cvar_\alpha(Z)=\max\{0,1-p/\alpha\}$. At $p=\alpha$ the margin vanishes, and for an empirical fraction $\widehat p$,
\[
 \sqrt n\,\widehat C_n
 =\max\{0,-\sqrt n(\widehat p-\alpha)/\alpha\}
 \Rightarrow\max\{0,-Z_0/\alpha\},\qquad
 Z_0\sim\mathcal N(0,\alpha(1-\alpha)).
\]
The limit has an atom at zero, so it is non-Gaussian. At $p>\alpha$ the margin is positive, CVaR is locally constant, and every influence is zero. The fixed-design theorem includes this degenerate limit. Oracle adaptation assumes $S_\sigma>0$.

\textbf{A shared-kernel example with nonzero covariance.}
Take one state, one action, $H=2$, rewards $R\sim\operatorname{Bernoulli}(p)$, grid $\{0,1,2\}$, and $p=1/2$. With $\alpha=3/5$, the root VaR is $1$ and the margin is $3/20$. Locally,
\[
 C_{\alpha,K}=1-\frac{(1-p)^2}{\alpha},\qquad
 \phi(R)=\frac{2(1-p)}{\alpha}(R-p)
          =\frac{R-1/2}{\alpha}.
\]
Each of the two layer contributions is $(R-1/2)/(2\alpha)$. Their sum has variance $1/(4\alpha^2)=25/36$, whereas deleting their covariance gives $1/(8\alpha^2)=25/72$. This demonstrates the variance error from treating a shared draw as two independent draws. There is only one group, so the example isolates covariance. The allocation itself is trivial.

\subsection{Exact empirical fixed-point expansion}
\label{sec:supp-fixed-point}
For Theorem~\ref{thm:allocation-clt}, we separate leading sampling error from feedback due to estimating the continuation model. Define the centered contribution of one group observation
\begin{equation}
 \Xi_g(W_g)=\mathcal T_g(W_g;U^*)
 -\E_{P_g}\mathcal T_g(W_g;U^*),
 \qquad \E_{P_g}\Xi_g=0,
\end{equation}
and the combined empirical Bellman error
\begin{equation}
 \widehat\xi=\sum_{g\in\calG}\frac1{n_g}
 \sum_{i=1}^{n_g}\Xi_g(W_{g,i}).
\end{equation}
\rev{By the affine form of $\mathcal T_g$, this same perturbation can be written as
\[
 \widehat\xi=\Big(\widehat{\mathbf b}-\mathbf b \Big)+ \Big(\widehat M-M \Big)U^*.
\]
Thus $\widehat\xi$ is the empirical Bellman error evaluated at the population fixed point. The resolvent below converts it into fixed-point estimation error.} Subtracting the population and empirical fixed-point equations gives the exact identity
\begin{equation}
 \widehat U-U^*= \Big(I-\widehat M \Big)^{-1}\widehat\xi.
 \label{eq:exact-expansion}
\end{equation}
Indeed,
\begin{align*}
 \Big(I-\widehat M \Big) \Big(\widehat U-U^* \Big)
 &=\widehat{\mathbf b}+\widehat M U^*-U^*\\
 &= \Big(\widehat{\mathbf b}-\mathbf b \Big)+ \Big(\widehat M-M \Big)U^*=\widehat\xi.
\end{align*}
Both inverses are finite sums:
\[
 \Big(I-\widehat M \Big)^{-1}=\sum_{t=0}^{H-1}\widehat M^t,
 \qquad \Big(I-M \Big)^{-1}=\sum_{t=0}^{H-1}M^t.
\]
Every row of $M$ and $\widehat M$ is sub-probability, so their induced infinity norms are at most one and both inverse norms are at most $H$.

The resolvent identity gives
\begin{align}
 \widehat U-U^*
 &=A^{-1}\widehat\xi+R_N,\label{eq:first-order}\\
R_N&=A^{-1} \Big(\widehat M-M \Big) \Big(I-\widehat M \Big)^{-1}\widehat\xi.
 \label{eq:remainder}
\end{align}
\rev{In the finite-horizon bounds below, $\|\cdot\|$ denotes the vector infinity norm or its induced matrix infinity norm.} With $n_{\min}=\min_g n_g$, finite dimension and bounded observations imply
\begin{equation}
 \left\|\widehat M-M\right\|=O_p\left(n_{\min}^{-1/2}\right),\quad
 \left\|\widehat\xi\right\|=O_p\left(n_{\min}^{-1/2}\right),\quad
 \left\|R_N\right\|=O_p\left(n_{\min}^{-1}\right).
 \label{eq:remainder-rate}
\end{equation}

\begin{lemma}[Uniform moment control]
\label{lem:fixed-design-moments}
For every fixed integer $p\ge2$, there is a finite constant $C_p$, depending only on the fixed model dimensions, grid, horizon, and $p$, such that
\begin{align}
 \E\left\|\widehat M-M\right\|^p+\E\left\|\widehat\xi\right\|^p
 &\le C_p n_{\min}^{-p/2},\nonumber\\
 \E\left\|R_N\right\|^p&\le C_p n_{\min}^{-p}.
 \label{eq:fixed-design-moments}
\end{align}
In particular, under a stable allocation,
$N\E\|R_N\|^2\to0$.
\end{lemma}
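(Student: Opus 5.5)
The plan is to bound each of the three quantities separately, using only boundedness, finite dimension, and the nilpotent structure $M^H=\widehat M^H=0$.

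\emph{Moments of $\widehat M-M$ and $\widehat\xi$.} Every entry of $\widehat M-M$ is a finite sum over groups of terms $c\,(n_g^{-1}\sum_i Y_{g,i}-\E Y_g)$. Here $c$ is a known mixture coefficient and $Y_{g,i}\in[0,1]$ is an interpolation weight or indicator evaluated at the observation $W_{g,i}$. The same holds for $\widehat\xi$: its coordinates are centered averages of $\Xi_g$, whose entries are bounded by $H$ because shortfalls lie in $[0,H]$. For a centered i.i.d. average of variables bounded by $B$, the Marcinkiewicz--Zygmund (or Rosenthal) inequality gives $\E|\bar Y-\E Y|^p\le c_pB^p n_g^{-p/2}$. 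Because the infinity norm of a fixed-dimensional vector or matrix is at most a fixed multiple of the maximum absolute entry, we can bound $\E\max_{\text{entries}}|\cdot|^p$ by the sum over the finitely many entries. This gives $\E\|\widehat M-M\|^p+\E\|\widehat\xi\|^p\le C_pn_{\min}^{-p/2}$, with $C_p$ depending only on $p$, $H$, $|\mathcal S|$, $|\mathcal A|$, $K$, and $G$.

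\emph{Moments of $R_N$.} Use the deterministic bounds from the exact expansion: $\|A^{-1}\|\le H$ and $\|(I-\widehat M)^{-1}\|\le H$ hold for every dataset, because $\widehat M$ is row-substochastic and nilpotent of order $H$. Hence $\|R_N\|\le H^2\|\widehat M-M\|\,\|\widehat\xi\|$ pointwise. Cauchy--Schwarz then gives
\[
\E\|R_N\|^p\le H^{2p}\big(\E\|\widehat M-M\|^{2p}\big)^{1/2}\big(\E\|\widehat\xi\|^{2p}\big)^{1/2}.
\]
Applying the first step with exponent $2p$ bounds each factor by $C_{2p}^{1/2}n_{\min}^{-p/2}$, so the product is $O(n_{\min}^{-p})$, and we redefine $C_p$ to absorb this constant. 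The key point is that the resolvent bound is almost sure rather than merely in probability, so no truncation event is needed.

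\emph{Consequence for stable allocations.} If $n_g/N\to w_g>0$, then $n_{\min}\ge cN$ for some $c>0$ and all large $N$. Therefore $N\E\|R_N\|^2\le C_2N n_{\min}^{-2}=O(N^{-1})\to0$.

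The only delicate point is confirming that $\widehat M$ stays row-substochastic for every sample. This holds because the empirical mixture of interpolation weights is a convex combination of nonnegative rows each summing to at most one, with mass lost only through zero extension. I expect this check to be routine, so I do not anticipate a serious obstacle.
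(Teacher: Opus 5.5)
Your proposal is correct and follows the paper's proof essentially step for step. The shared steps are: entrywise $p$-th moment bounds for finitely many bounded centered averages (you invoke Marcinkiewicz--Zygmund/Rosenthal where the paper integrates Hoeffding's tail bound, an interchangeable choice); the almost-sure resolvent bounds $\|A^{-1}\|,\|(I-\widehat M)^{-1}\|\le H$, giving $\|R_N\|\le H^2\|\widehat M-M\|\,\|\widehat\xi\|$; Cauchy--Schwarz with the $2p$-th moment bounds; and the conclusion $N\E\|R_N\|^2=O(N^{-1})$ under a stable allocation.
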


\begin{proof}
\textbf{Idea.} Both empirical coefficient error and Bellman error are bounded sample averages, hence of order $n_{\min}^{-1/2}$. The fixed-point remainder is their product, so it is one order smaller. The deterministic resolvent bound prevents the recursion from amplifying these rates.

Each entry of $\widehat M-M$ and $\widehat\xi$ is a finite sum, over groups, of centered averages of bounded random variables. For a centered average $\bar X_g$ of $n_g$ independent bounded variables, Hoeffding's inequality \citep{hoeffding1963probability} gives $\Prob(|\bar X_g|>t)\le2e^{-cn_gt^2}$. Integrating this tail via $\E|\bar X_g|^p=\int_0^\infty p t^{p-1}\Prob(|\bar X_g|>t)\,dt$ yields $\E|\bar X_g|^p\le C_p n_g^{-p/2}$. The number of groups and matrix entries is fixed, so norm equivalence and a finite-sum inequality give the first line of \eqref{eq:fixed-design-moments}. Nilpotence and the sub-probability row structure give the deterministic bounds $\|A^{-1}\|\le H$ and $\|(I-\widehat M)^{-1}\|\le H$. Hence
\[
 \left\|R_N\right\|\le H^2\left\|\widehat M-M\right\|\,\left\|\widehat\xi\right\|.
\]
Cauchy--Schwarz with the preceding $2p$-moment bounds proves the second line. If $n_{\min}\asymp N$, it yields $N\E\|R_N\|^2=O(N^{-1})$.
\end{proof}

\section{Fixed-Design Limits and Structural Interpretation}
\label{sec:supp-fixed-results}
Theorems~\ref{thm:allocation-clt} and~\ref{thm:oracle} turn the influence calculation into an attainable accuracy benchmark for fixed query shares. The untied factorization explains the allocation ablations; Proposition~\ref{prop:controlled-separation} then shows why visitation and reward moments cannot determine the best tail allocation.

\subsection{Fixed-design CLT and normalized MSE}
\label{sec:supp-fixed-design}

\begin{proof}[Proof of Theorem~\ref{thm:allocation-clt}]
\textbf{Idea.} There are three issues to separate: linearize the empirical Bellman fixed point, show the empirical VaR atom stays on the same categorical cell, and then transfer the groupwise CLT and second moment through that locally affine CVaR readout.

\emph{Step 1: asymptotic linearity on the correct VaR cell.}
Let $E_N$ be the event that the empirical and population categorical VaR indices agree, set $x_0=(H,s_0,q_\alpha)$ and $r^\top=e_{x_0}^\top A^{-1}$, and define
\begin{equation}
 \phi_g(W_g):=-\alpha^{-1}r^\top\Xi_g(W_g),\qquad
 L_N:=\sum_g\frac1{n_g}\sum_{i=1}^{n_g}\phi_g(W_{g,i}).
 \label{eq:influence}
\end{equation}
Then $\E_{P_g}\phi_g=0$ and $\E_{P_g}\phi_g^2=\sigma_g^2$. On $E_N$,
\eqref{eq:categorical-cvar} and \eqref{eq:first-order} give
\begin{equation}
 \widehat C_N-C_{\alpha,K}=L_N+\widetilde R_N,\qquad
 \widetilde R_N:=-\alpha^{-1}e_{x_0}^\top R_N.
 \label{eq:asymptotic-linear}
\end{equation}
Under a stable allocation, $n_{\min}:=\min_g n_g\asymp N$, so Lemma~\ref{lem:fixed-design-moments} gives $\sqrt N\widetilde R_N=o_p(1)$. For each fixed group,
\[
 \frac{\sqrt N}{n_g}\sum_{i=1}^{n_g}\phi_g(W_{g,i})
 =\sqrt{\frac{N}{n_g}}\,\frac1{\sqrt{n_g}}\sum_{i=1}^{n_g}\phi_g(W_{g,i})
 \Rightarrow \mathcal N\!\left(0,\frac{\sigma_g^2}{w_g}\right),
\]
by the ordinary CLT and $n_g/N\to w_g>0$. The groups are independent and their number is fixed, so the vector of group terms converges jointly to independent Gaussian limits. Summing the coordinates gives
\begin{equation}
 \sqrt N L_N\Rightarrow\mathcal N(0,V(w)),\qquad
 V(w):=\sum_g\frac{\sigma_g^2}{w_g}.
\end{equation}

\emph{Step 2: the VaR cell is correct with exponentially high probability.}
Each entry of $\widehat{\mathbf b}-\mathbf b$ and $\widehat M-M$ is a finite sum of averages of bounded variables. Hoeffding's inequality \citep{hoeffding1963probability} and a union bound therefore give constants $c_1,c_2>0$ such that
\begin{equation}
 \Prob\left(\|\widehat{\mathbf b}-\mathbf b\|_\infty +
  \|\widehat M-M\|_\infty>t\right)
 \le c_1e^{-c_2n_{\min}t^2},\qquad t>0.
 \label{eq:operator-concentration}
\end{equation}
For a categorical law with stop-loss vector $U$ and CDF $F$,
\[
 F_{j-1}=\frac{U(z_j)-U(z_{j-1})}{z_j-z_{j-1}}
 \quad(j=1,\ldots,K-1),\qquad F_{K-1}=1.
\]
Because every shortfall coordinate satisfies $0\le U_h^*(s,q)\le q\le H$, we have $\|U^*\|_\infty\le H$. Thus, with $\delta_{\min}:=\min_{j<K-1}(z_{j+1}-z_j)>0$, the exact fixed-point identity and $\|(I-\widehat M)^{-1}\|_\infty\le H$ imply
\begin{align}
 \|\widehat U-U^*\|_\infty
 &\le H\{\|\widehat{\mathbf b}-\mathbf b\|_\infty +
 H\|\widehat M-M\|_\infty\},\nonumber\\
 \|\widehat F-F^*\|_\infty
 &\le\frac{2H}{\delta_{\min}}
 \{\|\widehat{\mathbf b}-\mathbf b\|_\infty +
  H\|\widehat M-M\|_\infty\}.
\end{align}
Combining this bound with \eqref{eq:operator-concentration} gives
\begin{equation}
 \Prob(E_N^c)
 \le\Prob(\|\widehat F-F^*\|_\infty\ge m_\alpha)
 \le c_3e^{-c_4n_{\min}m_\alpha^2}.
 \label{eq:index-concentration}
\end{equation}
Indeed, an error smaller than $m_\alpha$ preserves $\widehat F_{k_\alpha-1}<\alpha<\widehat F_{k_\alpha}$; at the endpoints use $\widehat F_{-1}=0$ and $\widehat F_{K-1}=1$.

\emph{Step 3: normalized MSE and transfer off the good event.} 
For the second-moment claim, centering and independence give
\begin{equation}
 N\E L_N^2=N\sum_g\frac{\sigma_g^2}{n_g}\longrightarrow V(w).
 \label{eq:fixed-leading-mse}
\end{equation}
Lemma~\ref{lem:fixed-design-moments} also gives
$N\E\widetilde R_N^2=O(N^{-1})$ and, by Cauchy--Schwarz,
$N|\E[L_N\widetilde R_N]|=o(1)$. Hence
$N\E(L_N+\widetilde R_N)^2\to V(w)$.
All three variables $\widehat C_N-C_{\alpha,K}$, $L_N$, and $\widetilde R_N$ are uniformly bounded; for the last, use \eqref{eq:remainder} and the deterministic resolvent bounds. Therefore
\begin{equation}
 N\E\!\left[
 \{(\widehat C_N-C_{\alpha,K})^2+
 (L_N+\widetilde R_N)^2\}\ind_{E_N^c}\right]
 \le C N\Prob(E_N^c)\longrightarrow0.
 \label{eq:wrong-cell-mse}
\end{equation}
Equation~\ref{eq:asymptotic-linear} holds on $E_N$, so the last two displays prove the normalized-MSE limit. Since $\Prob(E_N^c)\to0$ and $\sqrt N\widetilde R_N=o_p(1)$, they also transfer the CLT for $L_N$ to $\widehat C_N$.
\end{proof}

\subsection{Semiparametric efficiency}
\label{sec:supp-efficiency}

\begin{proof}[Proof of Theorem~\ref{thm:oracle}]
\textbf{Idea.} Differentiate the target along arbitrary groupwise score directions. The resulting pathwise derivative is represented by $\phi_g$ in each group. Under sampling fraction $w_g$, the product-experiment canonical gradient is therefore $\phi_g/w_g$, whose squared norm is exactly $V(w)$. The asymptotic-linear expansion from Theorem~\ref{thm:allocation-clt}, together with Le Cam's third lemma, then shows that the plug-in estimator is regular under local alternatives and attains this bound.

\emph{Step 1: pathwise derivative.}
Fix the population collection $P=(P_g)_{g\in\calG}$ and let $\mathcal P=\bigotimes_{g\in\calG}\mathcal P_g$, where each $\mathcal P_g$ is the nonparametric model on group $g$'s fixed bounded outcome space. Write $L_0^2(P_g)$ for the square-integrable, mean-zero functions under $P_g$. For a bounded $s_g\in L_0^2(P_g)$ and sufficiently small $|t|$, $dP_{g,t}=(1+t s_g)dP_g$ is a valid submodel with score $s_g$; bounded mean-zero scores are dense in $L_0^2(P_g)$ \citep[Chapters~7 and~25]{vanderVaart1998asymptotic}. It therefore suffices to derive the gradient first for bounded scores; because all Bellman contributions and the resulting $\phi_g$ are bounded, the derivative extends continuously to the $L_0^2(P_g)$ closure.

Write the affine group contribution as $\mathcal T_g(W;U)=a_g(W)+B_g(W)U$. Along simultaneous differentiable-in-quadratic-mean paths with scores $s_g\in L_0^2(P_g)$, differentiating \eqref{eq:mean-target} at $t=0$ gives
\[
 \dot U_s
 =\sum_g\E_{P_g}\!\left[\mathcal T_g(W_g;U^*)s_g(W_g)\right]
   +M\dot U_s.
\]
Because $\E_{P_g}s_g=0$, the expectation in brackets equals
$\E_{P_g}[\Xi_g(W_g)s_g(W_g)]$. Hence
\begin{equation}
 A\dot U_s=v_s,
 \qquad
 v_s:=\sum_g\E_{P_g}[\Xi_g(W_g)s_g(W_g)].
 \label{eq:path-derivative}
\end{equation}
The positive margin fixes the root VaR index on a neighborhood of the population law, so the CVaR readout is locally affine. Therefore
\begin{align}
 \dot C_s
 &=-\frac1\alpha e_{x_0}^\top\dot U_s
 =-\frac1\alpha r^\top v_s\nonumber\\
 &=\sum_g\E_{P_g}[\phi_g(W_g)s_g(W_g)].
 \label{eq:target-derivative}
\end{align}
Thus $C_{\alpha,K}$ is pathwise differentiable at the population law with groupwise derivative representers $\phi_g$.

\emph{Step 2: canonical gradient under the sampling design.}
For the deterministic allocation, applying local asymptotic normality groupwise to $P_{g,t/\sqrt N}^{\otimes n_g}$ and summing the log-likelihood ratios gives a product LAN experiment \citep[Theorem~7.2]{vanderVaart1998asymptotic} with tangent inner product
\[
 \langle s,\widetilde s\rangle_w
 :=\sum_gw_g\E_{P_g}[s_g\widetilde s_g].
\]
By \eqref{eq:target-derivative}, its Riesz representer is
$\psi_{w,g}=\phi_g/w_g$, because
$\langle\psi_w,s\rangle_w=\sum_g\E_{P_g}[\phi_gs_g]=\dot C_s$.
Its squared norm is
\begin{equation}
 \|\psi_w\|_w^2
 =\sum_gw_g\E_{P_g}\!\left[\left(\frac{\phi_g}{w_g}\right)^2\right]
 =\sum_g\frac{\sigma_g^2}{w_g}=V(w).
 \label{eq:efficiency-norm}
\end{equation}

\emph{Step 3: regularity and attainment under local alternatives.}
Theorem~\ref{thm:allocation-clt} gives the baseline asymptotic-linear representation
\[
 \sqrt N\{\widehat C_N-C_{\alpha,K}(P)\}
 =\sum_g\frac{\sqrt N}{n_g}\sum_{i=1}^{n_g}\phi_g(W_{g,i})+o_P(1)
 =\frac1{\sqrt N}\sum_g\sum_{i=1}^{n_g}\psi_{w,g}(W_{g,i})+o_P(1).
\]
For the last relation, write $N/n_g=w_g^{-1}+o(1)$. Since, for each fixed group,
$N^{-1/2}\sum_{i=1}^{n_g}\phi_g(W_{g,i})=O_P(1)$, replacing $N/n_g$ by $1/w_g$ changes the finite sum by only $o_P(1)$. Fix a DQM direction $s=(s_g)$ and a scalar $t$. Write $P_{t/\sqrt N}:=(P_{g,t/\sqrt N})_{g\in\calG}$ for the local collection of group laws and
$P_{N,t}:=\bigotimes_g P_{g,t/\sqrt N}^{\otimes n_g}$ for the corresponding sample law. LAN implies that $P_{N,t}$ is contiguous to the baseline sample law, so the displayed $o_P(1)$ term is also $o_{P_{N,t}}(1)$.

By Step~1,
\[
 \sqrt N\{C_{\alpha,K}(P_{t/\sqrt N})-C_{\alpha,K}(P)\}
 \longrightarrow t\dot C_s.
\]
Under the baseline law, the joint CLT for the influence sum and the LAN central sequence has covariance
$\langle\psi_w,s\rangle_w=\dot C_s$. Le Cam's third lemma \citep[Chapter~6]{vanderVaart1998asymptotic} therefore gives, under $P_{N,t}$,
\[
 \frac1{\sqrt N}\sum_g\sum_{i=1}^{n_g}\psi_{w,g}(W_{g,i})
 \Rightarrow \mathcal N(t\dot C_s,V(w)).
\]
Subtracting the local target shift yields
\begin{equation}
 \sqrt N\{\widehat C_N-C_{\alpha,K}(P_{t/\sqrt N})\}
 \Rightarrow \mathcal N(0,V(w))
 \qquad\text{under }P_{N,t}.
 \label{eq:efficiency-regularity}
\end{equation}
This limit is the same for every fixed $t$ and DQM direction, which is the required regularity. Hence the plug-in estimator attains the canonical-gradient variance $V(w)$.

\emph{Step 4: lower bound.}
The product tangent space is linear and hence a convex cone. The convolution theorem therefore implies that the limit law of any regular estimator is the convolution of $\mathcal N(0,V(w))$ with an independent remainder; in particular, whenever its second moment is finite, its asymptotic variance is at least $V(w)$ \citep[Theorem~25.20]{vanderVaart1998asymptotic}. The local asymptotic minimax theorem gives the corresponding lower bound $V(w)$ for local squared-error risk \citep[Theorem~25.21]{vanderVaart1998asymptotic}. The plug-in estimator has the Gaussian limit in \eqref{eq:efficiency-regularity}, so it attains both bounds. Thus it is semiparametrically efficient for the fixed design.
\end{proof}

Section~\ref{sec:supp-generic-allocation} minimizes this bound, including zero-scale groups.

\subsection{Untied factorization for the allocation ablations}
\label{sec:supp-untied-factorization}

The ablations in Figure~\ref{fig:controlled-mrp} separate propagation from local variability. When one independent law feeds each layer--state row $b=(h,s)$, its centered innovation is $\zeta_b(W)=T_b(W;U^*)-U_b^*$. Thus
\[
 \phi_b(W)=-\alpha^{-1}r_b^\top\zeta_b(W).
\]
The adjoint is nonnegative because
$r^\top=e_{x_0}^\top\sum_{j=0}^{H-1}M^j$ and $M\ge0$.
Set $d_b=\mathbf1^\top r_b$. If $d_b>0$, normalize its threshold weights as $\rho_b=r_b/d_b$ and define
\[
 \tau_b^2=\alpha^{-2}\var_{P_b}(\rho_b^\top\zeta_b(W)),
 \qquad \sigma_b=d_b\tau_b.
\]
If $d_b=0$, then $r_b=0$ and $\sigma_b=0$; set $\tau_b=0$.
This is the factorization used by the reachability-only and local-scale-only ablations. The quantity $d_b$ is adjoint mass: it propagates visitation together with the remaining shortfall threshold, so it need not equal ordinary state occupancy. The factor $\tau_b$ measures variability after averaging over those threshold weights. The oracle uses their product.

\subsection{Controlled separation with fixed visitation, moments, and root law}
\label{sec:supp-controlled-separation}

Proposition~\ref{prop:controlled-separation} isolates tail information missing from visitation and reward moments while preserving the root law. Section~\ref{sec:supp-controlled-separation-runs} tests whether a charged pilot learns this population signal.

\begin{proof}[Proof of Proposition~\ref{prop:controlled-separation}]
Consider one initial state, $H=1$, $G\ge2$ actions with probabilities $1/G$,
and a fixed terminal next state. Each action's reward law is independently queryable. Set $\alpha=1/10$ and use the grid
$\{0,1/3,5/9,2/3,1\}$, which contains every reward below and makes projection exact. Define two reward laws:
\begin{equation}
 P^{\rm A}:\quad \Prob(R=0)=\frac1{10},
 \quad
 \Prob(R=5/9)=\frac9{10};
 \qquad
 P^{\rm B}:
 \quad 
 \Prob(R=1/3)=\Prob(R=2/3)=\frac12.
 \label{eq:separation-laws}
\end{equation}
Both have $\E R=1/2$ and $\E R^2=5/18$, hence
$\var(R)=1/36$. These are properties of the construction; estimation still uses the unrestricted group-law model. Assign $P^{\rm A}$ to group 1 and $P^{\rm B}$ to every other group; denote these separated laws by $P_g^{\rm sep}$.

The root law is $\overline P=G^{-1}P^{\rm A}+(1-G^{-1})P^{\rm B}$. Its mass strictly below $1/3$ is $1/(10G)$ and its cumulative mass at $1/3$ is
$1/2-2/(5G)$. Thus, for every $G\ge2$,
\begin{equation}
 q_\alpha=\frac13,
 \qquad
 m_\alpha=\frac1{10}\left(1-\frac1G\right)>0,\qquad
 C_{\alpha,K}=\frac{G-1}{3G}.
 \label{eq:separation-target}
\end{equation}
Let $L(R)=(1/3-R)_+$. The root is the known uniform mixture of the group laws, so its group influence is
\begin{equation}
 \phi_g(R)=-\frac1{G\alpha}\{L(R)-\E_{P_g}L(R)\}.
 \label{eq:separation-influence}
\end{equation}
Under $P^{\rm A}$, $L$ is $1/3$ with probability $1/10$ and zero otherwise, giving $\var(L)=1/100$. Under $P^{\rm B}$, $L$ is identically zero. Consequently $\sigma_1=1/G$ and $\sigma_g=0$ for $g>1$. The occupancy design is uniform. The group influence for estimating the mean is $(R-1/2)/G$, with standard deviation $1/(6G)$ in every group, so the mean-optimal design is also uniform. Evaluating CVaR variance under either design gives $V=G\sum_g\sigma_g^2=1/G$, whereas $V^*=(\sum_g\sigma_g)^2=1/G^2$.
The tail oracle is a boundary design; its value is approached by positive designs as the floor vanishes.

For the fixed-root-law family, set
\begin{equation}
 P_g(t)=(1-t)\overline P+tP_g^{\rm sep},\qquad 0\le t\le1.
 \label{eq:separation-family}
\end{equation}
Every component being mixed has the same first two reward moments, so each $P_g(t)$ retains mean $1/2$ and variance $1/36$. Moreover, $G^{-1}\sum_g P_g(t)=\overline P$ for every $t$: the full root return law, CVaR, and quantile margin in Equation~\ref{eq:separation-target} stay fixed. Writing $\theta_g(t)=P_g(t)\{R=0\}$ gives
\begin{equation}
 \theta_1(t)=\frac{1+(G-1)t}{10G},\qquad
 \theta_g(t)=\frac{1-t}{10G}\ (g>1),\qquad
 \sigma_g(t)=\frac{10}{3G}\sqrt{\theta_g(t)(1-\theta_g(t))}.
 \label{eq:separation-scales}
\end{equation}
For uniform occupancy, the variance ratio is explicitly
\[
 \frac{V_{\rm occupancy}(t)}{V^*(t)}
 =\frac{G\sum_g\sigma_g(t)^2}{\left(\sum_g\sigma_g(t)\right)^2}.
\]
Cauchy--Schwarz gives the lower bound $1$, while nonnegativity gives $\sum_g\sigma_g(t)^2\le(\sum_g\sigma_g(t))^2$ and hence the upper bound $G$. At $t=0$ all scales are equal and positive, so the ratio is one. At $t=1$ it is $G$, as above. For all intermediate $t$ the ratio is continuous. It therefore attains every value between the two endpoints. Every scale is positive when $t<1$, so ratios arbitrarily close to $G$ also occur with all groups influential. This proves the family without a change in visitation, conditional moments, or root return law.
\end{proof}

\textbf{Rollouts and the ideal occupancy anchor.}
In this one-step example one rollout and one conditional observation each cost one transition query. For complete rollouts, the shortfall indicator has probability $1/(10G)$ under the fixed root law. The empirical CVaR therefore
has leading variance
\begin{equation}
 V_{\rm rollout}
 =\frac{(1/3)^2}{\alpha^2}\frac1{10G}
         \left(1-\frac1{10G}\right)
 =\frac{10G-1}{9G^2},
 \label{eq:separation-rollout}
\end{equation}
which does not change with $t$. At $t=1$, the equal mixture of the population tail oracle and uniform occupancy gives group 1 weight
$(G+1)/(2G)$ and hence $V_{\rm ideal\ anchor}=2/[G(G+1)]$.
For $G=10$, the variance constants for occupancy, rollouts, the tail oracle, and this ideal anchor are respectively $.1$, $.11$, $.01$, and $1/55$. These are leading-variance constants at the separated endpoint, before pilot cost, exploration, and rounding. For intermediate $t$, use Equation~\ref{eq:separation-scales}; the occupancy variance itself changes with $t$ even though the root law is fixed.

\textbf{From population allocation to pilot learning.}
At $t=1$, $m$ independent pilot observations from group 1 miss its zero reward with probability $(9/10)^m$. This is a missed-outcome probability, not the probability of \tis{} failure or of a wrong pilot quantile. Section~\ref{sec:supp-controlled-separation-runs} reports learned performance with all queries charged and the prescribed uniform fallback.

\section{Learned Allocation}
\label{sec:supp-learning}
\begin{algorithm}[htbp]
\small
\caption{Tail-Influence Sampling (\tis{}); pilot cost is included in $N$}
\label{alg:tis}
\begin{algorithmic}[1]
\REQUIRE query groups $\calG$, budget $N$, grid $\calZ$, tail level $\alpha$;
pilot size $m_N\ge2$, floor $0<\lambda_N<1$, $N-Gm_N\ge2G$
\STATE Draw $m_N$ independent pilot samples per group; fit empirical conditional laws.
\STATE Compute the pilot root quantile, shortfalls, and adjoint via Equations~\ref{eq:cat-dp} and~\ref{eq:affine-system}.
\STATE Score each pilot draw and compute $\widehat\sigma_g$ by Equation~\ref{eq:pilot-scales-main}.
\STATE Form $\widehat w$ by Equation~\ref{eq:floored-design}; if all scales vanish, use $\widehat w_g=1/G$.
\STATE Set $n_g=2+\operatorname{LRM}_g(N-Gm_N-2G,\widehat w)$.
\STATE Draw $n_g$ fresh main samples per group; discard the pilot.
\STATE Evaluate Equation~\ref{eq:cat-dp} with the main empirical laws and return the root CVaR $\widehat C_N^{\tis}$.
\end{algorithmic}
\end{algorithm}
Algorithm~\ref{alg:tis} learns allocation scales using a pilot, as in adaptive stratified sampling \citep{etore2010adaptive,carpentier2015adaptive}, but also estimates the Bellman model and quantile. We control these errors to prove oracle adaptation, then establish the anchor's separate safeguard. Only fresh main samples form the final estimate; $G=|\calG|$.

\subsection{Pilot-scale consistency and lower-tail control}
\label{sec:supp-pilot}
We use the centered form of Equation~\ref{eq:pilot-scales-main}. For $m\ge2$ pilot draws per group, set $\widehat x_0=(H,s_0,\widehat q_\alpha^{(0)})$, the root coordinate at the pilot quantile. Define
\begin{align}
 \widehat r^{(0)\top}
 &:=e_{\widehat x_0}^\top(I-\widehat M^{(0)})^{-1},\nonumber\\
 \widehat\Xi_{g,i}^{(0)}
 &=\mathcal T_g(W_{g,i};\widehat U^{(0)})
 -\frac1m\sum_{j=1}^{m}
   \mathcal T_g(W_{g,j};\widehat U^{(0)}),\nonumber\\
 \widehat\phi_{g,i}^{(0)}
 &=-\alpha^{-1}\widehat r^{(0)\top}
   \widehat\Xi_{g,i}^{(0)},\nonumber\\
 \widehat\sigma_g^2
 &=\frac1m\sum_{i=1}^{m}(\widehat\phi_{g,i}^{(0)})^2.
 \label{eq:pilot-scale-estimator}
\end{align}
The four lines give, respectively, weights that propagate local changes to the root shortfall, each draw's deviation from its group's mean Bellman contribution, its estimated CVaR influence, and the empirical variance of these influences.
Linearity gives $\widehat\phi_{g,i}^{(0)}=\widehat d_{g,i}-\bar d_g$, recovering Equation~\ref{eq:pilot-scales-main}. Take $m=m_N$ for Theorem~\ref{thm:adaptive}. Shared stage effects are summed before taking the variance, preserving their covariance.

\begin{lemma}[\textbf{Pilot-scale control}]
\label{lem:pilot-concentration}
Under the fixed-dimensional bounded Bellman model and positive quantile margin, there is a deterministic $B_\sigma<\infty$ such that $0\le\widehat\sigma_g\le B_\sigma$ for every group and pilot dataset. With each pilot formed from the first $m$ observations of an i.i.d.
stream in each group,
\begin{equation}
 \widehat\sigma_g\longrightarrow\sigma_g
 \quad\text{a.s. as }m\to\infty.
 \label{eq:sigma-consistency}
\end{equation}
For each group with $\sigma_g>0$, constants $c_g,C_g>0$ exist such that
\begin{equation}
 \Prob(\widehat\sigma_g<\sigma_g/2)\le C_g e^{-c_gm}.
 \label{eq:pilot-lower-tail}
\end{equation}
\end{lemma}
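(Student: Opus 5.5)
The plan has three parts: a deterministic bound, almost-sure consistency through a continuous-mapping argument on the good event that the pilot quantile atom is correct, and the exponential lower tail from Hoeffding concentration plus a Lipschitz reduction.

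\emph{Deterministic bound.} Every pilot shortfall satisfies $0\le\widehat U^{(0)}_h(s,q)\le q\le H$. Each sampled target $\mathcal T_g(W;\widehat U^{(0)})$ is an interpolation of such values times policy weights in $[0,1]$, so its entries lie in $[0,H]$. The adjoint obeys $\|\widehat r^{(0)}\|_1\le\|(I-\widehat M^{(0)})^{-1}\|_\infty\le H$, using nilpotence and sub-probability rows as in Lemma~\ref{lem:fixed-design-moments}; the adjoint is a row of the transposed resolvent, so we bound its $\ell_1$ norm by the induced $\infty$-norm of the resolvent. Hence $|\widehat\phi^{(0)}_{g,i}|\le 2H^2/\alpha$ up to a fixed dimensional constant. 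We take $B_\sigma$ to be that constant.

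\emph{Consistency.} We view $\widehat\sigma_g^2$ as a continuous function of a finite vector of empirical averages, conditional on the pilot quantile index. Expanding the square, $\widehat\sigma_g^2=\widehat r^{(0)\top}\widehat\Sigma_g(\widehat U^{(0)})\widehat r^{(0)}/\alpha^2$. Here $\widehat\Sigma_g(U)$ is the empirical covariance of $\mathcal T_g(W;U)=a_g(W)+B_g(W)U$, which is a polynomial in empirical means of the finitely many bounded arrays $a_g,B_g,a_ga_g^\top,B_g\otimes a_g,B_g\otimes B_g$. The quantities $\widehat M^{(0)}$, $\widehat{\mathbf b}^{(0)}$, $\widehat U^{(0)}=(I-\widehat M^{(0)})^{-1}\widehat{\mathbf b}^{(0)}$ and the resolvent are also continuous functions of such means, because the resolvent is a finite polynomial sum. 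The SLLN in each group gives almost-sure convergence of all these means. By the index argument of Step~2 in Theorem~\ref{thm:allocation-clt}, the positive margin $m_\alpha$ implies that the pilot VaR index equals $k_\alpha$ eventually almost surely, so $\widehat x_0=x_0$ eventually. Continuous mapping then gives $\widehat\sigma_g^2\to r^\top\Sigma_g(U^*)r/\alpha^2=\sigma_g^2$ almost surely.

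\emph{Exponential lower tail.} Let $\bar\theta$ collect all the empirical means above, with population value $\theta$. On the event $E=\{\|\bar\theta-\theta\|_\infty\le\epsilon\}$ with $\epsilon$ small, two things hold. First, $\|\widehat F-F^*\|_\infty<m_\alpha$ by the same inequality as in \eqref{eq:index-concentration}, so the index is correct. Second, the map $\bar\theta\mapsto\widehat\sigma_g^2$ is a fixed polynomial-type map on a compact neighbourhood, hence Lipschitz with constant $L_g$. Choosing $\epsilon\le\min\{\epsilon_0,\,3\sigma_g^2/(4L_g)\}$ gives $\widehat\sigma_g^2\ge\sigma_g^2/4$ on $E$, that is, $\widehat\sigma_g\ge\sigma_g/2$. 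Hoeffding's inequality for each bounded coordinate, with a union bound over the fixed number of coordinates and groups, gives $\Prob(E^c)\le C_ge^{-c_gm}$.

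The main obstacle is making the Lipschitz and continuity claims rigorous across the discontinuity at the quantile atom. The map is only piecewise defined, with a different root coordinate per index, so we will handle it by always restricting to the event where the index is correct, which costs another exponential term. We will also verify that centering with the pilot mean, as opposed to the population mean, is covered because it is itself a polynomial in the empirical means.
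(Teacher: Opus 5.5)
Your proposal is correct and matches the paper's proof essentially step for step: the same representation of $\widehat\sigma_g^2$ as a polynomial in finitely many bounded empirical means on the correct VaR cell, almost-sure eventual correctness of the pilot quantile cell from the positive margin and the index-concentration bound, and Hoeffding with a union bound plus a local Lipschitz estimate for the exponential lower tail. One wording slip, which does not affect the argument: $\widehat r^{(0)}$ is the row $\widehat x_0$ of $(I-\widehat M^{(0)})^{-1}$ itself, not of its transpose, and that is exactly why your $\ell_1$-by-induced-$\infty$-norm bound $\|\widehat r^{(0)}\|_1\le H$ applies.
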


\begin{proof}
\textbf{Idea.} In fixed dimension, the pilot scale is a continuous function of finitely many bounded empirical moments as long as the VaR cell is correct. Laws of large numbers give consistency, while Hoeffding bounds plus the positive margin control the rare event that a genuinely positive scale is badly underestimated.

\emph{Step 1: finite empirical-moment representation.}
Write the affine group contribution as $\mathcal T_g(W;U)=a_g(W)+B_g(W)U$. Let $Y_g(W)$ collect the finitely many entries of $a_g(W)$ and $B_g(W)$ and their pairwise products. The empirical first moments of $a_g$ and $B_g$ determine $\widehat{\mathbf b}^{(0)}$ and $\widehat M^{(0)}$, hence $\widehat U^{(0)}$ and $\widehat r^{(0)}$. Expanding the empirical variance in \eqref{eq:pilot-scale-estimator} then introduces only empirical averages of pairwise products of entries of $a_g$ and $B_g$; no higher empirical moments are needed. These features are bounded, and all quantities in \eqref{eq:pilot-scale-estimator} are therefore functions of their groupwise empirical means $\widehat\mu$; let $\mu$ denote the corresponding vector of population means. On the correct VaR cell,
\[
 \widehat U^{(0)}
 =\sum_{j=0}^{H-1}(\widehat M^{(0)})^j\widehat{\mathbf b}^{(0)},\qquad
 \widehat r^{(0)\top}
 =e_{\widehat x_0}^\top\sum_{j=0}^{H-1}(\widehat M^{(0)})^j,
\]
so $\widehat\sigma_g^2=F_g(\widehat\mu)$ for a polynomial $F_g$ with $F_g(\mu)=\sigma_g^2$.

\emph{Step 2: consistency.}
The strong law gives $\widehat\mu\to\mu$ almost surely. The positive margin and the CDF bound used in \eqref{eq:index-concentration} make the pilot VaR cell eventually correct almost surely. Continuity of $F_g$ proves \eqref{eq:sigma-consistency}. Bounded features, stop-loss coordinates, and the deterministic resolvent bound also give the uniform constant $B_\sigma$.

\emph{Step 3: lower-tail protection for active groups.}
If $\sigma_g>0$, $F_g$ is Lipschitz on a compact neighborhood of $\mu$.
Choose that neighborhood so $|F_g(\widehat\mu)-\sigma_g^2|\le3\sigma_g^2/4$. Hoeffding's inequality \citep{hoeffding1963probability} and a finite union bound show that leaving this neighborhood has probability at most $Ce^{-cm}$. The same bound holds for a wrong pilot VaR cell by \eqref{eq:index-concentration}. Off these two events, $\widehat\sigma_g^2\ge\sigma_g^2/4$, proving \eqref{eq:pilot-lower-tail}.
\end{proof}
The constants in Equation~\ref{eq:pilot-lower-tail} depend on the fixed population model, including its positive influence scales and quantile margin. The bound supports the asymptotic MSE proof; it does not by itself give a pilot size computable from the data or a bound on the final estimator's finite-sample MSE.

\textbf{Matrix-free influence computation.}
The displayed matrices define the linear operator. A matrix-free implementation avoids forming $A^{-1}$ or the $D\times D$ covariance matrices. Compute the stop losses in increasing layer order, propagate the root adjoint in decreasing layer order, and accumulate the scalar $r^\top\mathcal T_g(W;U)$ for each sample before estimating its variance. In the shared state--action model, a straightforward sample-based implementation uses at most $O(HKN\log K+H|\mathcal S|K)$ arithmetic operations for these passes with binary search on a nonuniform grid, and $O(H|\mathcal S|K+N+G)$ storage. Interpolation indices can be reused. These are upper bounds for the described construction. Measured runtimes depend on the actual grid and implementation and are reported with the experimental artifacts.

\subsection{Finite-pilot design stability}
\label{sec:supp-finite-pilot}
Near an interior oracle, small score errors have a quadratic variance cost. Severe underestimation requires the asymptotic controls in Section~\ref{sec:supp-adaptation}.
\begin{proposition}[Local design stability]
\label{prop:finite-pilot}
If every retained $\sigma_g>0$ and $\epsilon=\max_g|\widehat\sigma_g-\sigma_g|$, then for sufficiently small $\epsilon$ and floor $\lambda$,
\begin{equation}
 0\le V \big(\widehat w \big) - V^*\le C \big(\epsilon^2 +\lambda^2 \big)
 \label{eq:finite-pilot}
\end{equation}
for a finite problem-dependent constant $C$.
\end{proposition}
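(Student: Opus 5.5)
The plan is to use an exact algebraic identity for the excess variance instead of a generic Taylor expansion. Write $S:=\sum_g\sigma_g>0$ and $w^*_g=\sigma_g/S$. Then $\sigma_g^2=S^2(w_g^*)^2$. For any positive design $w$ on the simplex,
\[
 V(w)-V^*=S^2\Big(\sum_g\frac{(w_g^*)^2}{w_g}-1\Big)
 =S^2\sum_g\frac{(w_g^*-w_g)^2}{w_g}=S^2D(w^*\Vert w).
\]
The middle equality uses $\sum_g w_g=\sum_g w^*_g=1$: expanding $(w^*_g-w_g)^2/w_g=(w_g^*)^2/w_g-2w^*_g+w_g$ and summing gives $\sum_g (w_g^*)^2/w_g-1$. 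This identity gives the lower bound $V(\widehat w)-V^*\ge0$ at once, matching the Cauchy--Schwarz argument of Section~\ref{sec:supp-generic-allocation}. It also reduces the upper bound to two tasks: bounding $\|\widehat w-w^*\|_\infty$ and bounding $\min_g\widehat w_g$ away from zero.

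\textbf{Step 1: distance of $\widehat w$ from $w^*$.} Set $\widehat S=\sum_g\widehat\sigma_g$, so $|\widehat S-S|\le G\epsilon$. For $\epsilon\le S/(2G)$ we have $\widehat S\ge S/2$, and the uniform fallback is never triggered. Then
\[
 \Big|\frac{\widehat\sigma_g}{\widehat S}-\frac{\sigma_g}{S}\Big|
 \le\frac{|\widehat\sigma_g-\sigma_g|}{\widehat S}+\sigma_g\frac{|S-\widehat S|}{S\widehat S}
 \le C_1\epsilon ,
\]
where $C_1$ depends only on $G$ and $S$. From Equation~\ref{eq:floored-design},
\[
 \widehat w_g-w_g^*=(1-\lambda)\Big(\frac{\widehat\sigma_g}{\widehat S}-w^*_g\Big)+\lambda\Big(\frac1G-w_g^*\Big).
\]
Hence $|\widehat w_g-w^*_g|\le C_1\epsilon+\lambda$ for every $g$.

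\textbf{Step 2: keeping the denominators away from zero.} This is the only place where the assumption that every retained $\sigma_g>0$ is used. That assumption gives $w_{\min}^*:=\min_g\sigma_g/S>0$. If $C_1\epsilon+\lambda\le w^*_{\min}/2$, Step 1 gives $\widehat w_g\ge w^*_{\min}/2$ for all $g$. This condition is exactly what ``sufficiently small $\epsilon$ and $\lambda$'' means in the statement.

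\textbf{Conclusion.} Combining the identity with Steps 1 and 2,
\[
 V(\widehat w)-V^*\le S^2\,\frac{2}{w^*_{\min}}\sum_g(C_1\epsilon+\lambda)^2\le\frac{4GS^2\max(C_1^2,1)}{w^*_{\min}}(\epsilon^2+\lambda^2),
\]
using $(a+b)^2\le2(a^2+b^2)$ in the last step. This proves Equation~\ref{eq:finite-pilot}. The constant $C$ depends on the problem through $G$, $S$ and $\min_g\sigma_g$.

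The main obstacle is Step 2. Near a boundary oracle, where some $\sigma_g$ is zero or tiny, the $1/w_g$ factor makes the excess variance blow up. This is why the proposition is only local and why starved groups need the separate asymptotic treatment of Section~\ref{sec:supp-adaptation}. Within the stated hypotheses the step is routine.
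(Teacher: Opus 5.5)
Your proof is correct and follows essentially the same route as the paper: both use the exact identity $V(w)-V^*=S_\sigma^2\sum_g(w^*_g-w_g)^2/w_g$, a sup-norm bound $\|\widehat w-w^*\|_\infty\le C_1\epsilon+\lambda$ (your $C_1$ works out to the paper's $2(G+1)/S_\sigma$), a lower bound $\widehat w_g\ge w^*_{\min}/2$ on every denominator, and $(a+b)^2\le 2a^2+2b^2$. You also make explicit, as the paper does only implicitly, that $\widehat S\ge S/2$ rules out the uniform fallback, and you give an explicit constant $C$.
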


\begin{proof}
\textbf{Idea.} At an interior oracle, allocation error has a quadratic variance cost. Set $S_\sigma=\sum_g\sigma_g$, $p_g=\sigma_g/S_\sigma$, and $p_{\min}=\min_g p_g>0$. For every positive design $w$,
\begin{equation}
 V(w)-V^*=S_\sigma^2\sum_g\frac{(p_g-w_g)^2}{w_g}.
 \label{eq:design-regret-identity}
\end{equation}
To verify the identity, expand the square and use
$\sum_g p_g=\sum_g w_g=1$. If $G\epsilon\le S_\sigma/2$, normalizing the estimated scales gives
\[
 \left\|\frac{\widehat\sigma}{\sum_g\widehat\sigma_g}-p\right\|_\infty
 \le \frac{2(G+1)\epsilon}{S_\sigma}.
\]
Adding the uniform floor therefore gives
$\|\widehat w-p\|_\infty\le2(G+1)\epsilon/S_\sigma+\lambda$.
For sufficiently small $\epsilon,\lambda$, every denominator
$\widehat w_g$ in Equation~\ref{eq:design-regret-identity} is at least $p_{\min}/2$. Substitution and $(a+b)^2\le2a^2+2b^2$ prove the claim. The lower bound on the weights is local; it gives no protection on a pilot event with severe scale underestimation.
\end{proof}

\subsection{Oracle adaptation}
\label{sec:supp-adaptation}

\begin{proof}[Proof of Theorem~\ref{thm:adaptive}]
\textbf{Idea.} The pilot must make the positive-scale weights converge to the oracle design and make severe underestimation sufficiently rare for second moments. The exploration floor separately guarantees enough main samples in every group to control the nonlinear fixed-point remainder and the VaR cell. Conditional on the pilot, the remaining problem is a deterministic triangular-array CLT.

\emph{Step 1: learned weights, rounding, and a minimum main count.}
Write $S_\sigma:=\sum_g\sigma_g>0$ and $w_g^*:=\sigma_g/S_\sigma$. Let $\widehat C_N^{\tis}$ denote the main-sample estimator produced by Algorithm~\ref{alg:tis}. Define
\[
 \widetilde p_g:=
 \begin{cases}
  \widehat\sigma_g/\sum_j\widehat\sigma_j,
       &\sum_j\widehat\sigma_j>0,\\
  1/G,&\text{otherwise},
 \end{cases}
 \qquad
 \widehat w_g:=(1-\lambda_N)\widetilde p_g+\lambda_N/G.
\]
The second branch also gives $\widehat w_g=1/G$. Pilot consistency and $S_\sigma>0$ imply that its probability tends to zero. For pilots redrawn at each budget, the convergence needed below is in probability:
\begin{equation}
 \widehat w_g\stackrel{p}{\longrightarrow}w_g^*
 \quad\text{for every group, including }w_g^*=0.
\end{equation}

Set $J_N:=N-Gm_N-2G$ and $n_g:=2+\operatorname{LRM}_g(J_N,\widehat w)$. Largest-remainder rounding starts from $\lfloor J_N\widehat w_g\rfloor$ and assigns the leftover calls to the largest fractional remainders, with ties broken in a fixed group order. It satisfies
\[
 \sum_g\operatorname{LRM}_g(J_N,\widehat w)=J_N,\qquad
 |\operatorname{LRM}_g(J_N,\widehat w)-J_N\widehat w_g|<1.
\]
Thus the pilot and main counts sum to $N$. Since $J_N/N\to1$, uniformly in
$g$,
\begin{equation}
 \frac{n_g}{N}-\widehat w_g
 =\left(\frac{J_N}{N}-1\right)\widehat w_g+O(N^{-1})=o(1).
 \label{eq:adaptive-rounded-fractions}
\end{equation}
Moreover, $\widehat w_g\ge\lambda_N/G$ and
$\operatorname{LRM}_g(J_N,\widehat w)\ge J_N\widehat w_g-1$, so, eventually,
\begin{equation}
 n_{\min}\ge\frac{N\lambda_N}{2G}.
 \label{eq:adaptive-min-count}
\end{equation}

\emph{Step 2: conditional CLT for the leading influence term.}
Conditional on the pilot, the main samples are independent and their counts are fixed. Let
\[
 Z_N:=\sum_g\frac1{n_g}\sum_{i=1}^{n_g}\phi_g(W_{g,i})
\]
be the leading influence term. If $\sigma_g=0$, then centering and zero variance imply $\phi_g=0$ almost surely. Let $\mathcal A:=\{g:\sigma_g>0\}$. This set is nonempty because $S_\sigma>0$, and
$w_{\min}^*:=\min_{g\in\mathcal A}w_g^*>0$ because $\mathcal A$ is finite. By \eqref{eq:adaptive-rounded-fractions}, the pilot event
\[
 B_N:=\left\{\min_{g\in\mathcal A}\frac{n_g}{N}\ge\frac{w_{\min}^*}{2}\right\}
\]
satisfies $\Prob(B_N)\to1$. Conditional on a pilot in $B_N$, the summands of $\sqrt N Z_N$ are independent and centered. Because the finite collection of influences is bounded, there is a deterministic $M_3<\infty$ with $\E|\phi_g|^3\le M_3$ for every $g$, and
\[
 \sum_{g\in\mathcal A}\sum_{i=1}^{n_g}
 \E\left[\left|\frac{\sqrt N}{n_g}\phi_g(W_{g,i})\right|^3\middle| \;\textnormal{pilot}\right]
 \le M_3 N^{3/2}\sum_{g\in\mathcal A}\frac1{n_g^2}
 \le \frac{4M_3|\mathcal A|}{(w_{\min}^*)^2\sqrt N}
 \longrightarrow0.
\]
The conditional variance is
\[
 N\sum_{g\in\mathcal A}\frac{\sigma_g^2}{n_g}
 \xrightarrow{p}\sum_{g\in\mathcal A}\frac{\sigma_g^2}{w_g^*}
 =S_\sigma^2=V^*>0.
\]
Consequently, on an event whose pilot probability tends to one, the variance is bounded away from zero; dividing the preceding third-moment bound by its $3/2$ power verifies Lyapunov's condition, and hence Lindeberg's condition.
The Lindeberg--Feller theorem applied conditionally on the pilot \citep[Proposition~2.27]{vanderVaart1998asymptotic} therefore makes the conditional characteristic function of $\sqrt N Z_N$ converge in probability to that of $\mathcal N(0,V^*)$. Characteristic functions are bounded by one, so taking expectations over the pilot yields the unconditional convergence
$\sqrt N Z_N\Rightarrow\mathcal N(0,V^*)$.

\emph{Step 3: nonlinear fixed-point remainder.}
The fixed-point remainder also remains negligible. Conditional on the pilot, Lemma~\ref{lem:fixed-design-moments} gives $\E[\|R_N\|^2\mid\textnormal{pilot}]\le Cn_{\min}^{-2}$ with deterministic $C$. By \eqref{eq:adaptive-min-count},
\begin{equation}
 \sqrt N\|R_N\|
 =O_p\!\left((\sqrt N\lambda_N)^{-1}\right)=o_p(1),\qquad
 N\E\|R_N\|^2
 =O\!\left((N\lambda_N^2)^{-1}\right)=o(1).
\end{equation}
Together with the quantile-cell argument below, this proves the CLT.

\emph{Step 4: uniform integrability of the leading variance.}
For the normalized MSE, independence and centering give
\begin{equation}
 \E[N Z_N^2\mid\textnormal{pilot}]
 =N\sum_{g:\sigma_g>0}\frac{\sigma_g^2}{n_g}.
 \label{eq:adaptive-leading-mse}
\end{equation}
For a \newrev{group with $\sigma_g>0$}, let
$A_{g,N}:=\{\widehat\sigma_g\ge\sigma_g/2\}$. Since
$\sum_j\widehat\sigma_j\le GB_\sigma$ and eventually
$1-\lambda_N\ge1/2$, on $A_{g,N}$,
$\widehat w_g\ge\sigma_g/(4GB_\sigma)$. On $A_{g,N}^c$,
$\widehat w_g\ge\lambda_N/G$. Also
$n_g\ge J_N\widehat w_g$ and eventually $N/J_N\le2$, so
$N/n_g\le2/\widehat w_g$. Lemma~\ref{lem:pilot-concentration} gives
\[
 \E\!\left[\frac{N}{n_g}\ind_{A_{g,N}^c}\right]
 \le\frac{2G C_g}{\lambda_N}e^{-c_gm_N}\longrightarrow0,
\]
because $\log(1/\lambda_N)=o(m_N)$. On $A_{g,N}$, $N/n_g$ is uniformly bounded and converges in probability to $1/w_g^*$. Hence \eqref{eq:adaptive-leading-mse} converges in expectation to $V^*$.

\emph{Step 5: wrong VaR cells and conclusion.}
Finally, conditional on the pilot, \eqref{eq:index-concentration} and \eqref{eq:adaptive-min-count} bound the wrong-VaR-cell contribution to the normalized MSE by $CN\exp(-cN\lambda_Nm_\alpha^2)=o(1)$. Cauchy--Schwarz removes the cross term with the $L^2$-negligible remainder. Thus
\[
\sqrt N(\widehat C_N^{\tis}-C_{\alpha,K})
\Rightarrow\mathcal N(0,V^*),\qquad
N\E[(\widehat C_N^{\tis}-C_{\alpha,K})^2]\to V^*,
\]
which completes the proof.
\end{proof}

\subsection{\redrev{Anchored design: variance safeguard and asymptotic MSE}}
\label{sec:supp-anchor}

For the anchor in Equation~\ref{eq:anchored-score}, \redrev{we define the occupancy scores, prove the component-relative variance safeguard, and establish Corollary~\ref{cor:anchor-asymptotic}.}

Let $\widehat\mu_h(s)$ denote the state occupancy induced from the root by the fixed policy and the pilot transition estimate, with $h$ steps remaining. In the stationary shared-kernel model, define
\begin{equation}
 \widehat o_{s,a}:=\sum_{h:(h,s)\in\calB}\widehat\mu_h(s)\pi_h(a\mid s).
 \label{eq:supp-occupancy-score}
\end{equation}
More generally, when a query group feeds several Bellman rows with known mixture coefficients, $\widehat o_g$ is the sum of the corresponding pilot-model row visitation probabilities times those coefficients. In the untied policy-mixture case $b=(h,s)$, this reduces to $\widehat o_b=\widehat\mu_h(s)$.

Let
\[
 p_g^{\rm inf}=\frac{\widehat\sigma_g}{\sum_j\widehat\sigma_j},\qquad
 p_g^{\rm occ}=\frac{\widehat o_g}{\sum_j\widehat o_j},\qquad
 p_g^{\rm anc}=\tfrac12p_g^{\rm inf}+\tfrac12p_g^{\rm occ},
\]
using the uniform branch for $p^{\rm inf}$ if every estimated influence scale is zero. The occupancy denominator is positive because the retained rows include the root and $H\ge1$. Apply the same exploration floor to each component,
$w^x=(1-\lambda)p^x+\lambda\mathbf 1/G$ for $x\in\{\rm inf,occ,anc\}$.

\begin{proposition}[Component-relative safeguard]
\label{prop:anchored}
For the positive fractional designs before integer rounding,
\begin{equation}
 V(w^{\rm anc})\le2\min\{V(w^{\rm inf}),V(w^{\rm occ})\}.
 \label{eq:anchored-bound}
\end{equation}
\end{proposition}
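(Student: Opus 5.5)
The argument is a pointwise comparison of the weights, followed by monotonicity of $V$ in each coordinate. Because the floor acts affinely, the anchored design is the average of the two floored component designs:
\[
 w^{\rm anc}=(1-\lambda)\bigl(\tfrac12p^{\rm inf}+\tfrac12p^{\rm occ}\bigr)+\lambda\mathbf 1/G
 =\tfrac12w^{\rm inf}+\tfrac12w^{\rm occ}.
\]
This holds because the two floored designs use the same $\lambda$. It also holds when $p^{\rm inf}$ takes its uniform branch, since that branch is still a probability vector and is floored the same way.

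From this identity, every group satisfies
\[
 w_g^{\rm anc}\ge\tfrac12 w_g^{\rm inf}
 \quad\text{and}\quad
 w_g^{\rm anc}\ge\tfrac12 w_g^{\rm occ},
\]
because both component weights are nonnegative. All three designs are strictly positive because of the floor $\lambda/G>0$. Hence every term of $V(w)=\sum_g\sigma_g^2/w_g$ is finite.

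Each term $\sigma_g^2/w_g$ is nonincreasing in $w_g$. Applying this to the first inequality gives
\[
 \frac{\sigma_g^2}{w_g^{\rm anc}}\le\frac{2\sigma_g^2}{w_g^{\rm inf}}
\]
for every $g$. Summing over $g$ yields $V(w^{\rm anc})\le2V(w^{\rm inf})$. The same argument with $w^{\rm occ}$ yields $V(w^{\rm anc})\le2V(w^{\rm occ})$. Taking the smaller of the two bounds gives \eqref{eq:anchored-bound}.

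The only point needing care is the bookkeeping that the floor commutes with averaging, together with the degenerate branches. If all $\widehat\sigma_j=0$, the influence component is uniform. The occupancy denominator is positive because the root row has positive occupancy. Groups with $\sigma_g=0$ contribute zero to every $V$ and can be ignored. The bound is stated before rounding and for fixed (pilot-conditional) shares, so no probabilistic argument is needed. The population $\sigma_g$ enter $V$ identically for all three designs, so the comparison is purely deterministic.
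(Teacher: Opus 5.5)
Your proof is correct and is essentially the paper's argument. The common floor makes $w^{\rm anc}=\tfrac12(w^{\rm inf}+w^{\rm occ})$, so each anchored weight is at least half of either component's weight, and monotonicity of $V$ in each coordinate then gives both factor-two bounds.
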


\begin{proof}
The common floor gives
$w^{\rm anc}=(w^{\rm inf}+w^{\rm occ})/2$, so for every group
\[
 w_g^{\rm anc}\ge\tfrac12w_g^{\rm inf},\qquad
 w_g^{\rm anc}\ge\tfrac12w_g^{\rm occ}.
\]
Because $V(w)=\sum_g\sigma_g^2/w_g$ is decreasing in each coordinate separately,
\[
 V(w^{\rm anc})\le2V(w^{\rm inf}),\qquad
 V(w^{\rm anc})\le2V(w^{\rm occ}),
\]
which proves the claim.
\end{proof}
\begingroup\revcolor{red}
\textbf{Choice of mixture weight.} For occupancy weight $\beta\in(0,1)$, the same argument gives
\[
 V\big((1-\beta)w^{\rm inf}+\beta w^{\rm occ}\big)
 \le\min\left\{\frac{V(w^{\rm inf})}{1-\beta},
                  \frac{V(w^{\rm occ})}{\beta}\right\}.
\]
The worst-case factor in this bound relative to the better component is
$\max\{(1-\beta)^{-1},\beta^{-1}\}$, minimized at $\beta=1/2$; optimal finite-budget weights may differ.

This safeguard compares the two component designs before rounding; it does not bound finite-budget MSE. Corollary~\ref{cor:anchor-asymptotic} identifies the limiting MSE constant of the full anchored estimator, including pilot cost and rounding. Finite-budget behavior is assessed in the workflow experiments (Sections~\ref{sec:llm-protocol}--\ref{sec:pilot-mechanism}).

\begin{corollary}[Asymptotic efficiency cost of anchoring]
\label{cor:anchor-asymptotic}
Under the assumptions and schedules of Theorem~\ref{thm:adaptive}, let $v_g=o_g/\sum_j o_j$ be the population occupancy shares and set
$a_g^*=(w_g^*+v_g)/2$. Using Equation~\ref{eq:anchored-score} in Algorithm~\ref{alg:tis} gives
\begin{equation}
 \begin{gathered}
 \sqrt N\big(\widehat C_N^{\rm anc}-C_{\alpha,K}\big)
 \Rightarrow\mathcal N(0,V_{\rm anc}),
 \qquad V^*\le V_{\rm anc}\le2V^*.
 \\
 N\E\big[(\widehat C_N^{\rm anc}-C_{\alpha,K})^2\big]
 \to V_{\rm anc}=\sum_{g:\sigma_g>0}\frac{\sigma_g^2}{a_g^*}.
 \end{gathered}
 \label{eq:anchor-asymptotic}
\end{equation}
\end{corollary}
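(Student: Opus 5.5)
The plan is to repeat the proof of Theorem~\ref{thm:adaptive} with the anchored shares $w^{\rm anc}$ in place of $\widehat w$. The only new ingredients are the limit of the learned shares and a new lower bound on them. Everything conditional on the pilot (the Lindeberg--Feller CLT for $Z_N$, the remainder bound from Lemma~\ref{lem:fixed-design-moments}, and the wrong-VaR-cell bound \eqref{eq:index-concentration}) used only two facts: $n_g/N-\widehat w_g=o(1)$ and the floor bound \eqref{eq:adaptive-min-count}. Both carry over unchanged.

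\textbf{Share limit and minimum count.} For the occupancy component, $\widehat o_g$ is a polynomial in the pilot transition frequencies; it is obtained by forward propagation $\widehat\mu_h$ through finitely many layers. The strong law therefore gives $\widehat o_g\to o_g$, and $\sum_j o_j>0$ because the root row is retained. Hence $p^{\rm occ}\to v$ in probability. Combined with Lemma~\ref{lem:pilot-concentration}, which gives $p^{\rm inf}\to w^*$ since $S_\sigma>0$ and the uniform branch has vanishing probability, and with $\lambda_N\to0$, we get $w^{\rm anc}_g\to a_g^*$ in probability. The anchored shares still satisfy $w_g^{\rm anc}\ge\lambda_N/G$, so \eqref{eq:adaptive-min-count} holds verbatim. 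Steps 1, 3 and 5 of that proof then apply directly.

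\textbf{Conditional CLT.} Step 2 goes through with $\mathcal A=\{g:\sigma_g>0\}$. For $g\in\mathcal A$ we have $a_g^*\ge w_g^*/2>0$, so the event $\min_{g\in\mathcal A}n_g/N\ge\min_{\mathcal A}a_g^*/2$ has probability tending to one. The conditional variance converges in probability to $\sum_{g\in\mathcal A}\sigma_g^2/a_g^*=V_{\rm anc}$. Groups with $\sigma_g=0$ contribute $\phi_g=0$ almost surely, whatever their occupancy share.

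\textbf{Uniform integrability (main obstacle).} In Step 4 we must show $\E[N/n_g]\to1/a_g^*$ for each $g\in\mathcal A$. This is simpler than in the proof of Theorem~\ref{thm:adaptive}. The anchor gives $w^{\rm anc}_g\ge\tfrac12 w^{\rm inf}_g$, so on $A_{g,N}=\{\widehat\sigma_g\ge\sigma_g/2\}$ the same bound $w_g^{\rm anc}\ge\sigma_g/(8GB_\sigma)$ holds. Off $A_{g,N}$ we keep the floor $\lambda_N/G$, and the exponential tail \eqref{eq:pilot-lower-tail} with $\log(1/\lambda_N)=o(m_N)$ kills that contribution exactly as before. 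On $A_{g,N}$, $N/n_g$ is bounded and converges in probability to $1/a_g^*$, so dominated convergence gives $N\E Z_N^2\to V_{\rm anc}$. The remainder, cross term and wrong-cell terms are $o(1)$ as before.

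\textbf{Constant bounds.} $V^*\le V_{\rm anc}$ holds because $a^*$ is a probability vector. By the Cauchy--Schwarz minimization of Section~\ref{sec:supp-generic-allocation}, $\sum_g\sigma_g^2/a_g\ge(\sum_g\sigma_g)^2$ for every positive design $a$, and zero-scale groups drop out of both sides. For the upper bound, $a_g^*\ge w_g^*/2$ gives
\[
 V_{\rm anc}\le\sum_{g\in\mathcal A}\frac{2\sigma_g^2}{w_g^*}=2S_\sigma\sum_{g\in\mathcal A}\sigma_g=2V^*.
\]
This is the population analogue of Proposition~\ref{prop:anchored}.
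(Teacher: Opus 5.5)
Your proof is correct. It matches the paper on the share limits, the minimum-count bound, the conditional CLT, the remainder and wrong-cell terms, and the bounds $V^*\le V_{\rm anc}\le2V^*$.

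The one genuine difference is the uniform-integrability step. You rerun Step~4 of Theorem~\ref{thm:adaptive} directly for the anchored counts, in three parts:
\begin{itemize}
\item On $A_{g,N}$ you use $w^{\rm anc}_g\ge\sigma_g/(8GB_\sigma)$.
\item Off $A_{g,N}$ you use the floor together with the exponential lower tail from Lemma~\ref{lem:pilot-concentration}.
\item Bounded convergence then finishes the step.
\end{itemize}

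The paper never revisits the pilot lower tail. Instead it couples the plain and anchored allocations to the same pilot. Largest-remainder rounding then gives $n_g^{\rm anc}\ge n_g^{\rm inf}/4$, hence $0\le Y_N^{\rm anc}\le4Y_N^{\rm inf}$. Uniform integrability transfers from the already established convergence $\E Y_N^{\rm inf}\to V^*$, and the limit $V_{\rm anc}$ is identified by convergence in probability.

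Each route has its own advantage:
\begin{itemize}
\item Your argument is self-contained and yields $\E[N/n_g^{\rm anc}]\to1/a_g^*$ group by group. It applies verbatim to any design that keeps a fixed fraction of the learned influence share plus the $\lambda_N/G$ floor, such as the general $\beta$-mixtures.
\item The paper's coupling treats the delicate underallocation control as a black box inherited from plain \tis{}. It also gives a pathwise by-product: on every pilot, after rounding, the anchor's realized leading variance $N\sum_g\sigma_g^2/n_g$ is at most four times that of plain \tis{}.
\end{itemize}
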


\begin{proof}[Proof of Corollary~\ref{cor:anchor-asymptotic}]
\textbf{Idea.} The occupancy shares are consistent, and the anchor retains at least half of every learned influence share. The proof of Theorem~\ref{thm:adaptive} therefore continues to control rare underallocation, with a changed limiting design.

\emph{Step 1: limiting shares and minimum counts.}
In fixed dimension, the pilot transition probabilities converge in probability to their population values. Finite-horizon occupancies are continuous functions of those probabilities, and their sum is positive. Thus the normalized pilot occupancy shares converge to $v$. The common floor vanishes, while Lemma~\ref{lem:pilot-concentration} gives $w^{\rm inf}\to w^*$ in probability. Consequently,
\[
 w_g^{\rm anc}\stackrel{p}{\longrightarrow}a_g^*
 =\tfrac12(w_g^*+v_g).
\]
For every active group $\sigma_g>0$, $a_g^*\ge w_g^*/2>0$. Both floored components have weights at least $\lambda_N/G$, so the anchored counts obey the same lower bound $n_{\min}\ge N\lambda_N/(2G)$ eventually as in Equation~\ref{eq:adaptive-min-count}. Rounding and the vanishing pilot fraction give $n_g^{\rm anc}/N\to a_g^*$ in probability.

\emph{Step 2: the influence term and its second moment.}
Conditional on the pilot, the main observations are independent. The bounded-influence conditional CLT used in Theorem~\ref{thm:adaptive} gives a Gaussian limit with variance $V_{\rm anc}$; zero-scale groups contribute nothing. To justify convergence of second moments, couple the hypothetical plain and anchored allocations to the same pilot. Write $J_N=N-Gm_N-2G$, and let $n_g^{\rm inf}$ be the plain count. Largest-remainder rounding gives
\[
 n_g^{\rm anc}\ge1+J_Nw_g^{\rm anc}
 \ge1+\tfrac12J_Nw_g^{\rm inf}
 \ge\tfrac12(n_g^{\rm inf}-1)
 \ge\tfrac14 n_g^{\rm inf},
\]
where $n_g^{\rm inf}\le3+J_Nw_g^{\rm inf}$ and $n_g^{\rm inf}\ge2$ were used. Hence
\[
 0\le N\sum_g\frac{\sigma_g^2}{n_g^{\rm anc}}
 \le4N\sum_g\frac{\sigma_g^2}{n_g^{\rm inf}}.
\]
To make the moment transfer explicit, set
$Y_N^{\rm inf}:=N\sum_g\sigma_g^2/n_g^{\rm inf}$ and
$Y_N^{\rm anc}:=N\sum_g\sigma_g^2/n_g^{\rm anc}$. Step~4 of Theorem~\ref{thm:adaptive} gives $Y_N^{\rm inf}\to V^*$ in probability and $\E Y_N^{\rm inf}\to V^*$. Since these variables are nonnegative, this implies $Y_N^{\rm inf}\to V^*$ in $L^1$ and hence uniform integrability. The domination $0\le Y_N^{\rm anc}\le4Y_N^{\rm inf}$ therefore makes $\{Y_N^{\rm anc}\}$ uniformly integrable as well. Because $Y_N^{\rm anc}\to V_{\rm anc}$ in probability by Step~1, uniform integrability yields $\E Y_N^{\rm anc}\to V_{\rm anc}$. This proves the normalized-MSE limit for the leading influence term.

\emph{Step 3: remainder, quantile, and efficiency cost.}
The common minimum-count bound gives the same vanishing normalized second moment of the Bellman remainder and the same exponentially small wrong-quantile contribution as in Theorem~\ref{thm:adaptive}. The cross term vanishes by Cauchy--Schwarz. The CLT and MSE limit thus hold for the full CVaR estimator. Finally, $a_g^*\ge w_g^*/2$ on active groups implies
\[
 V_{\rm anc}\le2\sum_{g:\sigma_g>0}\frac{\sigma_g^2}{w_g^*}=2V^*.
\]
Cauchy--Schwarz gives $V_{\rm anc}\ge V^*$ for any probability allocation, including allocations with zero weights only on zero-influence groups.
\end{proof}

\subsection{When learning an allocation repays its pilot}
\label{sec:pilot-payoff}
The variance identity in Equation~\ref{eq:design-regret-identity} separates allocation opportunity from learning error. It remains valid when some influences vanish, provided $S_\sigma=\sum_g\sigma_g>0$ and the evaluated design is positive. Let $n=N-Gm$, $\rho=Gm/N$, retain the oracle shares $w_g^*=\sigma_g/S_\sigma$ from Equation~\ref{eq:oracle-value}, and let $\widetilde w_g=n_g/n$ be the actual main-sample fractions after rounding. Define
\[
 D(w^*\Vert w):=\sum_g\frac{(w_g^*-w_g)^2}{w_g}.
\]
Because both $w^*$ and $w$ sum to one,
\[
 D(w^*\Vert w)=\sum_g\frac{(w_g^*)^2}{w_g}-1.
\]
Conditional on the pilot, the centered leading influence term $L_N=\sum_g n_g^{-1}\sum_i\phi_g(W_{g,i})$ therefore satisfies
\[
 N\E[L_N^2\mid\textnormal{pilot}]
 =N\sum_g\frac{\sigma_g^2}{n_g}
 =\frac{N}{n}V^*\sum_g\frac{(w_g^*)^2}{\widetilde w_g}
 =\frac{V^*}{1-\rho}\{1+D(w^*\Vert\widetilde w)\}.
\]
Averaging over the pilot gives the exact identity
\begin{equation}
 N\E[L_N^2]
 =\frac{V^*}{1-\rho}\left\{1+\E D(w^*\Vert\widetilde w)\right\}.
 \label{eq:pilot-payoff}
\end{equation}
Here the expectation on the right is over pilots. For a deterministic baseline using all $N$ queries without a pilot and positive actual query fractions $v$, write $A_v=V(v)/V^*$. Its leading variance is $V(v)/N$. Learning improves on this benchmark at the level of the influence term exactly when
\begin{equation}
 1+\E D(w^*\Vert\widetilde w)<(1-\rho)A_v.
 \label{eq:pilot-payoff-condition}
\end{equation}
The three quantities have distinct roles: $A_v$ is the available allocation advantage, $D$ penalizes inaccurate shares, especially underallocation, and $\rho$ charges the discarded pilot. For two learned methods with the same pilot size, the common factor $1/(1-\rho)$ cancels, so their comparison depends on their expected allocation penalties. These are identities for the linearized error, not finite-budget guarantees for the nonlinear CVaR estimator. Population influences are needed to evaluate them, so their use in the experiments is diagnostic rather than an operational rule for choosing a method.
\endgroup

\section{Approximation and Experimental Protocols}
\label{sec:supp-evidence}
This section closes two gaps left by the asymptotic allocation theory. Appendix~\ref{sec:supp-representation} controls error from the categorical grid; the remaining subsections give the protocols and evidence supporting \textbf{Q1--Q5}, including the regimes where tail targeting helps, where it does not, and why small pilots can fail. Each primary protocol defines its query and charged budget.

\subsection{Representation error}
\label{sec:supp-representation}
A fixed grid introduces error even with exact conditional laws. The following bound justifies the separation of approximation and sampling error at the end of Section~\ref{sec:problem}. Let $Z_h^*(s)\sim\eta_h^*(s)$ denote the projected $h$-step return and $G_h(s)$ its true-return counterpart. For every state $s$,
\begin{equation}
 |\cvar_\alpha(\eta_H^*(s))-\cvar_\alpha(G_H(s))|\le H\Delta.
 \label{eq:representation-cvar-bound}
\end{equation}

\begin{proof}
Realize each categorical projection as randomized rounding to adjacent grid atoms \citep{rowland2018analysis}: for $y\in[z_j,z_{j+1}]$, set $\widetilde y=z_j$ with probability $(z_{j+1}-y)/(z_{j+1}-z_j)$ and $\widetilde y=z_{j+1}$ otherwise; for $y\ge H$, set $\widetilde y=H$. Then $\mathcal L(\widetilde y)=\Pi_C\delta_y$ and $|\widetilde y-y|\le\Delta$ whenever $y\le H$ (inputs below zero do not arise here). Couple the projected and true return recursions using the same actions, rewards, next states, and these rounding variables. Conditional on a coupled next state, use the inductive coupling for the two continuation returns. If their difference is at most $(h-1)\Delta$, adding the same reward preserves that difference and adjacent-grid rounding adds at most $\Delta$. If the projected pre-rounding value exceeds $H$, clipping it to $H$ cannot increase its distance from the true $h$-step return, which lies in $[0,h]\subseteq[0,H]$. Starting from equal zero-step returns, induction yields
\begin{equation}
 |Z_h^*(s)-G_h(s)|\le h\Delta\qquad\text{almost surely}.
\end{equation}
If $|X-Y|\le c$ almost surely, then $F_X(x-c)\le F_Y(x)\le F_X(x+c)$ for every $x$, which implies $|F_X^{-1}(u)-F_Y^{-1}(u)|\le c$ for $u\in(0,1)$. Integrating over $u\in(0,\alpha)$ and dividing by $\alpha$ gives Equation~\ref{eq:representation-cvar-bound}: the integration interval's length cancels the factor $1/\alpha$.
\end{proof}

\subsection{Experimental Evidence and Protocols}
\label{sec:supp-experiments}
The protocols below support \textbf{Q1}--\textbf{Q3} in Section~\ref{sec:experiments}; Section~\ref{sec:finqa-protocol} gives the held-out FinQA protocol for \textbf{Q4}. MSE is the average squared error over independent replications, with Monte Carlo SE equal to the sample standard deviation of squared errors divided by the square root of the replication count; bars show $1.96$ SEs. Rollouts use $\lfloor N/H\rfloor$ full trajectories (leaving fewer than $H$ transition slots unused), whereas conditional allocations exhaust $N$. Bold marks sample-only point minima, including displayed ties, without a superiority claim; population references are shaded. Uncertainty is conditional on the fixed tasks or panels.

\textbf{Method key and common estimator.} All conditional-query methods use the categorical CVaR plug-in. Fixed-score designs normalize scores, add the floor in Equation~\ref{eq:floored-design}, and round to exhaust the budget; learned scores pay for and discard a pilot, whereas population references use exact laws without a pilot. \textbf{Uniform} assigns equal shares, and \textbf{learned occupancy} (\textbf{occup.}/\textbf{Occ.}) uses pilot-model visit counts $\widehat o_g$ from Equation~\ref{eq:supp-occupancy-score}. \textbf{Learned mean} uses the pilot-estimated standard deviation of the mean-return influence; for a shared kernel,
\[
 \psi_{s,a}(W)=\sum_{h=1}^H\mu_h(s)\pi_h(a\mid s)
 \{R+v_{h-1}(S')-\E_{P_{s,a}}[R+v_{h-1}(S')]\},
\]
where $\mu_h(s)$ is visitation with $h$ steps left and $v_h(s)=\E[G_h(s)]$; an untied block uses $\mu_h(s)\operatorname{sd}_{P_b}(R+v_{h-1}(S'))$. \textbf{Occupancy (population)} and \textbf{mean influence (population)} use the corresponding exact scores, and \textbf{complete rollout} (\textbf{rollout}/\textbf{Roll.}) takes empirical CVaR of full fixed-policy returns \citep{thomas2019cvar}.

\textbf{Plain/shared \tis{}} uses $\widehat\sigma_g$; \textbf{anchored \tis{}} (\textbf{anchored}/\textbf{Anch.}) averages the tail and occupancy designs. \textbf{Oracle+floor} uses exact $\sigma_g$ and is a population allocation reference, not a finite-budget MSE lower bound. \textbf{Reachability only} and \textbf{local scale only} replace $\sigma_b=d_b\tau_b$ by exact $d_b$ or $\tau_b$ (Section~\ref{sec:supp-untied-factorization}), separating threshold-dependent adjoint mass from local shortfall variability; $d_b$ need not equal occupancy. \textbf{\tis{} no-cov} removes cross-layer covariance from the allocation score while retaining pooled estimation, with \textbf{oracle no-cov} its population analogue; \textbf{untied \tis{}} fits separate layer kernels under the same total budget. \textbf{MC-UCB (frozen)} sequentially allocates using uncertainty in pilot-frozen influence scores (Section~\ref{sec:supp-cliffwalking}), and \textbf{pilot answer entropy} uses Shannon entropy of pilot answer frequencies, summed over confidence labels.

\subsubsection{Controlled stochastic Markov reward process}
\label{sec:supp-controlled}
This auxiliary multi-step check supports the mechanism behind \textbf{Q1--Q2}: in the untied setting, the oracle score combines propagation to the root and local shortfall variability. The $H=5$ MRP has five states per layer, rewards in $\{0,1/2,1\}$, grid spacing $\Delta=1/2$, and stochastic rewards and transitions in every block, giving $G=21$ independently sampled layer--state blocks. At $\alpha=.1$, exact enumeration gives CVaR $.9086716$, margin $.0106854$, and $V_{\rm unif}/V^*=12.98416/7.53437=1.72332$.

We use total budgets $12{,}500$--$100{,}000$, the common $m_N=\lceil4N^{2/3}/G\rceil$ discarded pilot and $\lambda_N=N^{-1/4}$ floor, and 400 replications. At $N=100{,}000$, \tis{} spends 8,631 pilot queries and reaches $.615$ of uniform MSE; anchored \tis{}, learned mean, and learned occupancy are close, while rollouts are worse than uniform (Figure~\ref{fig:controlled-mrp}).

\begin{figure}[t]
\centering
\includegraphics[width=0.62\textwidth]{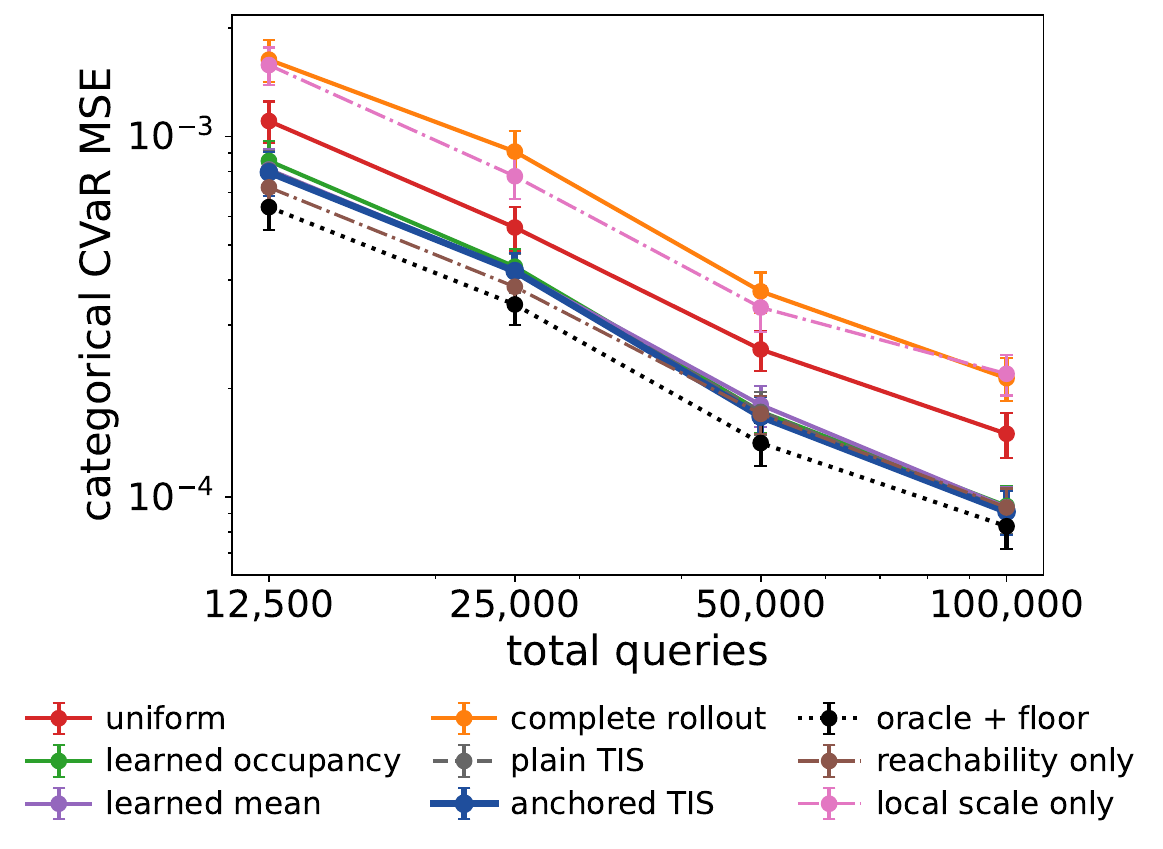}
\caption{Controlled $H=5$ MRP (21 untied blocks): CVaR MSE versus total queries for learned methods and population references; bars are $1.96$ Monte Carlo SEs.}
\label{fig:controlled-mrp}
\end{figure}

Across 50 perturbed stochastic instances (Dirichlet seeds 100--149), the median $V_{\rm unif}/V^*$ at $\alpha=.05,.1,.2$ is $1.95344,1.92311,1.82727$; ratios span $1.34558$--$3.56479$ over all instance--risk pairs, with positive margins throughout. This is a population-level breadth check.

\subsubsection{Learned-budget runs for the controlled separation}
\label{sec:supp-controlled-separation-runs}

This experiment directly supports \textbf{Q1--Q2}: it asks whether a charged pilot learns the population separation of Equation~\ref{eq:separation-family}. We use $G=10$, $\alpha=.1$, and $t\in\{0,\frac12,1\}$ under a protocol frozen before simulation. Analytically, CVaR is $.3$, the margin is $.09$, occupancy-to-oracle variance ratios are $1$, $1.29971$, and $10$, and conditional moments and the root law agree across $t$ to $7\times10^{-17}$.

Each cell uses $1{,}000$ replications at 25, 50, 100, 200, and 400 queries per kernel; pilot size, floor, rounding, and fallback follow the common protocol, and rollouts use the same transition budget on an independent stream. Because the one-step policy fixes occupancy at $1/G$, uniform is also the no-pilot occupancy design; \textbf{occupancy + pilot} discards the common pilot before using that same allocation, isolating pilot cost. At 100/400 queries per kernel the pilot consumes 40/101 draws per group ($40\%/25.25\%$); learned mean, \tis{}, and the anchor pay the same cost, whereas uniform, rollouts, and oracle+floor do not. Rollouts randomize actions while uniform fixes equal counts. At $t=0$, each group's zero-reward probability is $.01$, so the pilots miss it with probability $.669/.362$ at 100/400 queries, explaining why learning can hurt when uniform is optimal. At $t=1$, the median \tis{} share of the informative kernel rises from $.62$ to $.88$ across budgets (oracle share $1$); its 10th percentile is $.10$--$.75$ at 25--50 queries, exposing the underallocation that anchoring mitigates. Table~\ref{tab:controlled-sep-uncertainty} reports absolute MSEs and Monte Carlo SEs for the main-table cells.

\begin{table}[t]
\centering
\small
\setlength{\tabcolsep}{3pt}
\caption{Absolute MSE $\times10^5$ (Monte Carlo SE) for the six main controlled-separation settings; $1{,}000$ replications per entry.}
\label{tab:controlled-sep-uncertainty}
\begin{tabular}{lrrrrrr}
\toprule
\tableheadrow
 & \multicolumn{2}{c}{$t=0$} & \multicolumn{2}{c}{$t=\frac12$} & \multicolumn{2}{c}{$t=1$}\\
\tableheadrow
Queries/kernel & 100 & 400 & 100 & 400 & 100 & 400\\
\midrule
Uniform & $\mathbf{11.07}\,(0.48)$ & $2.98\,(0.13)$ & $11.44\,(0.50)$ & $\mathbf{2.69}\,(0.12)$ & $9.79\,(0.44)$ & $2.46\,(0.12)$\\
Occupancy + pilot & $17.68\,(0.79)$ & $3.89\,(0.17)$ & $19.12\,(0.86)$ & $3.73\,(0.17)$ & $16.51\,(0.76)$ & $3.22\,(0.16)$\\
Learned mean & $18.05\,(0.79)$ & $3.82\,(0.16)$ & $19.53\,(0.92)$ & $3.74\,(0.16)$ & $18.32\,(1.07)$ & $3.30\,(0.17)$\\
Complete rollout & $11.16\,(0.56)$ & $\mathbf{2.84}\,(0.12)$ & $\mathbf{11.27}\,(0.50)$ & $2.87\,(0.12)$ & $10.84\,(0.46)$ & $2.90\,(0.13)$\\
\primaryrow \tis{} & $59.96\,(3.19)$ & $12.82\,(0.66)$ & $41.45\,(2.68)$ & $8.47\,(0.44)$ & $\mathbf{2.17}\,(0.11)$ & $\mathbf{0.40}\,(0.02)$\\
\primaryrow anchored \tis{} & $22.55\,(0.98)$ & $4.52\,(0.20)$ & $19.02\,(0.97)$ & $3.14\,(0.14)$ & $3.65\,(0.17)$ & $0.66\,(0.03)$\\
\referencerow Oracle + floor & $11.07\,(0.48)$ & $2.98\,(0.13)$ & $8.62\,(0.42)$ & $2.08\,(0.09)$ & $1.26\,(0.06)$ & $0.30\,(0.01)$\\
\bottomrule
\end{tabular}
\end{table}

\subsubsection{Seasonal base-stock inventory evaluation}
\label{sec:supp-inventory}
This stage-dependent benchmark supports \textbf{Q2}; each $(h,s)$ is a separate query group (the untied case of Section~\ref{sec:supp-notation}). Inventory is $\{0,\ldots,6\}$, a fixed seasonal policy orders toward four or five units over $H=8$, demand is truncated Poisson with seasonally varying mean, and a disruption with probability $.06$ ($.08$ at capacity) removes one extra unit and lowers the reward category. Sales, ordering, holding, and lost-demand terms determine profit, quantized to $\{0,1/2,1\}$ on grid $\Delta=1/2$.

From zero initial inventory at $\alpha=.1$, backward structural reachability leaves $B=41$ blocks. Exact enumeration gives categorical CVaR $1.1512371$, margin $0.0436202$, and
\[
 V_{\rm unif}=7.54133,\qquad V^*=4.13541,\qquad V_{\rm unif}/V^*=1.82360.
\]
Budgets are 150, 300, 600, and 1,200 queries per retained block (6,150--49,200 total calls); pilot, floor, rounding, and sample splitting follow the controlled-MRP protocol. We use 300 replications.

\begin{table}[t]
\centering
\small
\caption{Inventory MSE $\times10^3$ over 300 replications versus charged queries per retained block.}
\label{tab:inventory-full}
\begin{tabular}{lrrrr}
\toprule
\tableheadrow
Method & 150 & 300 & 600 & 1,200\\
\midrule
Uniform & 1.232 & 0.583 & 0.303 & 0.176\\
\referencerow Reachability only & 1.091 & 0.659 & 0.297 & 0.177\\
\referencerow Local scale only & 1.908 & 1.049 & 0.531 & 0.285\\
\referencerow Oracle + floor & 0.731 & 0.329 & 0.150 & 0.100\\
Learned occupancy & 1.517 & 0.634 & 0.311 & 0.161\\
Learned mean & 1.361 & 0.571 & 0.302 & 0.152\\
Complete rollout & 2.794 & 1.774 & 0.836 & 0.391\\
\primaryrow anchored \textsc{TIS} & 1.012 & 0.456 & 0.217 & 0.137\\
\primaryrow \textbf{\textsc{TIS}} & \best{1.007} & \best{0.414} & \best{0.174} & \best{0.116}\\
\bottomrule
\end{tabular}
\end{table}

Pilot fractions are $22.0\%,17.3\%,13.8\%,10.9\%$. \tis{} has the lowest observed learned-method MSE at every budget (Table~\ref{tab:inventory-full}); at 1,200 queries per block it is $15.4\%$ above oracle+floor. Rollouts exceed uniform throughout: with $H=8$, 49,200 transitions yield only 6,150 returns, or 615 returns' worth of mass in the worst decile.

At 600 queries per block, 300 independent pilots per setting isolate allocation learning through $V(\widehat w)/V^*$; this excludes the main-sample cost of larger pilots, for which the leading cost-inclusive MSE is $V(\widehat w)/(N-Gm_N)$. Table~\ref{tab:pilot-sensitivity} gives medians and 90th percentiles; smaller exploration exponents correspond to larger floors.

\begin{table}[t]
\centering
\small
\caption{Inventory pilot sensitivity: median (90th percentile) $V(\widehat w)/V^*$ over 300 pilots.}
\label{tab:pilot-sensitivity}
\begin{tabular}{rrrr}
\toprule
\tableheadrow
Pilot fraction & $\lambda=N^{-1/6}$ & $N^{-1/4}$ & $N^{-1/3}$\\
\midrule
$3.7\%$ & 1.124 (1.184) & \best{1.111 (1.184)} & 1.123 (1.212)\\
$7.2\%$ & 1.082 (1.108) & 1.057 (1.087) & \best{1.054 (1.088)}\\
$14.2\%$ & 1.064 (1.073) & 1.034 (1.043) & \best{1.028 (1.038)}\\
$28.3\%$ & 1.055 (1.060) & 1.023 (1.028) & \best{1.015 (1.020)}\\
\bottomrule
\end{tabular}
\end{table}

\begingroup
\subsubsection{Slippery CliffWalking with stationary state--action kernels}
\label{sec:supp-cliffwalking}
Figure~\ref{fig:main}(a) provides the shared-kernel \textbf{Q2} benchmark: stationary state--action laws are reused across Bellman stages. We use Gymnasium slippery CliffWalking-v1 \citep{towers2024gymnasium}, a $4\times12$ grid with start 36, goal 47, cliff cells 37--46, and four actions; each action realizes its intended or either perpendicular direction with probability $1/3$, boundary moves stay put, cliffs reset to start, and the fixed-horizon wrapper makes the goal absorbing.

We set $H=20$ and map raw rewards $-100,-1,0$ to $0,.99,1$. With zero reward after termination, the normalized return is exactly $20+G_{\rm episodic}/100$, so the transform preserves lower-tail ordering and CVaR. The grid contains every sum of 20 elements of $\{0,.99,1\}$ (231 atoms), hence the Bellman recursion is exact. The primary policy is $\epsilon=.05$-soft around a route crossing row 2 immediately above the cliff; a prespecified safe diagnostic policy crosses row 1, and off-route states first steer toward the chosen corridor. Structural reachability leaves 149 state--action groups, including the absorbing goal. For the primary policy at $\alpha=.1$, CVaR is $11.8709354$, the margin is $0.0139085$, and $V_{\rm unif}/V^*=84.6973$; across both policies and $\alpha\in\{.05,.1,.2\}$, exact ratios span $77.5748$--$116.3908$. Their minimum route lengths are 13 and 15, so $H=20$ allows completion and slippery deviations with a tractable exact grid. All reported methods evaluate the primary policy; the safe policy is a separate policy--risk check.

Population occupancy and mean-influence references use exact scores with the same $N^{-1/4}$ floor; learned counterparts use the pilot fit. The discarded pilot is $m_N=\max\{8,\lceil4N^{2/3}/149\rceil\}$ per group. For MC-UCB \citep{carpentier2015adaptive}, whose original guarantee concerns fixed-stratum weighted means rather than our Bellman estimator, we freeze the pilot influence scores and treat fresh main draws as bounded score arms. After two draws per group it selects the largest
\[
 B_{g,t}=\frac{1/G}{T_{g,t-1}}
 \left(\widehat s_{g,t-1}+\frac{2\beta}{\sqrt{T_{g,t-1}}}\right),
\]
where $T_{g,t-1}$ is the main-sample count and $\widehat s_{g,t-1}$ the score standard deviation. We use the published bounded-arm choices $\delta=n^{-9/2}$ and $\beta=c\sqrt{\log(2/\delta)}$, with $n$ the main budget and $c$ the largest frozen-arm range. Positive affine rescaling leaves the rule unchanged; computing $c$ uses the public three-outcome support but not its probabilities, which is extra information relative to unknown-support simulators. MC-UCB and \tis{} share the charged pilot, exhaust the same main budget, use the same categorical plug-in, and select no hyperparameter from outcomes. We use 500 independent replications. \label{sec:supp-cliff-full-note}\purplerev{The no-covariance ablations in Table~\ref{tab:cliff-full} show that most gains here come from pooling reused kernels; the theoretically required cross-layer covariance has only a small numerical effect.}

\begin{table}[t]
\centering
\small
\caption{CliffWalking MSE $\times10^3$ over 500 replications versus charged queries per reachable kernel.}
\label{tab:cliff-full}
\begin{tabular}{lrrrr}
\toprule
\tableheadrow
Method & 50 & 100 & 200 & 400\\
\midrule
Uniform & 1683.126 & 818.420 & 363.617 & 216.877\\
\referencerow Occupancy (population) & 26.269 & 12.474 & 5.727 & 2.741\\
\referencerow Mean influence (population) & 19.182 & 9.731 & 4.371 & 2.164\\
Learned mean & \best{27.413} & 11.367 & 5.349 & 2.307\\
Learned occupancy & 35.792 & 15.924 & 6.450 & 3.189\\
Complete rollout & 65.883 & 32.221 & 16.257 & 7.954\\
MC-UCB (frozen) & 1863.426 & 840.730 & 351.955 & 171.316\\
\referencerow Oracle no-cov & 16.203 & 7.462 & 3.416 & 1.655\\
\referencerow Oracle + floor & 16.288 & 7.631 & 3.434 & 1.674\\
\textsc{TIS} no-cov & 29.979 & 10.821 & \best{4.242} & \best{1.875}\\
\primaryrow anchored \textsc{TIS} & 28.427 & 11.551 & 5.362 & 2.182\\
\primaryrow \textbf{\textsc{TIS}} & 29.560 & \best{10.755} & 4.305 & 1.884\\
\bottomrule
\end{tabular}
\end{table}

Pilot fractions are $22.0\%,17.0\%,13.0\%,10.25\%$. At 50 queries per kernel, \tis{} is resolved worse than population mean influence and oracle+floor, while its differences from population occupancy and learned mean are unresolved; it beats population occupancy from 100 queries and population mean at 400. At 400, MSE reductions are $31.3\%$, $12.9\%$, and $18.3\%$ versus population occupancy, population mean, and learned mean. Deleting covariance from the learned score yields no resolved difference at any budget. Rollouts beat uniform but trail the occupancy and influence designs, except MC-UCB. MC-UCB's mean realized first-order variance ratios are $96.69,85.80,76.18,67.77$ versus uniform's $84.70$; under its published significance schedule, the confidence bonus dominates at these pull counts, keeping allocation near uniform and yielding only modest improvement at larger budgets.

\subsubsection{Additional public stationary Gymnasium environments}
\label{sec:supp-public-gym}
These breadth checks test whether the shared-kernel conclusions extend beyond CliffWalking and expose a regime in which a smoother mean score can be preferable. We use stationary \texttt{FrozenLake-v1} and rainy \texttt{Taxi-v4} \citep{towers2024gymnasium}, with fixed policies, reused state kernels, the CliffWalking budgets and estimator, and prespecified screening for at least two stochastic reachable kernels, nonzero oracle influence variance, a positive categorical margin, and an exact finite grid.

\textbf{FrozenLake.} The slippery $8\times8$ task leaves 50 reachable nonterminal kernels under a fixed success-maximizing policy. Its failure probability $.13704$ makes the first-order tail signal zero at $\alpha=.05,.1$, so $\alpha=.2$ is the smallest prespecified passing level; there CVaR is $.3147769$, the margin is $.0629554$, and $V_{\rm unif}/V^*=3.92589$. \textbf{Rainy Taxi.} The fixed shortest-route policy leaves 37 reachable nonterminal kernels. The positive affine reward transform $(r+1)/21$ gives the exact grid $\{0,\ldots,220\}/21$ and preserves CVaR; at $\alpha=.1$, transformed CVaR is $9.0541409$, the margin is $.0121392$, and $V_{\rm unif}/V^*=2.33938$.

Both tasks use 300 replications at 50--400 queries per reachable kernel. At 400 queries, \tis{} MSE is $.00791\,(.00079)$ on FrozenLake and $.00020\,(.00002)$ on Taxi, versus uniform $.01464\,(.00119)$ and $.00033\,(.00003)$. Learned mean is better on FrozenLake ($.00577\,(.00051)$) and tied at displayed precision on Taxi ($.00020\,(.00002)$); differences from population occupancy and covariance-deleted \tis{} are unresolved, while oracle+floor is best on both. At smaller budgets, pilot cost can make \tis{} worse than uniform or population references. These checks reinforce the structural limit rather than a universal win: when tail and mean signals align, the smoother mean score can be as good as or better than tail targeting.

\subsubsection{Structured language-model workflow evaluation}
\label{sec:llm-protocol}
\begin{table}[htbp]
\centering
\small
\setlength{\tabcolsep}{5pt}
\caption{\textbf{MMLU-Pro: anchoring mitigates plain TIS's observed failures.} Panel MSE/uniform MSE at $H=6$, $\alpha=.1$, 400 queries/kernel.}
\label{tab:llm-compact}
\begin{tabular}{lrrrrr}
\toprule
\tableheadrow
Generator & \tis{} & occup. & rollout & \cellcolor{tableprimary}\textbf{anchored} & \cellcolor{tablereference}\shortstack{oracle\\+floor}\\
\midrule
Qwen3-4B & 1.38 & .090 & .074 & \cellcolor{tableprimary}\best{.057} & \cellcolor{tablereference}.058\\
Phi-4-mini & .49 & .113 & .072 & \cellcolor{tableprimary}\best{.066} & \cellcolor{tablereference}.025\\
Granite-4.2-8B & .15 & .126 & .107 & \cellcolor{tableprimary}\best{.070} & \cellcolor{tablereference}.034\\
Mistral-24B & .21 & .138 & .127 & \cellcolor{tableprimary}\best{.093} & \cellcolor{tablereference}.055\\
Qwen3-32B & .41 & .079 & \best{.055} & \cellcolor{tableprimary}.056 & \cellcolor{tablereference}.028\\
GLM-4-32B & 1.77 & .087 & .069 & \cellcolor{tableprimary}\best{.065} & \cellcolor{tablereference}.034\\
\bottomrule
\end{tabular}
\end{table}
These frozen-law experiments support \textbf{Q3}: exact targets diagnose plain-\tis{} pilot failures and the effect of anchoring. We use 50 stratified ten-option MMLU-Pro questions \citep{wang2024mmlupro}. Qwen3-4B-Instruct-2507 is primary \citep{qwen3instruct2507}; Phi-4-mini-instruct and Granite-4.2-8B are cross-family checks \citep{phi4mini,granite42}; follow-up generators ($\dagger$) are Mistral-Small-24B-Instruct-2501, Qwen3-32B (thinking disabled), and GLM-4-32B-0414 \citep{mistralsmall24b,qwen332b,glm432b}. All use the same panel, prompts, decoder, policy, budgets, and methods; floor, utility, panel, and policy sensitivities are post hoc.

Each question is a separate finite-horizon Markov reward process. From $S_0=\varnothing$, state $S_t=(j_t,c_t)$ records the latest answer $j\in\{1,\ldots,10\}$ and confidence $c\in\{.1,\ldots,.9\}$. The root action is \texttt{solve}; reviews use \texttt{reconsider} for $c\le.3$, \texttt{challenge} for $.4\le c\le.6$, and \texttt{verify} for $c\ge.7$. The controller tracks $h=H-t$, but prompts omit history and stage index, so a fixed state and action have the same next-response law at every stage. Thus the pair is Markov, the dynamics are stationary, and any retained prompt can be queried directly rather than reached by rollout.

\begin{table}[htbp]
\centering\small
\caption{MMLU-Pro and FinQA workflow models; both use the confidence-band review policy.}
\label{tab:workflow-models}
\begin{tabular}{@{}>{\raggedright\arraybackslash}p{.18\linewidth}>{\raggedright\arraybackslash}p{.38\linewidth}>{\raggedright\arraybackslash}p{.38\linewidth}@{}}
\toprule
\tableheadrow
Component & MMLU-Pro & FinQA\\
\midrule
State after a call & Answer $j$ and confidence $c$ ($10\times9$ possibilities) & Candidate $i$ and confidence $c$ ($8\times9$ possibilities)\\[3pt]
Initial action & \texttt{solve} & \texttt{select}\\[3pt]
Number of calls & $H=2,4,6$, including the initial solve & $H=3$: selection and two reviews\\[3pt]
Stage index & $h=H-t$ calls remaining; stop at $h=0$ & $h=3,2,1,0$; stop at $h=0$
\\[3pt]
Query groups & $1+90=91$ per question & $1+72=73$ per question and workflow\\[3pt]
Reward on $s\to s'$ & $r_q(s')$: normalized Brier utility & $[u(s')-u(s)+1]/2$: shifted utility change\\[3pt]
Total return & $\sum_{t=1}^{H}r_q(S_t)$ & $[H+u(S_H)]/2$, with $u(S_0)=0$\\
\bottomrule
\end{tabular}
\end{table}
Response probabilities factor as restricted answer softmax times conditional confidence softmax, with each label one token. A query to $g=(q,s,a)$ draws $S'$ and reward $r_q(S')$:
\begin{equation}
 P_{q,s,a}(r,s')=
 P_{\rm LLM}\!\left(s'\mid\operatorname{prompt}(q,s,a)\right)
 \ind\{r=r_q(s')\}.
 \label{eq:llm-query-kernel}
\end{equation}
For example, $(1,.2)$ selects \texttt{reconsider}; response $(3,.8)$ earns $r_q(3,.8)$ and, if a call remains, selects \texttt{verify}. One fitted law and sample count are shared across all uses of $g$, but its return effect depends on calls remaining; \tis{} therefore sums these stage effects before computing the group scale. Allocation is learned separately for every question, generator, and $H$. With one action per state there are 91 groups per question (4,550 per generator).

\textbf{Reward and exact target.} For response $y=(j,c)$, define the ten-class forecast and normalized Brier utility \citep{brier1950}
\begin{equation}
 p_k(y)=
 \begin{cases}
 c,&k=j,\\
 (1-c)/9,&k\ne j,
 \end{cases}
 \qquad
 r_q(y)=1-\frac12\sum_{k=1}^{10}
 \left(p_k(y)-\ind\{k=j_q^*\}\right)^2 ,
 \label{eq:llm-brier-reward}
\end{equation}
where $j_q^*$ is the published correct option; no learned judge is used. The return $G_{H,q}=\sum_{t=1}^{H}r_q(S_t)$ includes the initial answer and every review, measuring cumulative response utility (FinQA instead uses terminal severity). Dividing by $H$ rescales CVaR and MSE but leaves within-horizon ratios and allocations unchanged.

The grid is the union of all attainable partial-return supports for $h=0,\ldots,H$ plus endpoints $0,H$: 121, 617, and 983 atoms at $H=2,4,6$. Closure makes both the categorical recursion and stop-loss interpolation exact; each question--horizon pair has its own target, margin, and scales. We use $\alpha=.1$ primarily and $.2$ as a sensitivity check. Partial returns are essential: two deterministic $.5$ rewards have true sum $1$, but backing them up on $\{0,1,2\}$ yields $\tfrac14\delta_0+\tfrac12\delta_1+\tfrac14\delta_2$, preserving the mean while driving CVaR$_{.1}$ to $0$; including $.5$ restores exactness. Across all 1,800 question--horizon--risk cells, full-horizon-only grids have median/90th-percentile absolute CVaR gaps $.02/.35$, whereas closed-grid gaps are below $10^{-14}$. All reported runs use the closed grids; grid smoothing partly masks the primary generator's depth failure.

\textbf{Ground truth and validation.} Enumerating each prompt's 90 probabilities and applying dynamic programming gives exact targets, influences, and population references; sample-only methods receive fresh $(R,S')$ draws. Enumeration is practical here only because the output alphabet is restricted, so the study measures logical-query efficiency under frozen laws while remaining relevant to longer outputs, restricted APIs, and stochastic tools. All kernels pass the designated direct-generation audit. Methods otherwise follow Section~\ref{sec:supp-experiments}; MC-UCB uses its sequential index.

\textbf{Budgets and uncertainty.} Each generator--horizon cell has $J=300$ replications at $b=50,100,200,400$ queries per group, so $N=91b=4{,}550,9{,}100,18{,}200,36{,}400$ draws per question. The charged pilot $m_N=\max\{8,\lceil4N^{2/3}/91\rceil\}$ uses $13,20,31,49$ draws per group and $\lambda_N=N^{-1/4}$; the remaining fresh draws follow the learned shares and rounding rule, and the pilot is excluded from the final estimate. One logical query is a constrained two-token draw; ground-truth enumeration is outside this budget. Fixed-allocation methods share outcome streams, sequential methods use separate streams, and replications are independent. For squared-error difference $D_{q,\ell}$ between methods A and B on question $q$ and replication $\ell$,
\begin{equation}
 D_\ell=\frac1Q\sum_{q=1}^Q D_{q,\ell},\qquad
 \widehat\Delta=\frac1J\sum_{\ell=1}^J D_\ell,\qquad
 \widehat{\operatorname{se}}(\widehat\Delta)=\frac{s_D}{\sqrt J},
 \label{eq:llm-panel-se}
\end{equation}
where $s_D^2=(J-1)^{-1}\sum_{\ell}(D_\ell-\widehat\Delta)^2$ and $z=\widehat\Delta/\widehat{\operatorname{se}}(\widehat\Delta)$. This fixed-panel calculation allows arbitrary within-replication dependence among questions; uncertainty is simulation error conditional on the panel, not population generalization.

\textbf{Checks and main pattern.} All 150 question--horizon margins are positive at each risk level, and shared and population-untied targets agree within $10^{-10}$. Median population $V_{\rm unif}/V^*$ at $\alpha=.1$ is $46,52,37,27,54,42$ in table order. Table~\ref{tab:supp-llm-full} shows occupancy below plain \tis{} in all 18 settings and the anchor below both; rollouts have the lowest sample-only point MSE for all generators at $H=2$, while the anchor does so for five at $H=6$. MC-UCB gives no consistent gain over uniform ($.956$--$1.48$), population occupancy/mean are strong ($.024$--$.103/.016$--$.074$), and answer entropy is poor ($2.09$--$7.74$). Pooling matters more than covariance correction: deeper untied evaluation splits data across $1+90(H-1)$ pilot groups, whereas deleting covariance changes little (population no-cov is within $.003$ of oracle+floor in uniform-MSE units). Tables~\ref{tab:supp-llm-budget}--\ref{tab:supp-llm-alpha} give budget and risk sensitivity.

\begin{table}[t]
\centering
\small
\setlength{\tabcolsep}{3pt}
\caption{MMLU-Pro, $\alpha=.1$, 400 queries/shared kernel: MSE/uniform MSE. $\dagger$ marks follow-up generators; Table~\ref{tab:supp-llm-absolute} gives absolute \tis{} MSE.}
\label{tab:supp-llm-extension}
\label{tab:supp-llm-full}
\begin{tabular}{lrrrrrrrrrr}
\toprule
\tableheadrow
Generator & $H$ & \shortstack{plain\\\tis{}} & \cellcolor{tableprimary}\shortstack{anchored\\\tis{}} & \shortstack{learned\\mean} & \shortstack{learned\\occup.} & rollout & \shortstack{\tis{}\\no cov.} & untied & MC-UCB & \cellcolor{tablereference}\shortstack{oracle\\+floor}\\
\midrule
Qwen3-4B & 2 & .719 & \cellcolor{tableprimary}.030 & .307 & .036 & \best{.028} & .719 & .724 & 1.04 & \cellcolor{tablereference}.192\\
 & 4 & 1.02 & \cellcolor{tableprimary}.059 & .707 & .069 & \best{.057} & 1.02 & 4.28 & 1.08 & \cellcolor{tablereference}.058\\
 & 6 & 1.38 & \cellcolor{tableprimary}\best{.057} & 1.00 & .090 & .074 & 1.35 & 7.67 & 1.06 & \cellcolor{tablereference}.058\\
\midrule
Phi-4-mini & 2 & .181 & \cellcolor{tableprimary}.027 & .106 & .032 & \best{.023} & .181 & .183 & 1.03 & \cellcolor{tablereference}.015\\
 & 4 & .326 & \cellcolor{tableprimary}\best{.047} & .264 & .074 & .048 & .327 & 1.33 & 1.11 & \cellcolor{tablereference}.020\\
 & 6 & .487 & \cellcolor{tableprimary}\best{.066} & .425 & .113 & .072 & .493 & 3.00 & 1.48 & \cellcolor{tablereference}.025\\
\midrule
Granite-4.2-8B & 2 & .172 & \cellcolor{tableprimary}.033 & .038 & .042 & \best{.029} & .172 & .160 & 1.06 & \cellcolor{tablereference}.021\\
 & 4 & .127 & \cellcolor{tableprimary}\best{.051} & .065 & .079 & .062 & .127 & .834 & .981 & \cellcolor{tablereference}.027\\
 & 6 & .154 & \cellcolor{tableprimary}\best{.070} & .089 & .126 & .107 & .164 & 1.56 & 1.09 & \cellcolor{tablereference}.034\\
\midrule
Mistral-24B$^\dagger$ & 2 & .140 & \cellcolor{tableprimary}.038 & .041 & .043 & \best{.026} & .140 & .123 & .967 & \cellcolor{tablereference}.020\\
 & 4 & .148 & \cellcolor{tableprimary}\best{.067} & .100 & .089 & .075 & .144 & .763 & 1.07 & \cellcolor{tablereference}.038\\
 & 6 & .214 & \cellcolor{tableprimary}\best{.093} & .159 & .138 & .127 & .221 & 1.90 & 1.13 & \cellcolor{tablereference}.055\\
\midrule
Qwen3-32B$^\dagger$ & 2 & .321 & \cellcolor{tableprimary}.028 & .199 & .030 & \best{.023} & .321 & .308 & .956 & \cellcolor{tablereference}.108\\
 & 4 & .410 & \cellcolor{tableprimary}\best{.044} & .373 & .058 & \best{.044} & .409 & 2.43 & 1.03 & \cellcolor{tablereference}.036\\
 & 6 & .411 & \cellcolor{tableprimary}.056 & .440 & .079 & \best{.055} & .412 & 3.71 & 1.08 & \cellcolor{tablereference}.028\\
\midrule
GLM-4-32B$^\dagger$ & 2 & .438 & \cellcolor{tableprimary}.035 & .426 & .039 & \best{.025} & .438 & .425 & 1.06 & \cellcolor{tablereference}.027\\
 & 4 & 1.08 & \cellcolor{tableprimary}.062 & .790 & .079 & \best{.055} & 1.08 & 4.24 & 1.28 & \cellcolor{tablereference}.042\\
 & 6 & 1.77 & \cellcolor{tableprimary}\best{.065} & 1.42 & .087 & .069 & 1.80 & 6.95 & 1.29 & \cellcolor{tablereference}.034\\
\bottomrule
\end{tabular}
\end{table}

\textbf{Depth failures and anchoring.} At $H=6$, plain \tis{} exceeds uniform MSE for Qwen3-4B ($1.38$) and GLM-4-32B ($1.77$), and more budget does not reliably remove the failures (Table~\ref{tab:supp-llm-budget}); at $H=4$ the full sweep reaches $1.10$ and $1.21$. At $\alpha=.2$, only GLM $H=6$ remains above uniform ($1.52$), while its $H=4$ difference is unresolved. For GLM's primary $H=4,6$ cells, bias is at most $3\%$ of MSE but realized population-scale variance is $1.36$ and $2.66$ times uniform: rare pilots starve influential groups, making $\sigma_g^2/w_g$ large (Section~\ref{sec:pilot-mechanism}), outside Proposition~\ref{prop:finite-pilot}'s local regime.

A retrospective floor sweep supports the underallocation diagnosis: at $H=6$, $\alpha=.1$, 400 queries, increasing the floor from $.072$ to $.40$ lowers Qwen and GLM \tis{}/uniform MSE from $1.38/1.77$ to $.47/.55$, with still larger floors reducing them further. Because the sweep followed the failures, it is diagnostic rather than a tuning result; held-out FinQA below tests the subsequently frozen anchor. Workflow selection on this MMLU panel is nearly saturated, so the informative outcome is estimation error rather than final pairwise choice.

Three post-hoc variations point to the same pilot-reliability mechanism. Under an asymmetric confidence-weighted utility, Qwen plain-\tis{}/uniform ratios at $H=2,4,6$ are $.39,1.02,1.06$, versus anchor $.031,.053,.056$; on a disjoint high-stakes panel they are $.96,.42,.72$ versus anchor $.039,.052,.052$. A cautious policy again gives plain \tis{} $1.04$ of uniform at $H=6$ while occupancy is $.13$. These checks support the diagnosis only; none was used to choose the frozen anchor.

\begin{table}[t]
\centering\small
\caption{Absolute \tis{} panel MSE (Monte Carlo SE), $\alpha=.1$, 400 queries/kernel; 300 replications.}
\label{tab:supp-llm-absolute}
\begin{tabular}{lrrr}
\toprule
\tableheadrow
Generator & $H=2$ & $H=4$ & $H=6$\\
\midrule
Qwen3-4B & 2.2e-04 (1e-05) & 9.7e-04 (6e-05) & 2.7e-03 (2e-04)\\
Phi-4-mini & 2.8e-04 (1e-05) & 1.7e-03 (1e-04) & 5.1e-03 (5e-04)\\
Granite-4.2-8B & 9.8e-05 (5e-06) & 1.3e-04 (1e-05) & 2.4e-04 (1e-05)\\
Mistral-24B$^\dagger$ & 1.2e-04 (2e-05) & 3.5e-04 (3e-05) & 9.6e-04 (1e-04)\\
Qwen3-32B$^\dagger$ & 5.7e-04 (4e-05) & 2.9e-03 (3e-04) & 7.5e-03 (8e-04)\\
GLM-4-32B$^\dagger$ & 4.1e-04 (3e-05) & 4.2e-03 (2e-04) & 1.9e-02 (9e-04)\\
\bottomrule
\end{tabular}
\end{table}

\begin{table}[t]
\centering\small
\caption{Budget sensitivity at $H=6$, $\alpha=.1$: shared/untied \tis{} MSE relative to uniform at equal total budget.}
\label{tab:supp-llm-budget}
\begin{tabular}{lrrrrrrrr}
\toprule
\tableheadrow
 & \multicolumn{4}{c}{shared \tis{}} & \multicolumn{4}{c}{untied \tis{}}\\
\tableheadrow
Generator & 50 & 100 & 200 & 400 & 50 & 100 & 200 & 400\\
\midrule
Qwen3-4B & .819 & .851 & \best{1.22} & \best{1.38} & 8.59 & 4.53 & 5.44 & 7.67\\
Phi-4-mini & .448 & .499 & .469 & .487 & 4.29 & 2.02 & 2.55 & 3.00\\
Granite-4.2-8B & .469 & .386 & .248 & .154 & 5.49 & 1.64 & 2.06 & 1.56\\
Mistral-24B$^\dagger$ & .497 & .462 & .391 & .214 & 5.81 & 1.58 & 2.08 & 1.90\\
Qwen3-32B$^\dagger$ & .681 & .606 & .497 & .411 & 5.45 & 2.71 & 2.91 & 3.71\\
GLM-4-32B$^\dagger$ & \best{1.02} & \best{1.44} & \best{1.76} & \best{1.77} & 6.39 & 2.65 & 4.81 & 6.95\\
\bottomrule
\end{tabular}
\end{table}

\begin{table}[t]
\centering\small
\caption{Risk sensitivity at $\alpha=.2$, 400 queries/kernel: \tis{} MSE/uniform MSE with uniform-minus-\tis{} $z$ (positive favors \tis{}); GLM $H=4$ is unresolved.}
\label{tab:supp-llm-alpha}
\begin{tabular}{lrrr}
\toprule
\tableheadrow
Generator & $H=2$ & $H=4$ & $H=6$\\
\midrule
Qwen3-4B & .195 (18.8) & .290 (14.4) & .613 (6.2)\\
Phi-4-mini & .115 (37.4) & .219 (25.0) & .521 (8.0)\\
Granite-4.2-8B & .120 (43.8) & .084 (42.6) & .088 (37.1)\\
Mistral-24B$^\dagger$ & .068 (49.5) & .175 (31.5) & .289 (16.9)\\
Qwen3-32B$^\dagger$ & .170 (33.5) & .232 (27.5) & .368 (16.2)\\
GLM-4-32B$^\dagger$ & .345 (20.6) & .929 (1.1) & \best{1.52} (-6.2)\\
\bottomrule
\end{tabular}
\end{table}

\endgroup

\subsubsection{Cost-to-accuracy analysis and computation accounting}
\label{sec:supp-cost-accounting}
A lower MSE at one budget need not imply fewer queries at every target accuracy, so we convert error curves to query cost and report computation separately. These comparisons are retrospective (no accuracy tolerance was declared before the runs), and charged query counts include pilots and rollout transitions.

\textbf{Multi-target relative cost.} At 21 RMSE targets in the common attained range, we use log--log first-crossing interpolation without extrapolation; out-of-grid crossings retain budget bounds. Pointwise 5th/95th-percentile envelopes come from empirical bootstrap when replication rows are available (CliffWalking \tis{}/learned mean, inventory \tis{}/uniform, all FinQA pairs) and Gaussian sensitivity otherwise; they are not simultaneous confidence bands.

\emph{CliffWalking.} \tis{} requires $.65$--$.84$ of learned-occupancy queries, $.32$--$.45$ of rollout queries, and $.84$--$1.05$ of learned-mean queries. Against learned mean, empirical envelopes are below one at the 12 strictest targets and above one at none; uniform's best tested RMSE exceeds \tis{}'s worst, censoring that comparison in \tis{}'s favor. \emph{Inventory.} Ratios are $.55$--$.72$ versus learned occupancy, $.58$--$.79$ versus learned mean, $.26$--$.30$ versus rollouts, and $.50$--$.83$ versus uniform, with envelopes below one at all 21 targets; only the uniform comparison uses the empirical bootstrap. \emph{FinQA.} On the 25--800-query grid, the anchor uses $.10$--$.17$ of uniform's charged queries on Qwen and $.23$--$.52$ on Phi, with empirical envelopes favoring it at all 21 targets. For Phi ordinary review, envelopes favor the anchor at 14 targets versus occupancy and the 10 strictest versus rollouts, while the baselines win at 4 and 8 targets; for unit-check review these counts are 12 and 6 for the anchor, and 0 and 12 for the baselines. Other contrasts are unresolved, and on Qwen both strong baselines require fewer queries at every target.

A single target can be misleading: uniform's largest-budget RMSE cannot rank sample-only CliffWalking designs because all reach it at the smallest tested budget; inventory shows savings across the studied range, but its half-budget crossing remains unresolved. \textbf{Computation is separate.} Table~\ref{tab:replay-timing} times complete sampling-and-estimation pipelines on an Intel i9-13900H with one BLAS/OMP thread (ten replications after two warm-ups). Uniform has no pilot; occupancy needs only the forward-visitation solve; \tis{}, the anchor, learned mean, and the occupancy--mean blend include the influence solve, with the latter three matching \tis{} within $4\%$, so the tail-specific score adds no computation over the strongest mean-based baseline. FinQA timing covers all 50 Phi ordinary-review questions (11--197 grid atoms). These frozen-law fixed-budget times are not live-call costs or time-to-equal-accuracy.

\begin{table}[htbp]
\centering\small
\caption{CPU milliseconds per replication for the full sampling-and-estimation pipeline at fixed query budgets; FinQA is the 50-question Phi ordinary-review panel. These are frozen-law replay times, not live-call costs.}
\label{tab:replay-timing}
\setlength{\tabcolsep}{4pt}
\begin{tabular}{lrrrrrrrr}
\toprule
\tableheadrow
\purplerev{Task} & \purplerev{Budget} & \purplerev{Uniform} & \purplerev{Occ.} & \purplerev{Mean} & \purplerev{\tis{}} & \purplerev{Anch.} & \purplerev{Mean blend} & \purplerev{Rollout}\\
\midrule
\purplerev{CliffWalking} & \purplerev{400} & \purplerev{110} & \purplerev{122} & \purplerev{394} & \purplerev{394} & \purplerev{396} & \purplerev{390} & \purplerev{\best{32}}\\
\purplerev{Inventory} & \purplerev{1,200} & \purplerev{\best{5.0}} & \purplerev{6.5} & \purplerev{10.7} & \purplerev{10.9} & \purplerev{10.8} & \purplerev{11.2} & \purplerev{5.2}\\
\purplerev{FinQA panel (50 q.)} & \purplerev{200} & \purplerev{658} & \purplerev{823} & \purplerev{2{,}031} & \purplerev{2{,}015} & \purplerev{2{,}039} & \purplerev{2{,}040} & \purplerev{\best{298}}\\
\bottomrule
\end{tabular}
\end{table}

\subsection{FinQA terminal-risk protocol and full results}
\label{sec:finqa-protocol}
\begin{figure}[htbp]
\centering
\includegraphics[width=\textwidth]{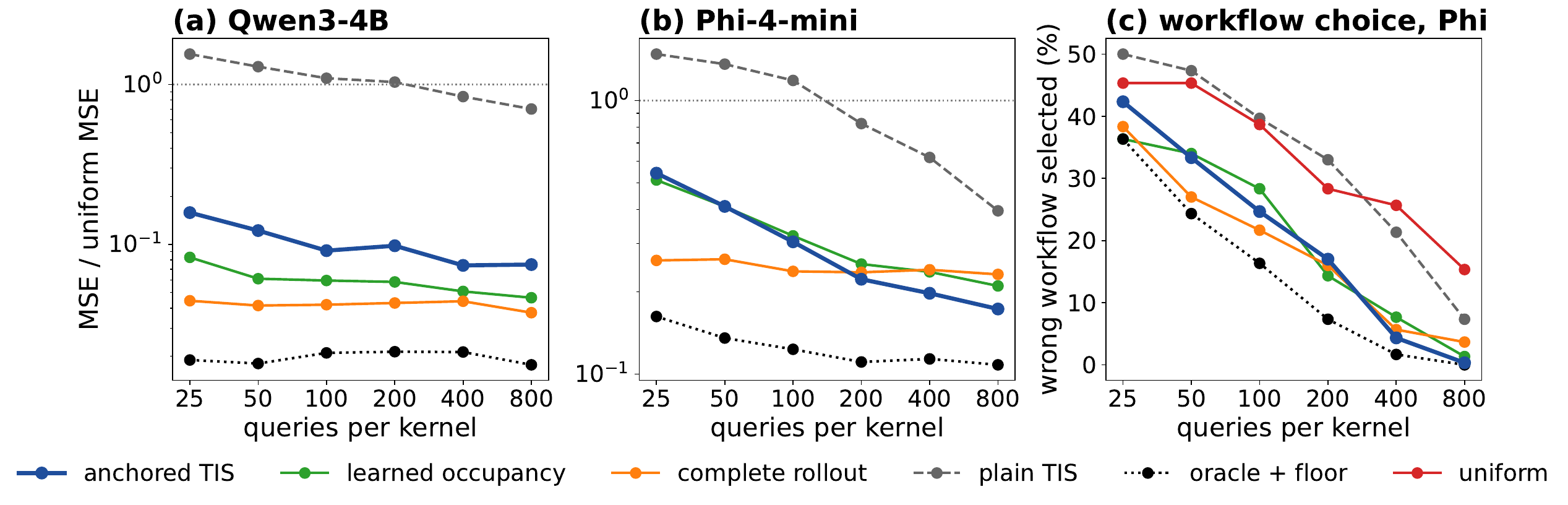}
\caption{\textbf{FinQA: the anchor beats occupancy and rollouts on Phi at 400/800 queries; Qwen favors the baselines.} (a,b) Mean panel-MSE/uniform-MSE ratio across workflows; (c) Phi wrong-selection rate. Table~\ref{tab:finqa-supp-est} gives per-workflow results.}
\label{fig:finqa-main}
\end{figure}
This held-out study supports \textbf{Q4}: does the frozen anchor improve estimation of numerical failure severity and workflow selection (Figure~\ref{fig:finqa-main})?

\textbf{Source and screening.} FinQA development items \citep{chen2021finqa} require finite nonzero executable gold answers and fit the context budget. Each eight-candidate bank is generated before gold-based scoring and admitted only if severity-utility spread is at least $.25$; this screen was fixed after 16 of the first 20 banks had constant utility, so the study targets material severity disagreement. Development admitted 20/170 items; after protocol freeze, a disjoint held-out scan admitted 50/312.

\textbf{Workflow and kernels.} Candidate banks are generated at temperature $.8$ and frozen, with parsing failures kept as invalid candidates. The root action is \texttt{select}; reviews use the MMLU confidence bands (Table~\ref{tab:workflow-models}). Prompts contain the financial context, question, full bank, current candidate--confidence pair, and review instruction, but no history or stage index. Two constrained output tokens determine the next state. The root plus 72 pairs gives 73 query groups per question, generator, and workflow, shared across stages. Ordinary and unit-and-sign workflows share the root prompt and policy but use different review templates, hence different transition laws. Runs have one selection and two reviews ($H=3$), with $\alpha=.1$.

\textbf{Severity utility.} For gold $y^*$ and candidate $y$, let $e_0=\min(|y-y^*|,|y/100-y^*|,|100y-y^*|)$, $t=.005|y^*|$, and $e=\max(0,e_0-t)$. Terminal utility is $u=1-e/(e+|y^*|)\in[0,1]$, with $u=0$ for unparseable candidates. Binary correctness would collapse magnitude: for $R\in\{0,1\}$ with failure probability $p_{\rm fail}$, lower CVaR is $\max\{\alpha-p_{\rm fail},0\}/\alpha$. A post-hoc unit-aware rerun changes screening for 3 development and 5 held-out questions; at 200 queries anchor/uniform MSE is $.096/.100$ on Qwen and $.226/.186$ on Phi for ordinary/unit-check review. The anchor remains resolved against uniform, trails occupancy/rollouts on Qwen, and beats occupancy on Phi; Phi's decision gap widens from $6.7\times10^{-4}$ to $5.6\times10^{-3}$. The frozen metric defines the primary results.

\textbf{Reward and exact target.} Let $u(s)$ be candidate severity utility, independent of confidence, and $u(\varnothing)=0$. For sampled next state $S'$,
\begin{equation}
 R(s,S')=\frac{u(S')-u(s)+1}{2}\in[0,1],\qquad
 G_H=\sum_{t=0}^{H-1}R(S_t,S_{t+1})
     =\frac{H+u(S_H)}{2}.
 \label{eq:finqa-reward-return}
\end{equation}
Intermediate utilities telescope, so a temporary improvement that is later reversed does not improve the return. The $h$-step return-to-go support is $\{(u_j-u_i+h)/2\}$; the grid contains these values and the endpoints, at most $H(C+1)C+2$ atoms ($C=8$; at most 197 observed). The categorical recursion is exact and matches exact-law CVaR to machine precision, with $C_{\rm ret}=[H+\cvar_\alpha(u(S_H))]/2$. All reported CVaRs, gaps, regrets, and MSEs use this scale; converting back by $2\widehat C_{\rm ret}-H$ multiplies MSE by four and doubles gaps without changing within-setting ratios or rankings.

\textbf{Freeze and audits.} Development used 20 screened questions and 100 replications to check parsing (142/160 candidates valid), the $.25$ spread screen, positive margins, oracle headroom (Qwen oracle+floor $.006$--$.013$ of uniform MSE at 100--200 queries), and replication noise; the latter informed held-out size without guaranteeing power. Before calibration, the protocol fixed 50 questions, 300 replications, generators, workflows, anchor, floor, and budgets 25--800 queries per kernel. All four held-out calibrations (3,650 kernels each) pass provenance and generation audits. All 50 margins are positive, but total influence is zero for 3 Qwen questions per workflow and 13/14 Phi questions (ordinary/unit-check), so every allocation has zero leading variance there although higher-order error can remain.

Table~\ref{tab:finqa-supp-est} gives the complete frozen-metric results through 800 queries; the post-hoc unit-aware sensitivity is summarized above. At 100--200 queries all eight anchor/uniform contrasts are resolved ($z=8.1$--$30.3$), while plain \tis{} is unresolved in seven. At 400/800 queries the anchor beats both strong baselines in both Phi workflows: MSE is $.81$--$.86$ of occupancy ($z=-8.4$ to $-4.8$) and $.68$--$.88$ of rollouts ($z=-8.4$ to $-3.1$), using $z$ for anchor minus baseline. On Qwen, occupancy and rollouts remain better. All contrasts use shared conditional-query streams, a separate rollout stream, and 300 replications.

\begin{table}[t]
\centering\scriptsize
\caption{FinQA frozen-metric estimation: uniform panel MSE, anchor-to-baseline MSE ratios, and paired uniform-minus-anchor $z$ over 300 replications.}
\begin{tabular}{llrrrrrr}
\toprule
\tableheadrow
Gen. & Workflow & Budget & Unif.\ MSE & \cellcolor{tableprimary}\textbf{Anch./unif.} & Anch./occ. & Anch./roll. & $z$\\
\midrule
qwen & ordinary & 25 & $6.2\,e{-5}$ & \cellcolor{tableprimary}0.152 & 1.84 & 3.52 & +5.3\\
qwen & ordinary & 50 & $2.8\,e{-5}$ & \cellcolor{tableprimary}0.127 & 2.10 & 2.81 & +6.5\\
qwen & ordinary & 100 & $1.6\,e{-5}$ & \cellcolor{tableprimary}0.089 & 1.47 & 2.07 & +8.1\\
qwen & ordinary & 200 & $6.7\,e{-6}$ & \cellcolor{tableprimary}0.097 & 1.77 & 2.30 & +9.9\\
qwen & ordinary & 400 & $3.3\,e{-6}$ & \cellcolor{tableprimary}0.075 & 1.41 & 1.62 & +10.0\\
qwen & ordinary & 800 & $2.0\,e{-6}$ & \cellcolor{tableprimary}0.077 & 1.66 & 2.11 & +10.2\\
qwen & unit check & 25 & $1.1\,e{-4}$ & \cellcolor{tableprimary}0.165 & 1.97 & 3.59 & +7.7\\
qwen & unit check & 50 & $5.7\,e{-5}$ & \cellcolor{tableprimary}0.117 & 1.90 & 3.10 & +10.2\\
qwen & unit check & 100 & $2.8\,e{-5}$ & \cellcolor{tableprimary}0.094 & 1.60 & 2.28 & +11.5\\
qwen & unit check & 200 & $1.2\,e{-5}$ & \cellcolor{tableprimary}0.100 & 1.61 & 2.26 & +14.3\\
qwen & unit check & 400 & $6.9\,e{-6}$ & \cellcolor{tableprimary}0.073 & 1.50 & 1.74 & +15.6\\
qwen & unit check & 800 & $3.6\,e{-6}$ & \cellcolor{tableprimary}0.073 & 1.56 & 1.89 & +15.5\\
phi & ordinary & 25 & $3.9\,e{-4}$ & \cellcolor{tableprimary}0.540 & 1.10 & 1.96 & +16.7\\
phi & ordinary & 50 & $2.0\,e{-4}$ & \cellcolor{tableprimary}0.431 & 1.03 & 1.49 & +21.4\\
phi & ordinary & 100 & $1.2\,e{-4}$ & \cellcolor{tableprimary}0.282 & 0.93 & 1.10 & +27.2\\
phi & ordinary & 200 & $5.9\,e{-5}$ & \cellcolor{tableprimary}0.216 & \best{0.88} & \best{0.84} & +30.3\\
phi & ordinary & 400 & $3.1\,e{-5}$ & \cellcolor{tableprimary}0.198 & \best{0.86} & \best{0.77} & +31.5\\
phi & ordinary & 800 & $1.5\,e{-5}$ & \cellcolor{tableprimary}0.184 & \best{0.82}& \best{0.68} & +27.1\\
phi & unit check & 25 & $4.5\,e{-4}$ & \cellcolor{tableprimary}0.546 & 1.02 & 2.23 & +15.7\\
phi & unit check & 50 & $2.5\,e{-4}$ & \cellcolor{tableprimary}0.390 & 0.97 & 1.65 & +21.0\\
phi & unit check & 100 & $1.3\,e{-4}$ & \cellcolor{tableprimary}0.327 & 0.97 & 1.50 & +23.3\\
phi & unit check & 200 & $7.3\,e{-5}$ & \cellcolor{tableprimary}0.229 & 0.88 & 1.07 & +26.9\\
phi & unit check & 400 & $3.5\,e{-5}$ & \cellcolor{tableprimary}0.197 & \best{0.81} & \best{0.88} & +28.9\\
phi & unit check & 800 & $1.9\,e{-5}$ & \cellcolor{tableprimary}0.162 & \best{0.83} & \best{0.84} & +26.3\\
\bottomrule
\end{tabular}
\label{tab:finqa-supp-est}
\end{table}

\begin{table}[t]
\centering\scriptsize
\caption{FinQA frozen-metric workflow selection at 25--200 queries/kernel: exact panel CVaRs, their gap, and wrong-selection percentages under shared workflow streams. Zero means no errors in 300 replications.}
\begin{tabular}{lrrrrrrrrrr}
\toprule
\tableheadrow
Gen. & Budget & $C_{\mathrm{ord}}$ & $C_{\mathrm{unit}}$ & Gap & Unif. & \tis{} & Occ. & Roll. & \cellcolor{tableprimary}\textbf{Anch.} & \cellcolor{tablereference}\shortstack{Oracle\\+floor}\\
\midrule
qwen & 25 & 1.87304 & 1.87262 & +0.00041 & \best{0.0} & 2.0 & \best{0.0} & \best{0.0} & \cellcolor{tableprimary}0.7 & \cellcolor{tablereference}0.0\\
qwen & 50 & 1.87304 & 1.87262 & +0.00041 & \best{0.0} & 0.3 & \best{0.0} & \best{0.0} & \cellcolor{tableprimary}\best{0.0}& \cellcolor{tablereference}0.0\\
qwen & 100 & 1.87304 & 1.87262 & +0.00041 & \best{0.0} & 0.3 & \best{0.0} & \best{0.0} & \cellcolor{tableprimary}\best{0.0}& \cellcolor{tablereference}0.0\\
qwen & 200 & 1.87304 & 1.87262 & +0.00041 & \best{0.0} & 1.0 & \best{0.0} & \best{0.0} & \cellcolor{tableprimary}\best{0.0}& \cellcolor{tablereference}0.0\\
phi & 25 & 1.76764 & 1.76697 & +0.00067 & 45.3 & 50.0 & \best{36.3} & 38.3 & \cellcolor{tableprimary}42.3 & \cellcolor{tablereference}36.3\\
phi & 50 & 1.76764 & 1.76697 & +0.00067 & 45.3 & 47.3 & 34.0 & \best{27.0} & \cellcolor{tableprimary}33.3 & \cellcolor{tablereference}24.3\\
phi & 100 & 1.76764 & 1.76697 & +0.00067 & 38.7 & 39.7 & 28.3 & \best{21.7} & \cellcolor{tableprimary}24.7 & \cellcolor{tablereference}16.3\\
phi & 200 & 1.76764 & 1.76697 & +0.00067 & 28.3 & 33.0 & \best{14.3} & 16.0 & \cellcolor{tableprimary}17.0 & \cellcolor{tablereference}7.3\\
\bottomrule
\end{tabular}\label{tab:finqa-supp-dec}
\end{table}

\textbf{Workflow decision and coupling sensitivity.} Table~\ref{tab:finqa-supp-dec} reports 25--200-query selection rates. Phi's ordinary/unit-check return-CVaR gap is only $6.7\times10^{-4}$; at 100/200 queries, discordant-pair tests resolve the anchor over uniform ($z=4.2,3.5$) but not over occupancy or rollouts ($|z|\le1.7$), and the budget-200 regret reduction is $7.6\times10^{-5}$. Workflow templates share common random numbers, which reduce comparison noise without changing either workflow's marginal MSE. A retrospective re-pairing check shows that Qwen's exceptionally low paired selection errors partly reflect this cancellation; because re-pairing is not fresh simulation, the frozen-protocol rates remain primary.

A separately committed skeptical-template follow-up on Phi had a much larger workflow gap and made the strong methods essentially error-free at small budgets, so it did not distinguish the anchor from occupancy or rollouts. This reinforces why the near-tie frozen study, rather than the easier follow-up, is the informative workflow-selection test.

\begingroup\revcolor{purple}
\subsection{Inventory disruption family}
\label{sec:supp-inventory-family}

This prespecified breadth family supports \textbf{Q2} by varying three axes of the seasonal inventory benchmark: base disruption probability $\{.01,.04,.12\}$ (plus $.02$ at capacity), disruption loss of $1,2,$ or $3$ remaining units, and two fixed policies (the original seasonal base-stock targets or those targets plus one unit, capped at capacity). Rewards, $\{0,.5,1\}$ quantization, demand laws, $H=8$, and the grid are unchanged, producing 18 cases with 41 or 48 retained blocks. Before simulation, exact enumeration confirmed distinct kernel laws in all 18 cases, positive categorical margins (including two near $.001$), CVaR $.675$--$1.260$, and $V_{\rm unif}/V^*$ from $1.49$ to $2.28$. Budgets are 150, 600, and 1,200 queries per block with 300 replications; methods, pilot, floor, and rounding match Section~\ref{sec:supp-inventory}. Absolute RMSE tolerances $.02,.01,.005$ were declared with the family.

\begin{table}[htbp]
\centering\revcolor{purple}\small
\caption{Inventory family: resolved lower-MSE cases (of 18) at 1,200 queries/block, using paired $|z|\ge2$; none resolve in the comparator's favor.}
\label{tab:inventory-family}
\begin{tabular}{lccccc}
\toprule
\tableheadrow
Method & occupancy & learned mean & occ.$+$uniform & occ.$+$mean & rollouts\\
\midrule
\tis{} & 18 & 17 & 18 & 18 & 18\\
anchored \tis{} & 18 & 17 & 18 & 17 & 18\\
\bottomrule
\end{tabular}
\end{table}

Across the 18 cases, median MSE/uniform at 150/600/1,200 queries per block is $.74/.62/.61$ for \tis{}, $.82/.69/.69$ for the anchor, $1.05/.92/.93$ for learned occupancy, $.98/.86/.84$ for learned mean, $1.11/.99/.95$ for the occupancy--uniform blend, $1.03/.87/.86$ for the occupancy--mean blend, $2.72/2.45/2.56$ for rollouts, and $.56/.51/.53$ for oracle+floor. At RMSE $.02$, \tis{} first reaches the target by 150 queries in 2 cases and by 600 in 15; uniform, occupancy, and the uniform blend require 1,200 in 6. At $.01$, \tis{} reaches 11 cases, uniform/occupancy 8, and rollouts none; at $.005$, only \tis{} reaches the target (2 cases). Plain \tis{} leads the anchor throughout, consistent with anchoring acting as insurance rather than a gain when pilots are reliable.

\subsection{Blending controls}
\label{sec:supp-mixture-controls}

To isolate \textbf{Q5}, that is whether the tail score itself drives the anchor, we compare two equally regularized controls. With floored component shares $w=(1-\lambda)\hat p+\lambda u$, they use $\tfrac12w_{\rm occ}+\tfrac12u$ and $\tfrac12w_{\rm occ}+\tfrac12w_{\rm mean}$, with no second floor; hence all three blends differ only in the component mixed with occupancy. The uniform blend equals occupancy with floor $(1+\lambda)/2$ and coincides with the anchor when the tail pilot falls back to uniform. All blends pay the same pilot and share coupled query streams. On held-out FinQA (2 generators $\times$ 2 workflows $\times$ budgets 100--800; 300 replications), the anchor is resolved better than the uniform blend in all 16 cells (MSE ratios $.62$--$.97$, decreasing with budget), while the mean blend has lower point MSE in all 16 and is resolved better in 15 (anchor/mean ratios $1.17$--$1.54$ on Qwen, $1.03$--$1.10$ on Phi). Learned mean alone can reach $1.4\times$ uniform, but occupancy--mean blending is competitive. Under MMLU-Pro confident-error utility (6 generators, $H=6$, $\alpha=.1$, 100--800 queries, 300 replications), the anchor beats the uniform blend and learned occupancy in all 24 cells and the mean blend in 23 (ratios $.76$--$.97$; Phi-4-mini at 100 is unresolved). Rollouts are better for five generators at 100 queries, but the anchor is better for all six at 800 (ratios $.55$--$.89$); plain \tis{} has $1.4$--$26\times$ the mean blend's MSE. Under Brier utility (100--400 queries), the anchor beats the uniform blend and occupancy in all 18 cells, the mean blend in 11, is worse in none, and is unresolved in 7 (all Phi-4-mini and Qwen3-32B budgets, plus Qwen3-4B at 100). On the high-stakes panel (Qwen3-4B, Phi-4-mini, Qwen3-32B), it beats the uniform blend and occupancy in all 9 cells and the mean blend in 5; the mean blend wins 2 (Phi-4-mini at 200/400). Brier comparisons with rollouts again cross over with budget: rollouts lead at small budgets, while the anchor leads four of six generators at 400.

\subsection{Rare failures and the tail--mean coincidence}
\label{sec:supp-rare-failure}

\begin{proof}[Proof of Proposition~\ref{prop:rare-failure}]
Let $F^{-1}$ be the quantile function of $X=G_H(s_0)$. By assumption, $X\le x^\star$ almost surely and
$\Pr(X<x^\star)=\pi<\alpha$. Hence $F^{-1}(u)=x^\star$ for every $u\in(\pi,1)$, and boundedness gives
\begin{align*}
 \alpha\,\cvar_\alpha(X)
 &=\int_0^\alpha F^{-1}(u)\,du\\
 &=\int_0^1F^{-1}(u)\,du-\int_\alpha^1F^{-1}(u)\,du\\
 &=\E[X]-(1-\alpha)x^\star.
\end{align*}
This proves the stated identity.

By exactness, the grid contains every attainable partial return generated by the fixed declared outcome spaces. Hence changing only the kernel laws changes probabilities but not the structural maximum $x^\star$, and the categorical root law continues to equal the true return law. In a finite horizon, the root return law depends continuously in total variation on the finite collection of group laws; therefore $\Pr(X<x^\star)$ remains below $\alpha$ throughout a sufficiently small neighborhood because the baseline gap $\alpha-\pi$ is strictly positive. On this neighborhood,
\[
 C_{\alpha,K}
 =\cvar_\alpha(X)
 =\frac1\alpha\E[X]-\frac{1-\alpha}{\alpha}x^\star.
\]

Let $J(P):=\E_P[X]$. Write $v_h(s)=\E[G_h(s)]$ and let $\mu_h(s)$ be the probability, under the fixed policy and population laws, of visiting state $s$ with $h$ steps remaining. Differentiating the ordinary mean Bellman recursion shows that, for a shared stationary group $g=(s,a)$, a centered groupwise influence function for $J$ is
\[
 \psi_{s,a}(W)
 =\sum_{h=1}^H\mu_h(s)\pi_h(a\mid s)
 \Big\{R+v_{h-1}(S')-\E_{P_{s,a}}[R+v_{h-1}(S')]\Big\},
\]
with the analogous single-row expression in the untied model. This is exactly the ordinary mean-return influence used by the learned-mean design in Appendix~\ref{sec:supp-experiments}. Consequently, for every DQM direction $s=(s_g)$,
$\dot J_s=\sum_g\E_{P_g}[\psi_gs_g]$. Differentiating the affine identity above and using Equation~\ref{eq:target-derivative} gives
\[
 \sum_g\E_{P_g}[\phi_gs_g]
 =\dot C_s
 =\frac1\alpha\dot J_s
 =\frac1\alpha\sum_g\E_{P_g}[\psi_gs_g].
\]
Both $\phi_g$ and $\psi_g$ are centered. Because the product tangent space contains an arbitrary $L_0^2(P_g)$ direction in each group, equality for every score direction implies
$\phi_g=\psi_g/\alpha$ in $L_0^2(P_g)$ for each group. Consequently the tail and mean influence standard deviations satisfy
$\sigma_g=\operatorname{sd}(\psi_g)/\alpha$. If these scales are not all zero, normalizing them gives identical Neyman shares; if they are all zero, both objectives have zero first-order variance for every allocation. This proves the allocation claim.
\end{proof}

This diagnostic supports \textbf{Q5} by testing when tail-specific allocation should differ from mean allocation. All experiments use closed grids exact on their declared supports. Normalize each nonzero tail- or mean-influence scale vector to sum one, representing an all-zero vector by uniform; then $\tfrac12\sum_g|p_g-m_g|=0$ whenever Proposition~\ref{prop:rare-failure} applies. The median distance is $.000$ on held-out FinQA for both generators and workflows, with exact zero on $45$--$51\%$ of questions. On 25-question MMLU-Pro samples at $H=6$, $\alpha=.1$, Brier-utility medians are $.24$ (Qwen3-4B), $.09$ (Phi-4-mini), $.29$ (Granite-4.2-8B), $.30$ (Mistral-24B), $.16$ (Qwen3-32B), and $.27$ (GLM-4-32B); the high-stakes panel gives $.33,.08,.18$ for Qwen3-4B, Phi-4-mini, Qwen3-32B, and confident-error utility gives $.32,.18,.32$ for Qwen3-4B, Phi-4-mini, GLM-4-32B. Granite, Mistral, Qwen3-32B, and the high-stakes Phi/Qwen3-32B distances were computed after the blending runs.

\purplerev{\textbf{Prospective divergence test.} We therefore fixed a rule before simulating five new settings: median distance $\ge.24$ predicts that the anchor has resolved lower MSE than the mean blend in at least two of three budgets (100/200/400 queries) and higher MSE in none; distance $\le.18$ predicts at most one resolved win; intermediate values make no prediction. The five settings were high-stakes confident-error Qwen3-4B ($.318$, win), Phi-4-mini ($.169$, no advantage), Qwen3-32B ($.216$, none), and cautious-policy Qwen3-4B with Brier ($.195$, none) or confident-error ($.269$, win). All use closed grids, $H=6$, $\alpha=.1$, and 300 replications. Against the mean blend, MSE ratios (paired $z$) at 100/200/400 are $.88/.84/.83$ ($-6.2/-8.5/-9.7$) for high-stakes Qwen3-4B, $1.03/1.02/.99$ ($+2.6/+1.4/-0.6$) for high-stakes Phi, and $.96/.95/.92$ ($-3.0/-4.0/-5.3$) for cautious confident-error Qwen, so all three predictions hold. The two unpredicted settings give $.97/.94/.91$ ($-1.3/-4.3/-6.0$) for high-stakes Qwen3-32B and $.96/.92/.88$ ($-3.3/-5.5/-8.7$) for cautious Brier Qwen. In all 15 cells the anchor also beats learned occupancy and the uniform blend; at 400 queries it beats complete rollouts in four settings (ratios $.74$--$.85$) and ties the fifth ($.99$).}

The confident-error utility scores a correct response with confidence $c$ as $(1+c)/2$ and a wrong response as $(1-c)^2/2$, making confident errors nearly worthless; it preserves the Brier per-response ordering but has a much heavier lower tail.

\subsection{FinQA with calculator faults}
\label{sec:supp-toolfault}

This robustness check extends \textbf{Q5} to exogenous tool errors while preserving the FinQA terminal-severity objective. Each review prompt includes an automated calculator report for the current candidate; independently after each model call, the report is correct with probability $1-p$ and multiplied by 100 with probability $p$. The root, 72 correct-report states, and 72 faulted-report states define 145 queryable prompt laws per question; an outcome is the model response together with the fault indicator, so one query still costs one model call. The laws were calibrated exactly for both generators on the 50 held-out questions, and the same design was declared for ordinary and unit-check review.

The perturbation is informative because influence and visitation react differently. For Qwen, faulted states carry median tail/mean/occupancy shares $5.8/6.2/0.7\%$ at $p=.01$ and $38/42/6.7\%$ at $p=.10$; for Phi the corresponding shares are $1.0/0.9/0.7\%$ and $9.8/9.0/6.7\%$; 12/50 Phi questions have no first-order tail signal. Thus tail and mean influence remain close even when occupancy can be very different. Across 100--400 queries per kernel, anchor/uniform MSE is $.034$--$.050$ for Qwen and $.081$--$.144$ for Phi in ordinary review, with similar $.041$--$.050$ and $.069$--$.144$ ranges under unit check. Yet rollouts are usually strongest, the mean blend beats the anchor throughout Qwen, and the anchor trails occupancy on Qwen while beating it on Phi. Plain \tis{} can have $7$--$29\times$ the mean blend's MSE. Qwen's oracle remains only $.007$--$.010$ of uniform MSE, locating the gap in pilot learning rather than the population influence signal; several Phi questions instead have near-zero margins. Median tail--mean distance is $.000$ for Qwen and $.03$--$.06$ for Phi, well below the no-advantage regime identified in Appendix~\ref{sec:supp-rare-failure}. The tool-fault study therefore supports the diagnostic's negative prediction: a tail score can be highly informative in population yet unnecessary relative to a smoother mean score when their normalized allocations nearly coincide.

\endgroup

\begingroup\revcolor{purple}
\subsection{Longer review loops}
\label{sec:supp-longh}
This extension supports the main-text longer-loop claim and re-tests \textbf{Q4--Q5}: because prompts omit stage index, the frozen kernels define longer loops without new model calls. Before simulation we declared MMLU-Pro confident-error runs at $H=8,10$ for Qwen3-4B, Phi-4-mini, and GLM-4-32B, plus held-out FinQA runs at $H=6$ for both generators and workflows (100--400 queries per kernel, 300 replications), using the $H=6$ MMLU and $H=3$ FinQA blending runs as references and recording the divergence values and three predictions.

\emph{Divergence rule.} Appendix~\ref{sec:supp-rare-failure}'s rule holds in 6/8 settings it decides: GLM (distance $.36/.39$) beats the mean blend at every budget at both horizons; all four FinQA settings (distance $.000$--$.098$) show no advantage; Qwen3-4B (distance $.26$ at both horizons) has anchor/mean ratios $.92$--$.98$ but resolves only at $H=8$, 400 queries, so its two win predictions fail. Phi-4-mini has distance $.19$ (no prediction) and wins 5/6 cells.

\emph{Kernel reuse.} Table~\ref{tab:longh-rollout} shows that longer shared-kernel loops increasingly favor the anchor for MMLU and FinQA Phi but not near-deterministic FinQA Qwen; the prediction holds in 5/7 settings. For Phi at $H=6$, anchor MSE is $.29$--$.64$ of rollout MSE at every tested budget in both workflows, i.e. $1.6$--$3.4\times$ lower and $2.4$--$3.4\times$ lower at 400 queries, which are the reductions quoted in the abstract.
\begin{table}[htbp]
\centering\revcolor{purple}
\small
\setlength{\tabcolsep}{3.5pt}
\caption{\textbf{Longer review loops favor the anchor over rollouts.} Anchored-\tis{}/rollout MSE ratio (paired $z$; negative favors the anchor), 300 replications. The Phi $H=6$, 400-query ratios $.29/.41$ give the $3.4\times/2.4\times$ reductions in the abstract.}
\label{tab:longh-rollout}
\begin{tabular}{lcccccc}
\toprule
\tableheadrow
 & \multicolumn{3}{c}{FinQA $H=3$} & \multicolumn{3}{c}{FinQA $H=6$}\\
\tableheadrow
Generator, workflow & 100 & 200 & 400 & 100 & 200 & 400\\
\midrule
Phi-4-mini, ordinary & $1.10$ ($+1.9$) & $.84$ ($-4.4$) & $.77$ ($-5.2$) & $.36$ ($-22.1$) & $.30$ ($-22.5$) & $.29$ ($-22.4$)\\
Phi-4-mini, unit check & $1.50$ ($+10.0$) & $1.07$ ($+1.8$) & $.88$ ($-3.1$) & $.64$ ($-10.4$) & $.47$ ($-17.3$) & $.41$ ($-19.7$)\\
Qwen3-4B, ordinary & $2.07$ ($+6.1$) & $2.30$ ($+5.9$) & $1.62$ ($+3.4$) & $2.64$ ($+6.5$) & $2.03$ ($+5.1$) & $2.02$ ($+5.3$)\\
Qwen3-4B, unit check & $2.28$ ($+9.5$) & $2.26$ ($+8.6$) & $1.74$ ($+5.4$) & $2.73$ ($+8.8$) & $2.09$ ($+7.2$) & $2.13$ ($+7.3$)\\
\midrule
\tableheadrow
MMLU-Pro, 400 queries & \multicolumn{2}{c}{$H=6$} & \multicolumn{2}{c}{$H=8$} & \multicolumn{2}{c}{$H=10$}\\
\midrule
Qwen3-4B & \multicolumn{2}{c}{$.80$ ($-6.6$)} & \multicolumn{2}{c}{$.65$ ($-7.5$)} & \multicolumn{2}{c}{$.67$ ($-5.5$)}\\
Phi-4-mini & \multicolumn{2}{c}{$.94$ ($-1.7$)} & \multicolumn{2}{c}{$.86$ ($-3.5$)} & \multicolumn{2}{c}{$.84$ ($-4.0$)}\\
GLM-4-32B & \multicolumn{2}{c}{$.90$ ($-2.8$)} & \multicolumn{2}{c}{$.95$ ($-1.1$)} & \multicolumn{2}{c}{$.89$ ($-2.8$)}\\
\bottomrule
\end{tabular}
\end{table}

\emph{Pilot risk.} From $H=6$ to $10$, the plain-\tis{}/anchor MSE ratio at 400 queries changes $7.4\to7.6$ for Phi-4-mini, $30.1\to32.0$ for GLM, and $19.0\to14.4$ for Qwen3-4B, so this prediction holds in 2/3 settings. Across all 18 MMLU-Pro cells, the anchor attains $.07$--$.19$ of uniform MSE and is resolved better than learned occupancy and the uniform blend in every cell.

\subsection{Selecting the allocation from the pilot}
\label{sec:supp-pilot-rule}
This section operationalizes \textbf{Q5}: the population tail--mean distance is unavailable, but every learned design already draws a uniform pilot. Before evaluation we declared: for each setting and budget, compute the replication-0 pilot distance between normalized tail and mean influence shares for each question, take the median, use the anchor if it is at least $.21$ (the midpoint of the population thresholds), and otherwise use the occupancy--mean blend. A choice is wrong only if the selected design has resolved higher MSE ($|z|\ge2$) than the alternative.

Across 46 blending settings (148 setting--budget cells), the rule is wrong in 7; selected-design MSE relative to the better design has median $1.00$. Pilot and population medians have Spearman correlation $.86$, and replication-0 decisions agree with replications 1--4 in $91\%$ of cases. Three errors are Phi-4-mini confident-error cells with pilot medians $.197$--$.206$ just below threshold, costing $3$--$6\%$ MSE. Four are FinQA Qwen3-4B $H=6$ cells where near-determinism makes a small pilot overstate distance (pilot $.27$--$.57$, population $.000$), giving the anchor $1.4$--$2.3\times$ mean-blend MSE. Thus the statistic is reliable except when failures are too rare for the pilot to observe, precisely the regime in which Proposition~\ref{prop:rare-failure} removes the need for tail-specific allocation.
\endgroup
\subsection{Mechanism: what the pilot sees and what anchoring changes}
\label{sec:pilot-mechanism}
\begin{figure}[htbp]
\centering
\includegraphics[width=\textwidth]{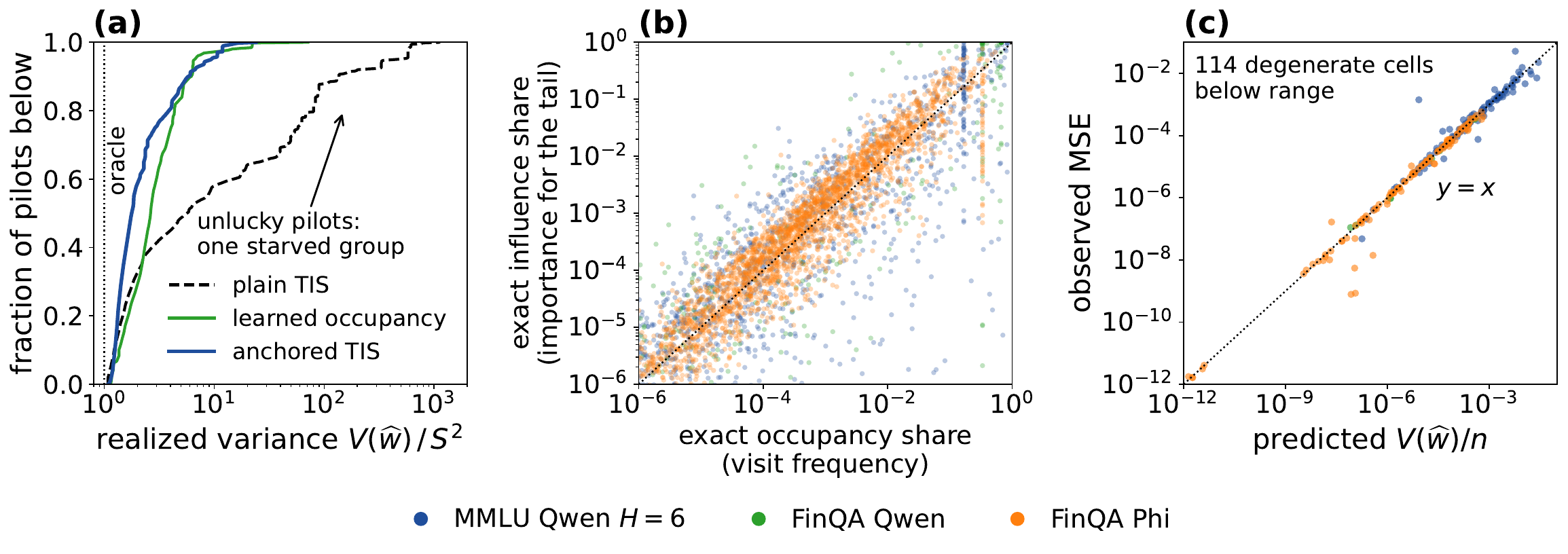}
\caption{\textbf{Anchoring reduces the variance penalty from poor pilots.} (a) Realized/oracle leading variance; (b) occupancy versus tail-influence shares; (c) observed MSE versus the no-fit prediction $V(w)/n$. Rollouts are excluded.}
\label{fig:mechanism}
\end{figure}

Figure~\ref{fig:mechanism} explains \textbf{Q3}'s failure mode at the query-share level. Equation~\ref{eq:design-regret-identity} penalizes underallocation by dividing squared allocation error by assigned weight. We replay the recorded pilots, floors, and rounding and use population scales to diagnose $V(\widehat w)=\sum_g\sigma_g^2/\widehat w_g$ for MMLU Qwen ($H=6$, 400 queries), FinQA Qwen ordinary review (200), and FinQA Phi ordinary review (200). All questions enter; normalization by $S^2=(\sum_g\sigma_g)^2$ excludes their 1, 3, and 13 zero-influence questions. Rollouts are excluded because they estimate under a different sampling functional.

Median correlations between population influence and occupancy shares are $.74,.85,.62$ in the three settings, and the bottom-occupancy half of kernels carries essentially no tail influence (Figure~\ref{fig:mechanism}(b)), unlike Proposition~\ref{prop:controlled-separation}'s equal-visitation construction. For MMLU pilots, median/90th-percentile/maximum $V(\widehat w)/S^2$ is $5.6/136/1{,}137$ for plain \tis{}, $1.8/5.8/24$ for the anchor, and $2.6/6.2/71$ for occupancy; a single starved group can dominate variance (90th-percentile top-contribution share $1.00$). The anchor multiplies the most-starved group's weight by median factors $7.5,20,4.9$ across the three settings (MMLU 90th percentile $99$). On near-deterministic FinQA Qwen, pilots often see point masses, estimated scales vanish, and \tis{} approaches uniform: median $V/S^2$ is $72.4$ with 73 groups versus $3.3$ for the anchor. These are leading-variance diagnostics, not finite-sample MSE guarantees.

Using actual main-sample counts, pilot-averaged $V(\widehat w)/n$ predicts MSE without fitted constants in the examined nondegenerate cells: median $\log_{10}$(observed/predicted) is $-.001$ over 98 MMLU question--method pairs and $-.006$ over 135 Phi pairs, with 10th--90th percentiles within $\pm.19$ (Figure~\ref{fig:mechanism}(c)). For near-deterministic Qwen questions, 300 replications can miss rare deviations carrying much of the variance, so observed MSE can be orders of magnitude lower; the plot flags 114 below-range FinQA pairs (96 Qwen, 18 Phi). Agreement elsewhere is a retrospective check of the variance formula, not a finite-budget guarantee.


\section{Additional Related Work}\label{sec:extended-related}
Table~\ref{tab:closest-comparison} summarizes the closest foundations; the connections below extend Section~\ref{sec:related-work}.

\begin{table}[htbp]
\centering
\small
\caption{Established foundations and the additional results for the conditional-query CVaR problem. The allocation rule is Neyman allocation; the work here identifies and learns the Bellman influence scales it needs.}
\label{tab:closest-comparison}
\begin{tabular}{@{}p{0.34\linewidth}p{0.62\linewidth}@{}}
\toprule
\tableheadrow
\textbf{Established starting point} & \textbf{Additional result here}\\
\midrule
\redrev{Return-law and functional inference under specified sampling \citep{zhang2025inference}; quantile-based efficiency \citep{cheng2026quantile}.}
& An explicit categorical-CVaR influence for each conditional kernel, including its joint reuse across stages. Theorems~\ref{thm:allocation-clt} and~\ref{thm:oracle} give the variance as a function of query shares and its fixed-design efficiency interpretation.\\[4pt]
Neyman allocation and learning unknown stratum variances \citep{neyman1934two,etore2010adaptive,carpentier2015adaptive}.
& The influence function depends on the unknown model and quantile. Theorem~\ref{thm:adaptive} controls learning those quantities, quantile
errors, and Bellman remainders to obtain oracle variance and normalized MSE, including pilot cost and individual zero-influence groups.\\[4pt]
Trajectory collection for mean policy evaluation
\citep{mukherjee2022revar,mukherjee2024saver}.
& Independent conditional queries for a fixed policy's tail functional. Proposition~\ref{prop:controlled-separation} shows why visitation and mean-optimal allocation can miss the relevant signal. Comparisons with
complete rollouts are empirical; the product-model efficiency theorem does not cover their different sampling experiment.\\
\bottomrule
\end{tabular}
\end{table}

The distinction from adjacent work is mainly the design variable. Distributional and quantile-based RL develop inference or efficiency under specified data laws \citep{zhang2025inference,cheng2026quantile,peng2024statistical,peng2026onlineinference}, while generative-access and offline analyses study estimation error under fixed access models \citep{rowland2024nearminimax,peng2025linearctd,chandak2021universal,wu2023distributional,hong2025bellman}. Stratified and adaptive experimental design learn Neyman allocations when the within-stratum target is already defined \citep{carpentier2015adaptive,dai2023neyman}; our score itself depends on an unknown Bellman continuation model and CVaR cutoff, which is why Theorem~\ref{thm:adaptive} must control learning the influence function as well as its allocation. Risk-sensitive control and logging-policy design change the policy or trajectory distribution \citep{bauerle2011avar,zhu2024uncertainty,douglas2026logging}; here the policy and conditional laws stay fixed and only the independent conditional-query counts change. Active testing allocates effort across benchmark items \citep{nguyen2018active,kossen2021active,polo2024tinybenchmarks,li2025activeeval}; our experiments allocate within an item's stochastic workflow, and combining the two levels is a natural extension.

\end{document}